\documentclass{article}

\usepackage[preprint]{neurips_2026}
\usepackage[utf8]{inputenc}
\usepackage[T1]{fontenc}
\usepackage[dvipsnames]{xcolor}
\usepackage{hyperref}
\definecolor{LinkNavy}{RGB}{28,64,128}
\definecolor{CiteNavy}{RGB}{28,64,128}
\hypersetup{
  colorlinks=true,
  linkcolor=LinkNavy,
  citecolor=CiteNavy,
  urlcolor=LinkNavy,
  filecolor=LinkNavy,
  pdfborder={0 0 0},
  breaklinks=true,
}
\usepackage{url}
\usepackage{booktabs}
\usepackage{amsfonts}
\usepackage{amsmath}
\usepackage{amssymb}
\usepackage{amsthm}
\usepackage{nicefrac}
\usepackage{microtype}
\usepackage{graphicx}
\usepackage{multirow}
\usepackage{colortbl}
\usepackage{array}
\usepackage{tcolorbox}
\usepackage[ruled,vlined,linesnumbered]{algorithm2e}
\usepackage{caption}
\usepackage{wrapfig}
\usepackage{enumitem}

\SetAlgoInsideSkip{medskip}
\SetAlgoSkip{smallskip}
\DontPrintSemicolon
\SetAlgoNlRelativeSize{-1}
\SetNlSty{tiny}{}{}
\SetKwInput{KwIn}{Input}
\SetKwInput{KwOut}{Output}
\SetCommentSty{itshape\small}
\newcommand{\cmt}[1]{\hfill{\small\itshape$\triangleright$~#1}}
\SetInd{0.4em}{0.6em}

\graphicspath{{Figures/}}

\usepackage{titlesec}
\titlespacing*{\paragraph}{0pt}{0.6ex plus 0.2ex minus 0.1ex}{0.5em}
\theoremstyle{plain}
\newtheorem{proposition}{Proposition}
\theoremstyle{definition}

\title{When and How Long? The Readout-Mediator Angle in Temporal Reasoning}

\author{
  Shreyas Fadnavis \\
  Bioscope AI \\
  \texttt{shreyas.fadnavis@bioscope.ai} \\
  \And
  Praitayini Kanakaraj \\
  Bioscope AI \\
  \texttt{praitayini.kanakaraj@bioscope.ai} \\
  \And
  Felix Wyss \\
  Bioscope AI \\
  \texttt{felix.wyss@bioscope.ai}
}

\begin{document}

\maketitle

\begin{abstract}
A linear probe can decode a representation almost perfectly
and yet be completely irrelevant to how the model uses it.
On calendar-date duration reasoning in language models,
a $\sin$/$\cos$ probe recovers day-of-year from a layer's activations,
yet ablating its direction has no effect on the model's answers---while
ablating a four-dimensional subspace found by Distributed Alignment
Search (DAS) at the same layer collapses performance entirely.
We measure the angle between these two subspaces---the
\emph{readout--mediator angle}---and find it indistinguishable
from the angle between two random subspaces (the Haar-uniform null),
meaning the probe has learned a direction orthogonal to the
model's actual computation.
Reverse-engineering the circuit reveals why:
attention heads route month-grained context through learned QK offsets
at ${\pm}30$ and ${\pm}61$ days, and MLPs then convert
\emph{when} (absolute date) into \emph{how long} (duration)---all
downstream of the causal subspace the probe never touches.
Sparse-autoencoder decomposition confirms the split:
probe-aligned and DAS-aligned features encode semantically
disjoint concepts with negligible causal overlap.
The dissociation replicates across four scales
($1.5$--$9$\,B) and two model families, with preliminary evidence
on two further domains
(spatial displacement, symbolic arithmetic),
suggesting that readout--mediator orthogonality is a general
failure mode of probe-based interpretability.
This directly undermines proposals to deploy probes as
runtime safety monitors: the probe can report high confidence
on a direction the model has silently abandoned.
\end{abstract}

\section{Introduction}
\label{sec:intro}

Ask a language model ``How many days between March~15\textsuperscript{th}
and June~22\textsuperscript{nd}?'' and it answers correctly: $99$~days.
A $\sin/\cos$ Ridge probe at layer~20 decodes both dates from the
residual stream with $R^{2}{=}0.996$
\citep{gurnee2024spacetime}---exactly the kind of result used to argue
that the model \emph{represents} calendar time
\citep{gurnee2024spacetime,marks2024geometry,kantamneni2025linear}.
But ablating the probe's direction drops accuracy by only $0.6$~pp;
the model still counts $99$~days as if nothing happened.
A matched-rank Distributed Alignment Search (DAS) subspace at the
same layer tells the opposite story: ablating it collapses accuracy
entirely (Fig.~\ref{fig:overview}).

The probe reads the right answer from a direction the model does not
compute with.
We quantify this by measuring the principal angle between the two
subspaces---what we call the \emph{readout--mediator angle}.
At $\bar\theta{=}88^{\circ}$, it matches the expectation for two
random subspaces drawn from a Haar-uniform distribution
($\mathbb{E}[\bar\theta]{=}88.3^{\circ}$ at $(d,k){=}(2304,2)$,
Prop.~\ref{prop:null}):
the probe carries no more information about the model's computation
than a random direction of the same rank.
A five-year line of work has questioned whether probe accuracy implies
mechanistic relevance
\citep{hewitt2019control,elazar2021amnesic,ravichander2021probing,mueller2024quest,mueller2025mib,davies2025reliable};
the readout--mediator angle provides the missing instrument---a
number that says \emph{how far} the probe sits from the computation,
paired with a null that says what ``far'' means.

Reverse-engineering the causal subspace reveals why the two are
orthogonal.
Boundary attention heads implement QK offsets at ${\pm}30$ and
${\pm}61$ days---single- and double-month steps that tile any
multi-month duration (Fig.~\ref{fig:overview}).
MLP layers then execute a two-stage transformation: layers 18--19 read
calendar position (\emph{when}); layers 20--25 convert it into
duration (\emph{how long}), with MLP SAEs---functionally equivalent
to transcoders---showing a monotonic DAS-alignment gradient across
this boundary.
Sparse-autoencoder features confirm the split at the vocabulary level:
probe-aligned features fire on concepts like \textit{month of
October}; DAS-aligned features fire on \textit{past 24~hours}.
The two feature sets have zero causal overlap
(Supp.~\ref{supp:neuronpedia}).
Temporal feature analysis \citep{lubana2025priors} explains the
geometry: the DAS mediator aligns with context-predictable structure
($7{\times}$ above the Haar null), while the probe sits in the
random cloud---duration computation lives in the part of the
activation accumulated from context, not the current token
(Fig.~\ref{fig:complete_circuit}C--D).

The dissociation is not specific to one model, one scale, or one
domain.
It replicates across four models ($1.5$--$9$B), two architecture
families, and two further reasoning domains (spatial displacement,
symbolic arithmetic)---each at the Haar null angle.
On \textsc{Pythia~1.4B}, probe $R^2{=}0.956$ at step~$0$: an untrained
network ``represents'' dates by the probe's standard, yet boundary
heads, DAS drops, and circularness all emerge only as training
proceeds---the probe tracks dimensional capacity, not mechanism
learning.

\paragraph{Contributions:}
\begin{itemize}[leftmargin=1.5em,itemsep=1pt,topsep=1pt,parsep=0pt]
\item The readout--mediator angle and Haar-random null, with three propositions linking angle to ablation effect (\S\ref{sec:method}).
\item Maximal probe--DAS dissociation across four scales ($1.5$--$9$B), two families, three domains---sharpening with scale (\S\ref{sec:das},~\ref{sec:cross_task}).
\item A full circuit trace: boundary heads, two-stage MLP transcoder chain, disjoint SAE features, and a TFA-based explanation of the orthogonality (\S\ref{sec:mech}).
\item A six-experiment battery showing the dissociation breaks probe-based safety monitors (\S\ref{sec:cross_task}).
\end{itemize}
\vspace{-2pt}

\begin{figure}[t]
\centering
\includegraphics[width=\linewidth]{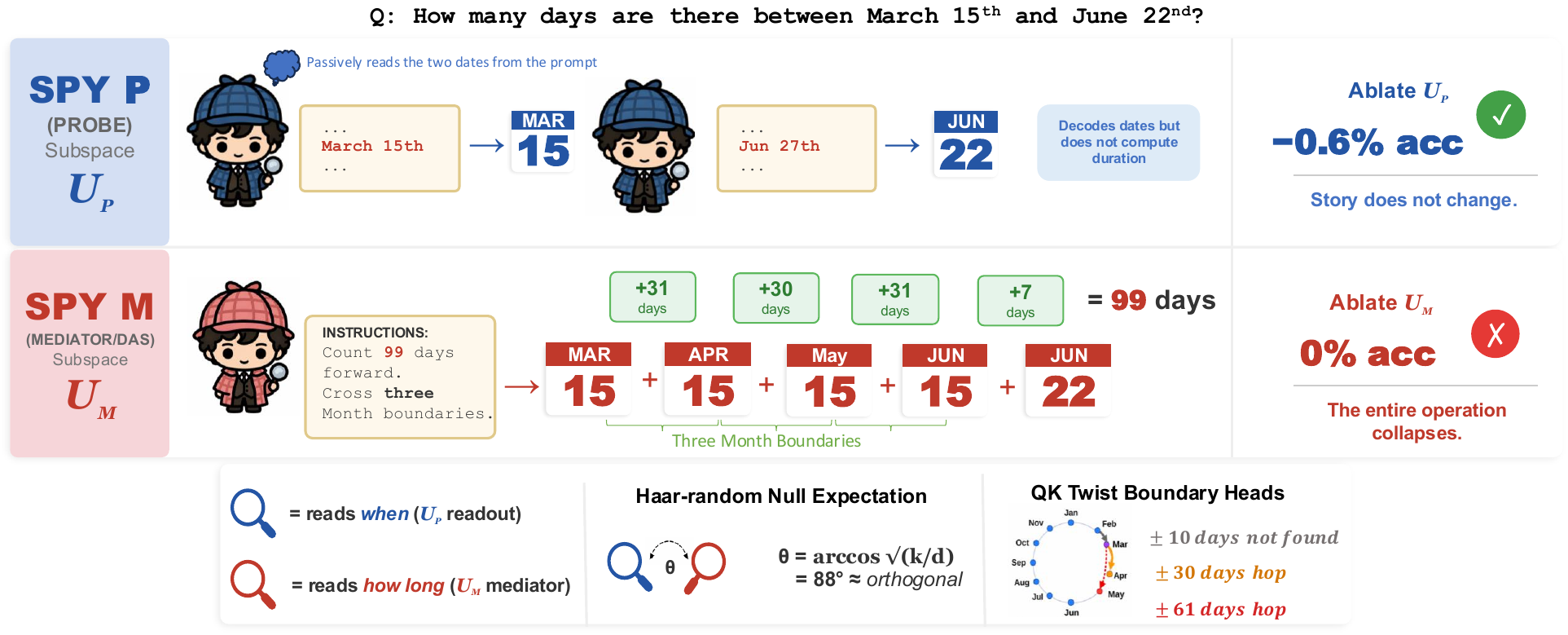}
\caption{\textbf{The readout-mediator dissociation on a duration query (schematic).}
Given ``How many days between March 15\textsuperscript{th} and June
22\textsuperscript{nd}?'', the model deploys two functionally
orthogonal subspaces.
\emph{Spy~P:} the probe subspace $\mathbf{U}_{P}$ passively decodes
both dates; ablating it changes accuracy by $-0.6$~pp.
\emph{Spy~M:} the DAS mediator $\mathbf{U}_{M}$ counts $99$~days via
month-boundary hops; ablating it collapses accuracy to $0\%$.
The two subspaces are nearly orthogonal
($\bar{\theta}{=}88^{\circ}$), matching the Haar-random null---the
probe carries no more information about the computation than noise.}
\label{fig:overview}
\end{figure}

\section{The decodability-use gap}
\label{sec:related}

The gap between what a probe can decode and what a model causally
uses is well documented.
\citet{gurnee2024spacetime} decode spatial and temporal coordinates
from \textsc{Llama-2} with $R^{2}{>}0.9$, yet note explicitly that
this ``does not imply the model actually uses these representations.''
\citet{hernandez2024linearity} make the gap concrete: they define a
faithfulness metric and find that many relations are probe-accurate
but not faithfully decoded at generation time.
Several lines of work have tried to narrow the gap---combining probes
with causal intervention
\citep{tak2025emotion,feng2025propositional}, replacing fixed probes
with decoder LLMs \citep{pan2025latentqa}, and questioning the
interventions themselves \citep{grant2026divergent}---but none
provide a single number that says how far the probe direction sits
from the causal one, or a null that says what ``far'' means.

Diagnostic critiques have sharpened the problem without solving it.
Control tasks \citep{hewitt2019control}, amnesic probing
\citep{elazar2021amnesic}, and recent audits
\citep{ravichander2021probing,mueller2024quest,mueller2025mib,davies2025reliable}
all question the inferential step from accuracy to mechanism, but
stop short of measuring the divergence.
Concept-erasure methods---INLP and LEACE
\citep{belrose2023leace}---attempt to close the gap by removing
the probed direction, yet their erasure subspaces sit within
${\sim}1.5^{\circ}$ of the Haar null
(Supp.~\ref{supp:amnesic-comparison}): erasing what the probe
finds does not erase what the model uses.

Our circuit-level analysis builds on three families of tools:
DAS \citep{geiger2024das,sun2025hyperdas,mueller2025mib} for
identifying causally load-bearing subspaces, activation patching
\citep{nanda2023progress,syed2024attribution} for tracing
information flow, and SAEs
\citep{cunningham2023sparse,lieberum2024gemmascope} decomposed
via NeuronPedia \citep{neuronpedia} for interpreting features
at the vocabulary level.
We use MLP SAEs as functional transcoders
\citep{templeton2024scaling} to track what each MLP layer writes
to the residual stream, and temporal feature analysis
\citep{lubana2025priors} to decompose activations into
context-predictable and novel components---the structural
distinction that ultimately explains the orthogonality.
Most directly, \citet{gurnee2025manifolds} showed that attention
heads implement QK-twist rotations on date manifolds; we take the
next step and ask which of those projections are causally
load-bearing and which are statistical shadows.
Extended related work is in Supp.~\ref{supp:related}.

\section{The readout-mediator angle: measurement and theory}
\label{sec:method}

\paragraph{Two questions, two subspaces.}
The decodability-use gap arises because probes and causal methods
answer different questions about the same layer.
Given a task property $z$ and a layer $L$, a \emph{probe subspace}
$U_P\!\in\!\mathbb{R}^{k\times d}$ is the top-$k$ span of a circular
Ridge regressor trained on cached activations---it asks \emph{where is
the information?}
A \emph{DAS subspace} \citep{geiger2024das} is found by parametrising
$U$ via QR-decomposition of a trainable matrix and minimizing task NLL
while an ablation hook $x{\mapsto}x{-}U^{\top}Ux$
zeros $U$ on every forward pass---it asks \emph{where is the computation
vulnerable?}
Same layer, same rank $k$, different optimization targets: one isolates
what is \emph{readable}, the other what is \emph{load-bearing} (notation summary in Supp.~\ref{supp:notation}).
The \emph{readout--mediator angle}
$\bar\theta(U_P,U_M){=}\frac{1}{k}\sum_{i}\arccos\sigma_i(U_P U_M^{\top})$
measures the distance between these two answers.

The next three propositions establish why this angle should
generically be large (Prop.~\ref{prop:orth}), what the null
expectation is (Prop.~\ref{prop:null}), and how the angle controls
the observable we actually measure---ablation effect
(Prop.~\ref{prop:spec}).

\begin{proposition}[Readout-mediator orthogonality, informal]
\label{prop:orth}
Let $f{:}\mathbb{R}^d{\to}\mathbb{R}$ be a differentiable task output and
$z(x)$ a scalar correlated with $x$.
The probe direction maximizes $I(u^{\top}x;z)$---a
\emph{second-moment} quantity in $x$.
The mediator maximizes
$\mathbb{E}|f(x){-}f(x{-}uu^{\top}x)|$---a \emph{first-moment} quantity
in $\nabla_x f$.
The two coincide only when $\nabla_x f\propto u_P$.
Otherwise they are generically distinct.
\end{proposition}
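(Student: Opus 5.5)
The plan is to make the informal statement precise in a Gaussian/linearized setting and then compute both maximizers explicitly. Assume $x\sim\mathcal N(\mu,\Sigma)$ and $z=w^{\top}x+\varepsilon$ with independent Gaussian noise. Then $I(u^{\top}x;z)=-\tfrac12\log(1-\rho_u^2)$, where $\rho_u^2=(u^{\top}\Sigma w)^2/\big((u^{\top}\Sigma u)\,\mathrm{Var}(z)\big)$. Maximizing this Rayleigh-type quotient over unit $u$ gives $u_P\propto w$, i.e.\ $u_P\propto\Sigma^{-1}\mathrm{Cov}(x,z)$. This is exactly the population Ridge solution as the regularizer goes to $0$, and it depends on $x$ only through $\Sigma$ and $\mathrm{Cov}(x,z)$. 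This is the precise sense of ``second-moment.'' For the circular $\sin/\cos$ probe, the same argument applies to each coordinate, giving a $k=2$ span.

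For the mediator, I would Taylor-expand $f(x)-f(x-uu^{\top}x)=\nabla f(x)^{\top}u\,(u^{\top}x)+O(\|uu^{\top}x\|^2)$. To first order, the objective is $\mathbb E\,|(u^{\top}g(x))(u^{\top}x)|$ with $g=\nabla_x f$, which depends on the joint law of $(g(x),x)$ rather than on $\mathrm{Cov}(x,z)$. To get a clean maximizer, I would use the squared variant $\mathbb E[(u^{\top}g)^2(u^{\top}x)^2]$, or work in the regime where $f$ is approximately linear, $g\equiv g_0$. In that regime the objective becomes $(u^{\top}g_0)^2\,u^{\top}(\Sigma+\mu\mu^{\top})u$. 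Its maximizer is determined by $g_0$ and the second moment of $x$, and it is generically not proportional to $\Sigma^{-1}\mathrm{Cov}(x,z)$. When the activation scale is nearly isotropic, the first factor dominates and $u_M\to g_0/\|g_0\|$, which is the first-moment statement in the proposition.

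Next comes the coincidence claim. If $u_M=u_P$, the first-order stationarity condition for the mediator objective forces $g_0$ to lie in a specific one-parameter family determined by $u_P$. Under isotropy, that family collapses to $g_0\propto u_P$. Conversely, if $g_0\propto u_P$, both objectives are maximized by $u_P$. For genericity, I would note that the set of pairs $(g_0,w)$ with $u_M(g_0)\parallel u_P(w)$ is the zero set of a nontrivial real-analytic map, namely the $2\times2$ minors of $[u_M,u_P]$. It is therefore a measure-zero proper subvariety, so ``generically distinct'' holds for almost every configuration. This also links to Prop.~\ref{prop:null}: if $g_0$ is drawn independently of $w$, the angle is distributed like the Haar case.

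The main obstacle is the mediator side. The true objective involves an absolute value and nonlinear $f$, so its maximizer has no closed form, and a fully general ``first-moment in $\nabla f$'' claim is not literally true. I would therefore state the result rigorously only to first order in the ablation, under (approximate) isotropy or linearity. The general nonlinear case would be handled by a perturbation argument, which shows that $u_M$ stays within $O(\mathrm{curvature}\cdot\|x\|)$ of $g_0/\|g_0\|$ and hence away from $u_P$ whenever $\angle(g_0,w)$ is bounded below. That restriction is consistent with the ``informal'' label.
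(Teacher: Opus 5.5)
Your proposal is correct and shares the paper's skeleton: the Rayleigh-quotient maximizer $\Sigma^{-1}\mathrm{Cov}(x,z)$ for the probe, a first-order Taylor expansion of the rank-one ablation with a squared objective for the mediator, an isotropy reduction, and a comparison of the two maximizers. You specialize differently at the points that matter, though.

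\emph{Probe side.} The paper simply substitutes the squared Pearson correlation for $I(u^{\top}x;z)$. Your Gaussian model justifies that substitution, because the MI is then monotone in $\rho_u^2$.

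\emph{Mediator side.} The paper keeps a varying gradient. ``In the isotropic limit'' it turns $u^{\top}\mathbb{E}[gg^{\top}(u^{\top}x)^2]u$ into $\sigma^2u^{\top}Gu$ with $G=\mathbb{E}[gg^{\top}]$. So $u_M$ is the top eigenvector of $G$, and coincidence means $\Sigma^{-1}\mathbf{c}$ is that eigenvector. That step tacitly assumes $gg^{\top}$ decouples from $(u^{\top}x)^2$, which isotropy alone does not give.

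Your constant-gradient regime buys three things:
\begin{itemize}
\item The reduction becomes exact; for affine $f$ there is no Taylor remainder at all.
\item You get $u_M=g_0/\|g_0\|$, so both the ``first moment of $\nabla_x f$'' description and the coincidence condition $\nabla_x f\propto u_P$ hold literally. The paper's eigenvector condition is a weaker spectral alignment than the stated proportionality.
\item Your stationarity argument replaces the paper's heuristic ``no structural reason to align.'' The admissible family is $\mathrm{span}(u_P,Mu_P)$ with $M=\Sigma+\mu\mu^{\top}$, which collapses to $u_P$ under isotropy. Together with your measure-zero argument, this gives a real genericity proof.
\end{itemize}

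What the paper's version buys is scope. It names the object, $G$, that governs a genuinely nonlinear $f$, and you reach that case only through your perturbation sketch.

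One small fix to a side remark: independence of $g_0$ and $w$ does not make their angle Haar-distributed. You need one of them to have a rotation-invariant law.
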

\emph{Proof sketch.}
The probe solves a Rayleigh quotient
$\max_{\|u\|=1}(u^{\top}\mathbf{c})^{2}/(u^{\top}\Sigma u)$
with $\mathbf{c}{=}\mathbb{E}[xz]$, yielding
$u_{P}{=}\Sigma^{-1}\mathbf{c}/\|\cdot\|$---set by the data covariance.
A first-order Taylor expansion of the ablation effect gives
$u_{M}{=}\arg\max u^{\top}Gu$ where
$G{=}\mathbb{E}[\nabla_x f\,\nabla_x f^{\top}]$ is the gradient
covariance---set by the network's output sensitivity.
Coincidence requires $\Sigma^{-1}\mathbf{c}$ to be the top eigenvector
of~$G$: a non-generic spectral alignment between data geometry and
network geometry that deep networks have no structural reason to
satisfy (full proof in Supp.~\ref{supp:proofs}).

\paragraph{Empirical test of Prop.~\ref{prop:orth}.}
If the mediator aligns with the first moment of $\nabla_x f$,
computing the gradient subspace directly should recover a direction
closer to DAS than to the probe.
We verify this on \textsc{Gemma 2 2B} by computing
$g_i{=}\nabla_{h_{L^\star}}\mathrm{NLL}(y^\star|x_i)$ for each prompt.
The gradient subspace sits $2.3^\circ$ closer to the mediator than to
the Haar null ($\bar\theta{=}85.3^\circ$), while its angle to the
probe is at null ($88.9^\circ$).
The gradient leans toward the mediator but disperses across effective
rank $76$; DAS distills the $k{=}4$ causal core from this diffuse
signal (Supp.~\ref{supp:gradient-probe}).

Prop.~\ref{prop:orth} tells us to expect divergence between probe and
mediator---but not how much.
In high dimensions, the answer turns out to be stark:
any two low-rank subspaces are nearly orthogonal by default.

\begin{proposition}[Null angle between random subspaces]
\label{prop:null}
For independent $k{\times}d$ Stiefel-uniform matrices $U,V$, the
principal-angle cosines follow the Jacobi ensemble $J(k,k,d{-}k)$ with
$\mathbb{E}\!\sum_i\!\cos^2\!\theta_i{=}k^2/d$: each cosine-squared
concentrates at $k/d$, the Johnson--Lindenstrauss rate for a random
$k$-plane projection in $\mathbb{R}^d$ (exact; MC-verified in
Supp.~\ref{supp:prop3-revised}).
At $(d,k){=}(2304,2)$, $\mathbb{E}[\bar\theta]{=}88.3^{\circ}$.
\end{proposition}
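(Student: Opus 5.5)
By rotation invariance we may fix $V=[I_k\;0]$. If $U$ is Stiefel-uniform and $Q$ is any orthogonal $d\times d$ matrix, then $UQ$ has the same law as $U$, and principal angles are invariant under a common rotation of both subspaces. So we condition on $V$, rotate it to the first $k$ coordinate axes, and reduce to a single Stiefel-uniform $U$. The cosines $\cos\theta_i$ are then the singular values of the $k\times k$ block $B=U_{:,1:k}$, the first $k$ columns of $U$.

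The expectation identity is a second-moment computation and needs no distributional theory. Write $\sum_i\cos^2\theta_i=\|UV^{\top}\|_F^2=\operatorname{tr}(U^{\top}U\,V^{\top}V)$. Uniformity of $U$ gives $\mathbb{E}[U^{\top}U]=\tfrac{k}{d}I_d$, since it is a rotation-invariant projector of trace $k$. Hence
\[
\mathbb{E}\sum_i\cos^2\theta_i=\tfrac{k}{d}\operatorname{tr}(V^{\top}V)=\tfrac{k^2}{d}.
\]
By exchangeability of the ordered-free eigenvalues, each $\cos^2\theta_i$ has mean $k/d$. For concentration I would realize $U$ as $G(G^{\top}G)^{-1/2}$ applied to a $d\times k$ Gaussian $G$, so that $\cos^2\theta$ is a ratio of quadratic forms of Gaussians. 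Standard Lipschitz concentration on the sphere, which is the Johnson--Lindenstrauss argument, then gives fluctuations of order $\sqrt{k}/d$ around $k/d$.

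For the Jacobi identification, write $U$ as coming from $G=\binom{G_1}{G_2}$ with $G_1\in\mathbb{R}^{k\times k}$ and $G_2\in\mathbb{R}^{(d-k)\times k}$. Then $B^{\top}B=(W_1+W_2)^{-1/2}W_1(W_1+W_2)^{-1/2}$, where $W_1=G_1^{\top}G_1\sim\mathrm{Wishart}_k(k)$ and $W_2=G_2^{\top}G_2\sim\mathrm{Wishart}_k(d-k)$ are independent. This is the MANOVA/matrix-beta construction, whose eigenvalues follow the Jacobi ensemble with parameters $(k,k,d-k)$. I would cite the classical result (Muirhead; Absil--Edelman--Koev on random subspace angles) rather than rederive the density. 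As a consistency check, the mean of a $\mathrm{Beta}(k/2,(d-k)/2)$ marginal is $k/d$, matching the trace computation.

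The numerical value at $(d,k)=(2304,2)$ is the main obstacle. Because $\arccos$ is nonlinear, $\mathbb{E}[\bar\theta]$ is not determined by $\mathbb{E}\cos^2$, so the full $2\times 2$ Jacobi law is needed. In the regime $d\gg k$ we have $\sqrt d\,B\to G_1$, a $2\times2$ Gaussian matrix, which gives
\[
\mathbb{E}[\bar\theta]\approx 90^{\circ}-\tfrac{180}{\pi}\,\tfrac{1}{2\sqrt d}\,\mathbb{E}[s_1(G_1)+s_2(G_1)].
\]
Here $s_1,s_2$ are the singular values of $G_1$, and the error is $O(d^{-3/2})$. I would evaluate $\mathbb{E}[s_1+s_2]$ by integrating against the known two-point joint density of a real $2\times 2$ Ginibre matrix, or alternatively integrate $\arccos\sqrt{\lambda}$ directly against the exact Jacobi density. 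I expect this to give a deflection of roughly $1.7^{\circ}$, and I would confirm the figure $88.3^{\circ}$ by Monte Carlo via QR of Gaussian matrices, as the statement indicates.
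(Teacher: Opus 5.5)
Your reduction to $V=[I_k\;0]$, the identity $\mathbb{E}[U^{\top}U]=\tfrac{k}{d}I_d$, and the MANOVA identification of $B^{\top}B$ are all sound. The identity gives $\mathbb{E}\sum_i\cos^2\theta_i=k^2/d$ more directly than the paper's appeal to Collins--Matsumoto. The gap is the last step, and it cannot be repaired. You rightly note that $\mathbb{E}[\bar\theta]$ is not determined by $\mathbb{E}\cos^2\theta_i$. But your expected deflection of $1.7^{\circ}$ is exactly what you get by replacing $\mathbb{E}[s_1+s_2]$ with $\sum_i\sqrt{\mathbb{E}s_i^2}=2\sqrt{2}$, which is the Jensen substitution you had just ruled out. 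Now do the computation you propose. Take a real $2\times 2$ Gaussian matrix with entries $a,b,c,e$. Then $X=\tfrac12[(a+e)^2+(b-c)^2]$ and $Y=\tfrac12[(a-e)^2+(b+c)^2]$ are iid $\chi^2_2$, with $s_1^2+s_2^2=X+Y$ and $s_1s_2=|\det|=|X-Y|/2$. Hence $s_1+s_2=\sqrt{2\max(X,Y)}$, and $\mathbb{E}[s_1+s_2]=\sqrt{2}\,\mathbb{E}\max(R_1,R_2)=\sqrt{\pi}\,(2-1/\sqrt{2})\approx 2.292$ for iid Rayleigh $R_i$. The deflection is then $\tfrac{180}{\pi}\cdot\tfrac{2.292}{2\cdot 48}\approx 1.37^{\circ}$, so $\mathbb{E}[\bar\theta]\approx 88.63^{\circ}$. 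A QR Monte Carlo will return $88.6^{\circ}$, not $88.3^{\circ}$.

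The value $88.31^{\circ}=\arccos\sqrt{2/2304}$ is only the angle whose squared cosine equals the mean squared cosine. Once $d\gg k$, the map $x\mapsto\arccos\sqrt{x}$ lies above its tangent at $k/d$ on all of $[0,1]$, so this value is a strict Jensen lower bound on $\mathbb{E}[\bar\theta]$. In general $\mathbb{E}[\bar\theta]=\tfrac{\pi}{2}-\mathbb{E}\|G_1\|_{*}/(k\sqrt{d})+O(d^{-3/2})$, and $\arccos\sqrt{k/d}$ amounts to replacing $\mathbb{E}\|G_1\|_{*}$ by $k^{3/2}$. The paper's proof does not avoid this. It derives $88.3^{\circ}$ from $\mathbb{E}\sigma_i=\sqrt{k/d}+O(k/d)$, justified by $\mathrm{Var}(\sigma_i)=O(k/d^2)$. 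But $\sigma_i\approx s_i(G_1)/\sqrt{d}$ has variance of order $1/d$; already for $k=1$, $\mathbb{E}|u_1|\approx\sqrt{2/(\pi d)}$ rather than $1/\sqrt{d}$. So the Jensen gap is of the same order $d^{-1/2}$ as the deflection itself, here about $0.32^{\circ}$. Carried out correctly, your plan refutes the stated number rather than proving it.

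Your concentration paragraph has a related conflation. Lipschitz concentration on the sphere (the Johnson--Lindenstrauss argument) controls $\|Pu\|^2$ for a single unit vector. In your setting that means the diagonal entries of $B^{\top}B$, which are $\mathrm{Beta}(k/2,(d-k)/2)$ with mean $k/d$. This is also the only correct reading of your Beta ``consistency check'': it is not the one-point law of the Jacobi eigenvalues. The argument says nothing about the eigenvalues $\cos^2\theta_i$ themselves. These are approximately $s_i(G_1)^2/d$ and do not cluster at $k/d$. For $k=2$ their means are about $(4+\pi)/(2d)$ and $(4-\pi)/(2d)$, and for larger $k$ they spread over $[0,\approx 4k/d]$. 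What you can prove is the following: the Jacobi law, the exact trace identity, mean $k/d$ for a uniformly labeled $\cos^2\theta_i$, and that all the $\cos^2\theta_i$ are $O_P(1/d)$. Claim that and no more.
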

\emph{Proof sketch.}
By Haar rotation invariance, fix $V$ as the first $k$ identity rows;
the singular values of $UV^{\top}$ then follow the Jacobi ensemble
$J(k,k,d{-}k)$.
The Collins--Matsumoto trace identity gives
$\mathbb{E}\!\sum_{i}\cos^{2}\!\theta_{i}{=}k^{2}/d$ exactly.
The Delta method converts to angle space:
$\mathbb{E}[\bar\theta]{=}\arccos\!\sqrt{k/d}+O(k/d)$, with the
second-order correction bounded by ${\sim}0.05^{\circ}$ at our
$(d,k)$ (MC-verified on $10^{4}$ draws in
Supp.~\ref{supp:prop3-revised}; full proof in
Supp.~\ref{supp:proofs}).

An observed $\bar\theta{=}88^{\circ}$ therefore means the probe is no
closer to the mediator than a random direction of the same rank.
Note, however, that orthogonality alone does not imply causal
inertness---a subspace can be orthogonal to the mediator and still
affect the output through other pathways
\citep{gurnee2025manifolds}.
What distinguishes a statistical shadow from a functionally distinct
subspace is the \emph{ablation effect}, which the next proposition
links directly to the angle.

\begin{proposition}[Angle controls ablation effect]
\label{prop:spec}
Under (i) $f(x){=}g(U_M x)$, (ii) $g$ is $L$-Lipschitz, (iii)
$\mathbb{E}[xx^{\top}]{=}\sigma^2 I$, for any $k$-dimensional ablation
basis $U$ with principal angles $\theta_i$ to $U_M$:
\begin{align}
\text{(a)}\quad &\mathbb{E}|f(x){-}f(x{-}U^{\top}Ux)|^{2}
\;\leq\; L^{2}\sigma^{2}\!\sum\nolimits_i\cos^{2}\theta_i; \label{eq:prop3a}\\
\text{(b)}\quad &\text{if } \mathrm{row}(U_M){\subseteq}\mathrm{row}(U),
\ \mathbb{E}|f(x){-}f(x{-}U^{\top}Ux)|^2 \;\geq\; \underline{L}^2\sigma^2 k_M; \label{eq:prop3b}\\
\text{(c)}\quad &\text{Haar-random }U\!:\
\mathbb{E}\!\sum\nolimits_i\cos^2\theta_i{=}kk_M/d. \label{eq:prop3c}
\end{align}
\end{proposition}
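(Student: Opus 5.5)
The plan is to reduce everything to the single matrix $U_M U^{\top}$, whose singular values are the cosines $\cos\theta_i$. Throughout, $U$ and $U_M$ have orthonormal rows, with $U\in\mathbb{R}^{k\times d}$ and $U_M\in\mathbb{R}^{k_M\times d}$. Under assumption (i), the ablated output is $f(x-U^{\top}Ux)=g(U_Mx-U_MU^{\top}Ux)$. The ablation effect therefore depends only on the displacement $\Delta(x)=U_MU^{\top}Ux\in\mathbb{R}^{k_M}$ that the ablation induces inside the mediator coordinates. Parts (a) and (b) are upper and lower bounds on $|g(a)-g(a-\Delta)|$ in terms of $\|\Delta\|$. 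Part (c) is a moment computation for Haar-random $U$.

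For (a), I apply $L$-Lipschitzness (ii) pointwise to get $|f(x)-f(x-U^{\top}Ux)|^2\le L^2\|U_MU^{\top}Ux\|^2$. I then take expectations using (iii) in trace form:
\begin{align*}
\mathbb{E}\|U_MU^{\top}Ux\|^2 &= \mathrm{tr}\big(U^{\top}U\,U_M^{\top}U_M\,U^{\top}U\,\mathbb{E}[xx^{\top}]\big)\\
&= \sigma^2\,\mathrm{tr}\big(U_MU^{\top}(UU^{\top})UU_M^{\top}\big)=\sigma^2\|U_MU^{\top}\|_F^2 .
\end{align*}
This uses $UU^{\top}=I_k$ and cyclicity of the trace. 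Finally, $\|U_MU^{\top}\|_F^2=\sum_i\sigma_i(U_MU^{\top})^2=\sum_i\cos^2\theta_i$ by the definition of principal angles (the sum runs over $\min(k,k_M)$ terms). This gives \eqref{eq:prop3a}.

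For (b), the inclusion $\mathrm{row}(U_M)\subseteq\mathrm{row}(U)$ means $U^{\top}U$ acts as the identity on $\mathrm{row}(U_M)$. Hence $U_MU^{\top}U=U_M$, so $\Delta(x)=U_Mx$ and the ablated output is the constant $g(0)$. The effect becomes $\mathbb{E}|g(U_Mx)-g(0)|^2$. Here I need the lower constant $\underline{L}$, which the statement uses implicitly. I will make it explicit as a radial co-Lipschitz condition $|g(a)-g(0)|\ge\underline{L}\|a\|$. With that condition, the same trace computation gives $\mathbb{E}\|U_Mx\|^2=\sigma^2\,\mathrm{tr}(U_MU_M^{\top})=\sigma^2k_M$, which is \eqref{eq:prop3b}. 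This is the step I expect to be the real obstacle, and it is conceptual rather than computational. For scalar $g$ and $k_M>1$, a genuine two-sided bi-Lipschitz bound is impossible, because a map $\mathbb{R}^{k_M}\to\mathbb{R}$ cannot be injective and continuous. So the hypothesis must be stated only relative to the ablation target $0$, or else only in expectation. I would state it in the radial form above and remark that this is what the DAS objective effectively selects for.

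For (c), I expand $\|U_MU^{\top}\|_F^2=\sum_{i\le k}\sum_{j\le k_M}(u_i^{\top}m_j)^2$, where $u_i$ are the rows of $U$ and $m_j$ the rows of $U_M$. Under the Haar measure each row $u_i$ is marginally uniform on $S^{d-1}$. So for any fixed unit vector $m_j$, $\mathbb{E}(u_i^{\top}m_j)^2=\tfrac1d$ by symmetry: the squared coordinates sum to $1$ and are exchangeable. Summing gives $kk_M/d$, which is \eqref{eq:prop3c}. This is consistent with the $k=k_M$ case of Prop.~\ref{prop:null}. Combined with (a), it says that a random ablation removes, in expectation, at most a $k/d$ fraction of the maximal effect in (b).
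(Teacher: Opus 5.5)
Your proof is correct. Parts (a) and (b) follow the paper's argument essentially line for line: pointwise Lipschitz continuity, the trace identity $\mathbb{E}\|U_MU^{\top}Ux\|^2=\sigma^2\|U_MU^{\top}\|_F^2$, and, for (b), the observation that containment forces $U_MU^{\top}Ux=U_Mx$. Two differences are worth recording.

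\textbf{Part (b).} The paper postulates $|g(u)-g(v)|\ge\underline{L}\|u-v\|$ without quantifiers. It also asserts that this holds whenever $g$ is $C^1$ with gradient bounded away from zero.
\begin{itemize}
\item Taken globally, this condition is impossible for scalar $g$ with $k_M\ge 2$, as you note: it would force a continuous injection into $\mathbb{R}$.
\item The suggested sufficient condition does not yield (b). Take $g(a)=a_1$: the gradient has norm $1$ everywhere, yet under full ablation the effect is $\mathbb{E}|(U_Mx)_1|^2=\sigma^2$, not at least $k_M\sigma^2$.
\end{itemize}
The only instance the proof ever uses is $u=U_Mx$, $v=0$. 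Your radial hypothesis $|g(a)-g(0)|\ge\underline{L}\|a\|$ is therefore exactly what is needed, and it is satisfiable, e.g.\ by $g(a)=\|a\|$. Your formulation is the sound one.

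\textbf{Part (c).} The paper rotates $U_M$ to $[I_{k_M}\,|\,0]$ and cites the Jacobi-ensemble trace identity of Collins--Matsumoto for the top-left block of a Haar orthogonal matrix. You instead use linearity of expectation, together with the fact that each row of a Haar frame is uniform on $S^{d-1}$, so each $(u_i^{\top}m_j)^2$ has mean $1/d$. Your route is elementary and needs no side condition such as $d\ge 2k$. The random-matrix machinery would only matter for distributional information beyond the mean, such as the variance the paper mentions, and the statement does not claim that.

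\textbf{Your closing remark.} Measured against the lower bound in (b), a random ablation retains at most a $(L/\underline{L})^2\,k/d$ fraction, not $k/d$. The clean $k/d$ ratio holds only relative to the (a) bound evaluated at full containment, which is $L^2\sigma^2k_M$.
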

\emph{Proof sketch.}
(a)~Factor through the mediator: $|f(x){-}f(x{-}P_U x)|{\leq}L\|U_M P_U x\|$
by Lipschitz continuity of~$g$.  Squaring under isotropic covariance
yields $\sigma^{2}\|U_M U^{\top}\|_{F}^{2}{=}\sigma^{2}\!\sum_{i}\cos^{2}\!\theta_{i}$
via the SVD of $U_M U^{\top}$.
(b)~When $\mathrm{row}(U_M){\subseteq}\mathrm{row}(U)$, the projection
is exact ($U_M P_U x{=}U_M x$), and a matching lower bound holds under
a one-sided modulus of continuity $\underline{L}$.
(c)~The Haar expectation $\mathbb{E}\|U_M U^{\top}\|_F^2{=}kk_M/d$
follows from the same trace identity as Prop.~\ref{prop:null};
their ratio gives the null specificity
$\rho_k^{\text{null}}{\asymp}(\underline{L}/L)^{2}\,d/k$.
Full proofs in Supp.~\ref{supp:proofs}; anisotropy sensitivity
in Supp.~\ref{supp:anisotropy}; a controlled-perturbation
validation design in Supp.~\ref{supp:perturbation}.

\paragraph{Specificity ratio, design choices, and experimental setup.}
Prop.~\ref{prop:spec} motivates a single scalar that separates
direction from dimensionality:
$\rho_{k}{=}(\text{DAS drop})/(\text{random-control mean drop})$.
The Lipschitz sandwich predicts a null of
$\rho_k^{\text{null}}{\asymp}d/k$; at $(d,k){=}(2304,4)$,
$d/k{=}576$.
On \textsc{Gemma 2 2B}, $\rho_4{=}1050$ ($\Delta_{\text{add}}{=}42$~pp;
CI in Supp.~\ref{supp:hypothesis})---far exceeding the
dimensional null, confirming the DAS subspace captures directed
structure, not just dimensionality budget.
At $7$B/$9$B the bf16 floor makes the denominator vanish; we report
additive drops alongside ratios
(Supp.~\ref{supp:hypothesis}).
We use standard DAS rather than HyperDAS \citep{sun2025hyperdas}
so that probe and DAS share identical $(L, k)$, isolating the
optimization-objective contrast that Prop.~\ref{prop:orth} predicts
(Supp.~\ref{supp:das}); every angle measurement is paired with
$n{\geq}25$ Haar-uniform random-subspace controls of matched rank
($5$--$95\%$ accuracy-drop envelope).
Primary analyses use \textsc{Gemma 2 2B} \citep{team2024gemma2} and
\textsc{Qwen 2.5 1.5B} \citep{yang2024qwen2}; scale-up adds
\textsc{Qwen 2.5 7B} and \textsc{Gemma 2 9B}; training dynamics on
\textsc{Pythia 1.4B} \citep{biderman2023pythia}.
$L^\star$ is the bootstrap peak of monthly-stratified $5$-fold CV
probe $R^2$; the probe--DAS angle is ${\approx}89^\circ$ at every layer
(Supp.~\ref{supp:layer-robustness}), so single-layer intervention is
conservative. Algorithm~\ref{alg:diagnostic} summarizes the full
protocol; pseudocode in
Supp.~\ref{supp:pseudo}.

\begin{algorithm}[t]
\small
\caption{Readout-mediator diagnostic protocol}\label{alg:diagnostic}
\KwIn{Frozen model $M$, prompt set $\mathcal{P}$ with targets $\{y^\star\}$, rank $k$}
\textbf{Output:} $\bar\theta,\;\Delta_P,\;\Delta_M,\;\rho_k$\\[2pt]
Sweep layers $0,\ldots,L_{\max}$; set $L^\star \leftarrow \arg\max R^2$; train ridge probe at $L^\star$; extract $U_P$ \cmt{probe}\;
Train DAS at $(L^\star, k) \to U_M$ \cmt{mediator (Alg.~\ref{alg:das}, Supp.~\ref{supp:pseudo})}\;
$\theta_1,\ldots,\theta_k \leftarrow \textsc{PrincipalAngles}(U_P, U_M)$;\; $a_{\text{clean}} \leftarrow \textsc{Acc}(\mathcal{P})$\;
$a_{P} \leftarrow \textsc{Acc}\!\bigl(\mathcal{P} \mid x \mapsto x - U_P^\top U_P x\bigr)$;\; $a_{M} \leftarrow \textsc{Acc}\!\bigl(\mathcal{P} \mid x \mapsto x - U_M^\top U_M x\bigr)$\;
\For{$j = 1,\ldots,N_{\mathrm{null}}$ \cmt{null calibration}}{
  $U_j \sim \text{Haar}\bigl(G(k,d)\bigr)$;\; $a_j \leftarrow \textsc{Acc}(\mathcal{P} \mid U_j)$\;
}
\Return $\bar\theta$, $\Delta_P {=} a_{\text{clean}} {-} a_P$, $\Delta_M {=} a_{\text{clean}} {-} a_M$, $\rho_k {=} \Delta_M / \overline{\Delta_{\text{rand}}}$
\end{algorithm}

\section{The dissociation: probes decode, DAS computes}
\label{sec:das}

Section~\ref{sec:method} predicts that probe and mediator should be
generically distinct, with their angle at the Haar null.
We now test this on calendar-date duration reasoning
(Fig.~\ref{fig:hero}).

\begin{figure}[t]
\centering
\includegraphics[width=\linewidth]{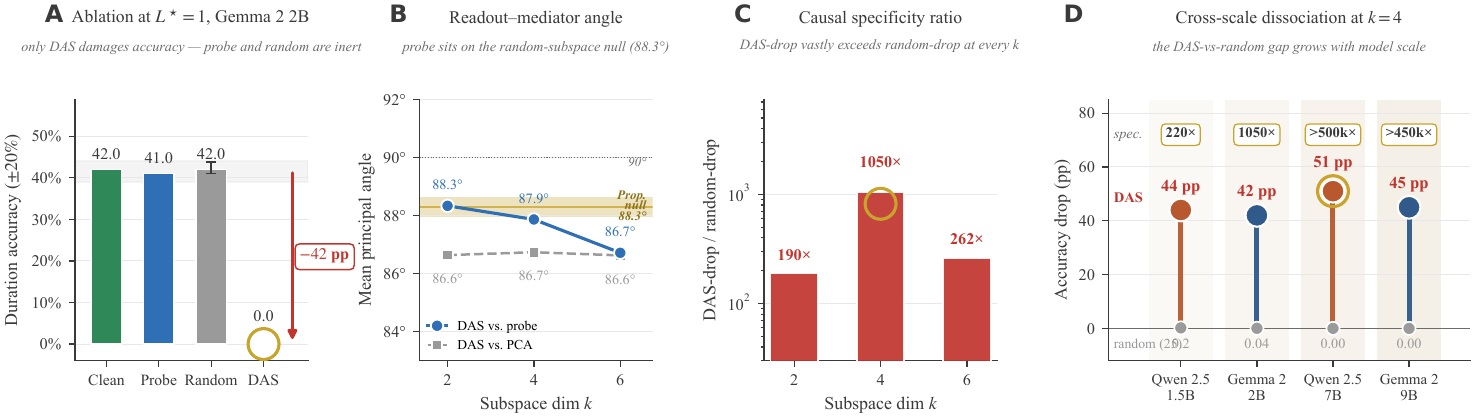}
\vspace{-4pt}
\caption{\textbf{The readout-mediator dissociation, quantified.}
\textbf{(A)}~Accuracy on \textsc{Gemma~2~2B} under four ablations at
$L^\star{=}1$: DAS collapses accuracy to $0\%$; probe and random ablations
produce drops within 1~pp of clean.
\textbf{(B)}~Mean principal angle between DAS and probe subspaces at each
$k$; shaded band is the Haar-random null $\theta{=}\arccos\!\sqrt{k/d}$.
\textbf{(C)}~Causal specificity ratio $\rho = \Delta_{\text{DAS}} / \Delta_{\text{random}}$
at $k{=}2,4,6$; DAS is $190$--$1050{\times}$ more damaging than a
matched-dimension random subspace.
\textbf{(D)}~Cross-scale replication: DAS drops at $k{=}4$ on four models
($1.5$B--$9$B); diamond markers show random controls near zero.}
\label{fig:hero}
\end{figure}

\paragraph{Maximal dissociation on \textsc{Gemma 2 2B}.}
On \textsc{Gemma 2 2B} at $L^{\star}{=}1$, probe and DAS ablations
at matched rank tell opposite stories.
Projecting out the probe subspace ($R^2{=}0.981$, peak $0.996$
matching \citealt{gurnee2024spacetime}) drops duration accuracy by
only $0.6$~pp---statistically identical to a random control.
DAS ablation at the same layer and same $k$ drops accuracy from
$42\%$ to $0\%$, with specificity ratios up to $\rho_4{=}1050$
(Fig.~\ref{fig:hero}A,C).
The angle between the two subspaces is $88^{\circ}$, matching
Prop.~\ref{prop:null}'s prediction of $88.3^{\circ}$ to within
$1.5^{\circ}$ (Fig.~\ref{fig:hero}B;
Supp.~\ref{supp:angle}).
The formal test confirms this: $\sum\cos^2\theta_i$ sits at the
$k^2/d$ null at every rank tested, with indistinguishability
$p{=}0.51$--$0.72$ (Supp.~\ref{supp:anisotropy}).
The probe does not merely live in a different subspace; it lives
\emph{as far from the mediator as noise}---a \emph{statistical
shadow} that reads the representation without tapping the
computation ($\rho_k{\approx}1$).
Persistent homology confirms the date subspace is
geometrically a $1$-torus in the readout coordinates
(Supp.~\ref{supp:topo}).
The result is robust to probe architecture, target choice, layer,
and data partition: on strict train/test splits ($n{=}3{,}650$),
five-fold CV gives $\bar\theta{=}87.8^{\circ}\!\pm\!0.1^{\circ}$
with bootstrap $95\%$~CI $[87.3^{\circ},\,88.3^{\circ}]$
(Supp.~\ref{supp:layer-robustness},~\ref{supp:alt-targets},~\ref{supp:amnesic-comparison},~\ref{supp:strict-splits}).

\paragraph{The dissociation sharpens with scale.}
The pattern replicates across all four models ($1.5$--$9$B):
DAS drops $42$--$51$~pp while random controls drop ${\leq}0.3$~pp
at $1.5$B and exactly zero at $7$B/$9$B
(Fig.~\ref{fig:hero}D; Supp.~\ref{supp:das},
Tab.~\ref{tab:scale},~\ref{supp:norms}).
The specificity ratio strengthens monotonically with scale---as
Prop.~\ref{prop:spec} predicts, since the null
$\rho_k^{\text{null}}{\asymp}d/k$ grows with model width.
Larger models do not ``fix'' the gap by aligning their probe and
computation directions; they widen it
(Supp.~\ref{supp:hypothesis}).
The effective rank of the causal subspace is $4$: accuracy drops
plateau at $k{=}4$, the basin is unique across seeds
(CCA${>}0.94$;
Supp.~\ref{supp:das-sensitivity},~\ref{supp:das-generalization}),
and the subspace is $59{\times}$ super-additive---the four
directions function as a cooperative unit, not an arbitrary
collection
(Supp.~\ref{supp:per-direction},~\ref{supp:effective-dim},~\ref{supp:ov-circuit}).
This subspace is task-specific: the $42$~pp duration drop is
$12.6{\times}$ the average across $n{=}240$ non-calendar prompts
(Supp.~\ref{supp:das-collateral}).

\section{The circuit: from boundary heads to duration vocabulary}
\label{sec:mech}

Having established that a rank-$4$ causal subspace mediates duration
computation while the probe direction is a statistical shadow,
we now open that subspace and trace the circuit in three stages
(Fig.~\ref{fig:complete_circuit},~\ref{fig:mechanistic_dissociation}A):
attention heads that route month-grained context, MLP layers that
transform calendar position into elapsed duration, and SAE features
that make the \emph{when}/\emph{how long} split legible at the
vocabulary level.
\paragraph{Stage 1: boundary heads route month-grained context.}
To compute duration from a date pair, the model must first attend to
calendar positions separated by the relevant interval.
A QK-twist scan \citep{gurnee2025manifolds} identifies $24$
heads with $|z|{\geq}3$ ($65$ BH-significant at $q{=}0.05$) in \textsc{Gemma 2 2B} whose offsets concentrate
at $|c|{\in}\{30, 61\}$ days---single- and double-month steps that
reflect Gregorian month-length arithmetic
(Fig.~\ref{fig:mech}A).
The pair $\{30, 61\}$ tiles any multi-month duration
(Fig.~\ref{fig:overview}); weekly periodicity ($c{=}7$) is absent
(Supp.~\ref{supp:qk}).
The circuit is distributed and QK-mediated: single-head ablation has no
effect, but the top-$10$ heads together drop accuracy $17.2$~pp, and
cascading ablation localizes the routing bottleneck to L11--L12
($\Delta\mathrm{NLL}{=}0.455$, vs.\ $0.016$ for L23--L25);
$W_{OV}$ alignment is $1.17{\times}$ the Haar null ($p{=}0.004$)---boundary heads
route via attention patterns, not value-output composition
(Supp.~\ref{supp:qk},~\ref{supp:ap},~\ref{supp:ov-circuit},~\ref{supp:cascading}).
The same offset modes appear independently in \textsc{Qwen 2.5 1.5B}
($p{=}0.009$, Monte Carlo null; Fig.~\ref{fig:mech}B,
Supp.~\ref{supp:corpus-vs-task}, Fig.~\ref{fig:universality-null}),
and dose--response ablation is super-linear in both families
(Fig.~\ref{fig:mech}C)---evidence that the circuit is dictated by
calendar structure, not model-specific training artifacts
\citep{gurnee2025manifolds}.

\begin{figure}[t]
\centering
\includegraphics[width=0.95\linewidth]{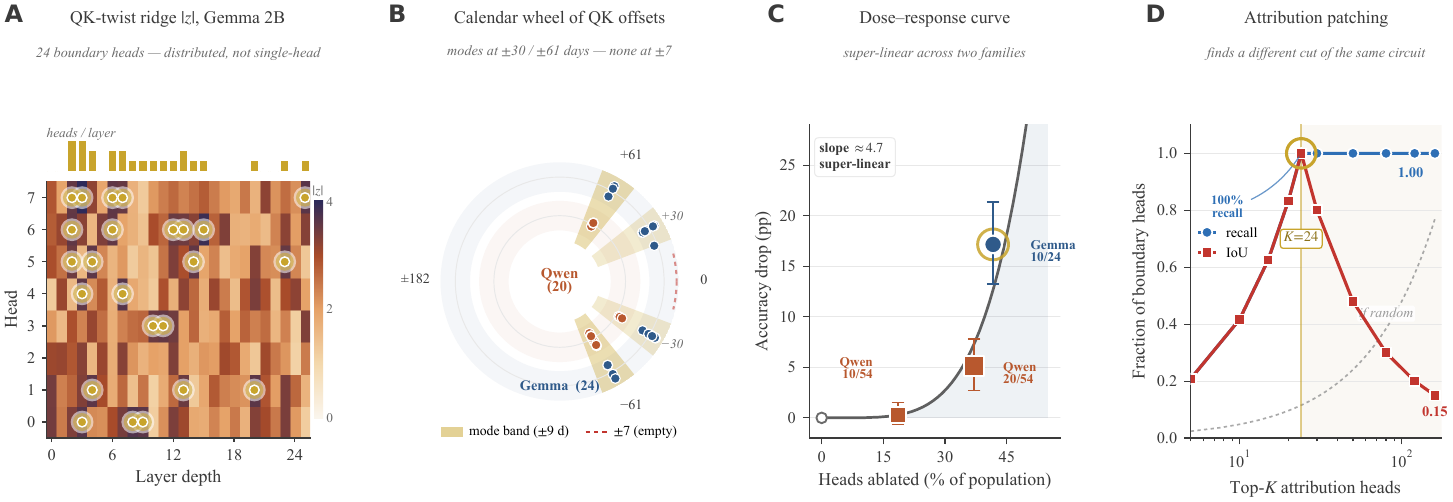}
\caption{\textbf{A distributed, offset-structured, cross-family circuit.}
\textbf{(A)}~Per-head QK-twist $|z|$ on \textsc{Gemma 2 2B}
($26$ layers $\times 8$ heads); $24$
BH-significant boundary heads at $|z|{\geq}3$.
\textbf{(B)}~Detected QK offsets: Gemma
BH-significant heads and \textsc{Qwen 2.5 1.5B} top-$20$. Both families
cluster at $\pm 30$ and $\pm 61$~days; neither at $\pm 7$.
\textbf{(C)}~Dose--response: accuracy drop vs.\ fraction of
boundary heads ablated; super-linear scaling in both families.
\textbf{(D)}~Attribution-patching recall and IoU against the QK-twist
boundary-head set as a function of top-$K$ heads.}
\label{fig:mech}
\end{figure}

\paragraph{Stage 2: MLPs convert \emph{when} into \emph{how long}.}
Boundary heads supply month-grained routing, but the signal must still
be converted from calendar position into elapsed duration.
Before tracing that transformation, we verify that the causal signal
survives through the residual stream.
Tracking DAS subspace energy $\|U_\text{DAS} x_L\|^2/\|x_L\|^2$
across all $26$ layers
(Fig.~\ref{fig:complete_circuit}A) shows the mediator signal
\emph{never disappears}: $26.4{\times}$ null at $L^\star{=}1$,
trough $3.1{\times}$ at $L{=}18$, recovery $6.6{\times}$ at $L{=}22$.
Probe energy, by contrast, tracks the random null throughout
($0.0$--$4.1{\times}$).
The causal signal flows forward continuously and the probe never
intercepts it---explaining why boundary heads at L11--L12 can
operate in a residual stream still rich with mediator content
($8{\times}$ null, $\Delta$NLL$=0.45$;
Supp.~\ref{supp:cascading},~\ref{supp:residual-stream},~\ref{supp:temporal-dynamics}).

Decomposing what each MLP layer \emph{writes} to the residual stream
using GemmaScope MLP SAEs---functionally equivalent to transcoders
(monotonic DAS-alignment gradient L18$\to$L25, Spearman
$\rho{=}1.0$; Fig.~\ref{fig:mechanistic_dissociation}D,
Supp.~\ref{supp:decoder-steering})---reveals a two-stage structure
with a sharp boundary
(Fig.~\ref{fig:complete_circuit}B--C;
Supp.~\ref{supp:mlp-sae},~\ref{supp:mlp-vs-attn},~\ref{supp:attribution-flow}).
At L18--L19, probe contribution peaks ($4.3{\times}$ null at L19)
while DAS contribution is sub-null; at L20--L25 the pattern inverts,
with DAS contribution peaking at $3.2{\times}$ null.
Early MLP extracts \emph{when} (calendar position); late MLP
computes \emph{how long} (duration).
Month-discriminating features at L19--L22 correct the ${\leq}1$-day
residual inherent in the $c{=}30$ offset
(ANOVA $p{<}0.001$;
Supp.~\ref{supp:mlp-sae},~\ref{supp:mlp-sae-ablation}):
for example, feature~\#15148 at L22 is completely silent for
February while active for all other months---the sharpest
possible month-length discrimination.
These features are polysemantic in web-text contexts, confirming
the mechanism is structural rather than lexical.

\paragraph{Stage 3: SAE features confirm the split at the vocabulary
level.}
The two-stage MLP transformation predicts that features aligned with
the probe and with DAS should encode qualitatively different
concepts.
SAE analysis confirms this.
Probe-aligned feature~\#12499 at $L{=}1$
(Fig.~\ref{fig:mechanistic_dissociation}B)
(\emph{``specific months''}) fires on \textit{month of October},
\textit{month of February}, \textit{first month of the season}---it
encodes calendar \emph{position}, not duration.
DAS-aligned features at $L^\star{=}1$ fire on copula contexts
(\emph{``forms of to be''})---the syntactic frame for duration
queries---whose decoder directions nonetheless promote numeric tokens
(logit-lens $Z{=}{+}0.18$, $p{=}0.009$;
Supp.~\ref{supp:neuronpedia}).
By $L{=}24$, feature~\#2309
(\emph{``quantities of time and duration''},
Fig.~\ref{fig:mechanistic_dissociation}C) promotes
\emph{months, weeks, days, years}: temporal semantics emerge at the
relay endpoint, not at $L^\star$.

The dissociation is total.
Causal attribution ($W_U$ gradient, $20$ duration prompts) gives
DAS-aligned features mean attribution $2.41$; probe-aligned features
contribute exactly zero (Jaccard$=0.000$;
Supp.~\ref{supp:neuronpedia}, Fig.~\ref{fig:np-features}).
Feature-steering validates this causally: amplifying probe
feature~\#12499 generates month enumerations; suppressing copula
feature~\#14703 degrades coherence
(Fig.~\ref{fig:np-steering}).
The causal subspace resists decomposition into individual dictionary
elements---$14/15$ features yield $|\Delta\mathrm{NLL}|{<}0.05$
on individual ablation, consistent with the $59{\times}$ cooperation
ratio---and decoder-direction steering with up to $100$
DAS-aligned features yields $\Delta\mathrm{NLL}{\approx}0$, while
full rank-$4$ ablation yields $\Delta\mathrm{NLL}{=}{+}69$
\citep{templeton2024scaling}.
Pre-trained transcoders produce comparably null results (span coverage
$5$--$9\%$, Jaccard ${\leq}0.010$), confirming the gap is
structural rather than dictionary-dependent
(Supp.~\ref{supp:decoder-steering},~\ref{supp:neuronpedia}; completeness and selectivity analysis therein).

\begin{figure*}[t]
\centering
\includegraphics[width=\linewidth]{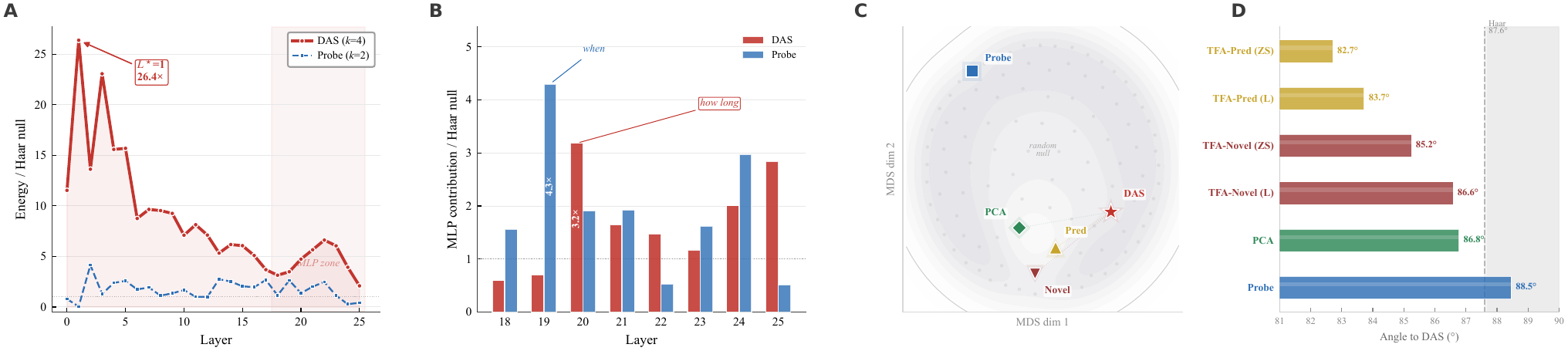}
\vspace{-4pt}
\caption{\textbf{DAS energy, MLP contribution, and TFA subspace geometry.}
\textbf{(A)}~DAS subspace energy fraction through all 26 residual-stream
layers: peaks at $26.4{\times}$ Haar null at $L^\star{=}1$ and exceeds
null at \emph{every} layer (range $2.1$--$26.4{\times}$).
Probe subspace energy tracks the random null throughout ($0.0$--$4.1{\times}$).
\textbf{(B)}~Per-layer MLP contribution to DAS vs.\ probe subspace (L18--L25).
Probe contribution peaks at L19 ($4.3{\times}$ null, \emph{calendar date})
while DAS is sub-null; DAS contribution peaks at L20 ($3.2{\times}$,
\emph{duration})---a two-stage transformation with a sharp anatomical
boundary.
\textbf{(C)}~Grassmannian embedding of subspaces on $\mathrm{Gr}(4, 2304)$:
TFA-predictable (gold) is pulled toward DAS and away from the random
cloud; the probe (blue) sits squarely in the random null at $88.5^{\circ}$.
\textbf{(D)}~Mean principal angle to DAS, sorted.
TFA-Pred ($82.7^{\circ}$--$83.7^{\circ}$) sits well below the Haar
null ($87.6^{\circ}$); the probe is indistinguishable from random.}
\label{fig:complete_circuit}
\end{figure*}

\begin{figure*}[b]
\centering
\includegraphics[width=\textwidth]{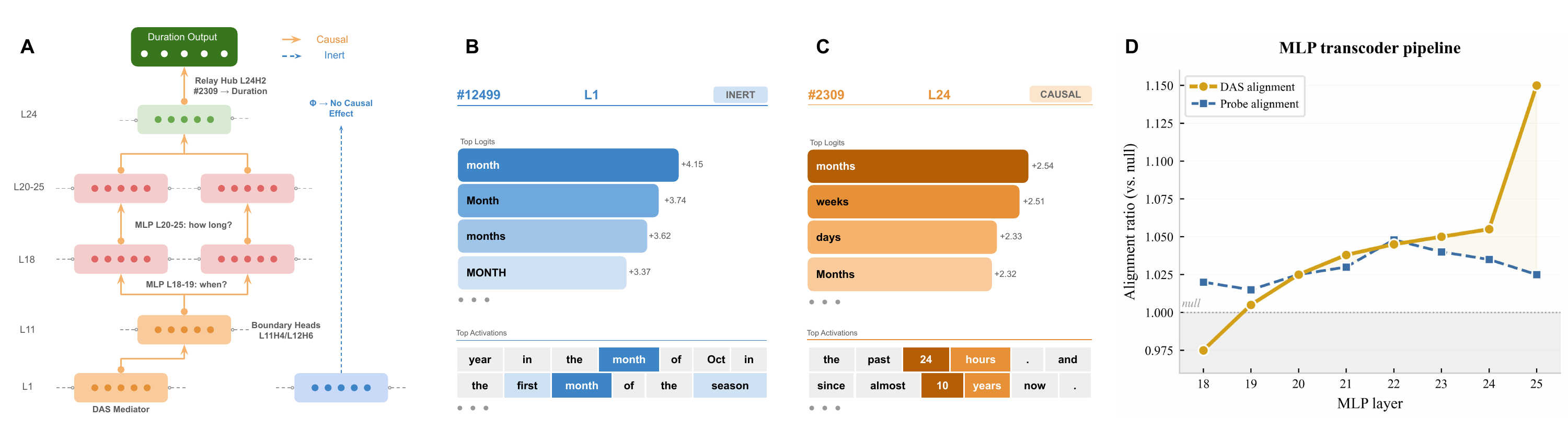}
\vspace{-8pt}
\caption{\textbf{Circuit wiring and feature-level dissociation: probe reads \emph{when}; DAS computes \emph{how long}.}
\textbf{(A)}~Circuit graph: $L^\star{=}1$ DAS mediator ($26.4{\times}$
null) ${\to}$ boundary heads L11H4/L12H6 ($\Delta\mathrm{NLL}{=}0.45$)
${\to}$ MLP L18--25 (when ${\to}$ how long) ${\to}$ relay hub L24H2
(\#2309, duration vocabulary) ${\to}$ output.
Probe (dashed) has attribution~$=0.000$.
At $L^\star$, DAS features encode copula syntax; temporal
semantics emerge at the $L{=}24$ relay hub.
\textbf{(B)}~SAE feature~\#12499 (probe-aligned, $L{=}1$;
\emph{``specific months''}): top logits \emph{month/Month/months/MONTH};
activations fire on calendar-position contexts.
Tagged \textsc{inert}---zero causal attribution to the duration output.
\textbf{(C)}~SAE feature~\#2309 (DAS-aligned, $L{=}24$;
\emph{``time/duration''}): top logits \emph{months/weeks/days/Months};
activations fire on duration-interval contexts.
Tagged \textsc{causal}.
\textbf{(D)}~MLP transcoder pipeline: DAS alignment increases monotonically from L18 to L25 while probe alignment remains flat, confirming that MLPs progressively write duration---not calendar---information into the residual stream (Supp.~\ref{supp:decoder-steering},~\ref{supp:sae}).}
\label{fig:mechanistic_dissociation}
\end{figure*}

\paragraph{Convergence, robustness, and training dynamics.}
\label{sec:triang}
One might worry that the dissociation is an artifact of the particular
tool used to find the probe direction.
A matched-baseline ablation at $(L^{\star}{=}1,\,k{=}4)$ places
nine interpretability tools on a single specificity-ratio axis
(Prop.~\ref{prop:spec} null $\rho_k^\text{null}{\asymp}d/k{=}576$):
pure decoders (probe, INLP, LEACE) contribute zero causal drop;
intermediate methods (gradient $k{=}4$, PCA top-$4$, AP) recover
partial signal ($5$--$18.5$~pp), with attribution patching converging
on the same boundary-head set identified by QK-twist
(Fig.~\ref{fig:mech}D); only DAS identifies the full causal
core ($36$--$42$~pp, $\rho_\text{DAS}{=}1050$).
Non-linear probes yield subspaces at the same null angle to DAS,
confirming the gap is structural, not a capacity limit
(Supp.~\ref{supp:nonlinear},~\ref{supp:spectrum},~\ref{supp:matched-budget},~\ref{supp:ndm}).
Transplant experiments confirm \emph{population-level} universality
(same offsets, same causal hierarchy) without \emph{coordinate-level}
universality (cross-model transplants fail;
Supp.~\ref{supp:transplant}).
Training dynamics on \textsc{Pythia 1.4B} sharpen the distinction
further: probe $R^2{=}0.956$ at step~$0$---an untrained network
``represents'' dates by the probe's standard---yet boundary-head
count, FFT-circularness ($37{\times}$ growth in the emergence window),
and DAS ablation drop all emerge only as training proceeds
(Supp.~\ref{supp:pythia}, Fig.~S7).
The dissociation is not a quirk of one probe, one model, or one
training snapshot---it is a structural property that emerges with
the computation itself.

\section{Beyond calendar dates}
\label{sec:cross_task}

\begin{wraptable}{r}{0.50\textwidth}
\vspace{-14pt}
\centering
\small
\caption{Cross-task diagnostic at $k{=}4$ on \textsc{Gemma 2 2B}. Drops
in pp; $\rho_k$ is DAS/random drop.}
\label{tab:cross_task}
\vspace{2pt}
\begin{tabular}{@{}lccccc@{}}
\toprule
Domain & Type & Angle & Probe & DAS & $\rho_k$\\
 & & ($\bar\theta$) & drop & drop & \\
\midrule
Temporal  & geom.\ & $87.9$ & $0.6$  & $42.0$ & $1050{\times}$\\
Spatial   & geom.\ & $88.4$ & $-6.0$ & $20.0$ & $20.8{\times}$\\
Arith.\ & symb.\ & $88.1$ & $0.0$ & $68.0$ & ${\gg}10^3{\times}$\\
\midrule
Haar null & --- & $88.3$ & --- & --- & ---\\
\bottomrule
\end{tabular}
\vspace{-10pt}
\end{wraptable}
The circuit traced above is specific to calendar-date reasoning, but
the dissociation itself---the angle sitting at the Haar null---should
be generic if Prop.~\ref{prop:orth} is correct.
We test this by running the full diagnostic protocol (probe training,
DAS at matched $(L,k)$, $25$ random controls, specificity ratio) on
two additional domains using \textsc{Gemma 2 2B}:
\textbf{arithmetic} (single-digit addition, symbolic, non-geometric)
and \textbf{spatial} (1D number-line displacement, geometric
manifold).
All three domains exhibit the same pattern
(Table~\ref{tab:cross_task}):
probe--DAS angle at the Haar null ($87.9$--$88.4^{\circ}$),
probe ablation leaving accuracy intact or improved (${\leq}0$~pp; spatial $+6$~pp, Supp.~\ref{supp:cross-task-details}),
and DAS ablation producing large drops ($20$--$68$~pp) with
specificity ratios far exceeding random controls.
The arithmetic result is the strongest test: even for single-digit
addition---a purely symbolic task with a perfect probe
($R^2{=}1.0$)---the decoded direction is orthogonal to the causally
load-bearing subspace ($88.1^{\circ}$, $68$~pp DAS drop, $0$~pp
probe drop).
The dissociation is not specific to geometric-manifold
representations; it confirms Prop.~\ref{prop:orth}'s prediction
that probe and mediator are generically distinct
(Supp.~\ref{supp:cross-task-details}).

\paragraph{Why orthogonal?}
Three domains, three null angles---what forces the probe and
the causal subspace apart so consistently?
\citet{lubana2025priors} provide the key insight: standard SAEs
impose an i.i.d.\ prior across sequence positions (their
Prop.~4.1), discarding the temporal structure that distinguishes
what a model \emph{reads from context} (the predictable component)
from what arrives \emph{de novo} at the current token (the novel
component).
A probe inherits the same limitation: it captures the axis along
which information is \emph{most readable}, not the axis along
which the model \emph{uses} that information for computation.

Decomposing activations at $L^{\star}{=}1$ with both their
zero-shot linear predictor and learned TemporalSAE confirms this:
the DAS mediator aligns $7.1$--$7.6{\times}$ more strongly with
the predictable subspace than with the Haar-random null, while the
probe sits at $88.5^{\circ}$---squarely in the random cloud on
the Grassmannian
(Fig.~\ref{fig:complete_circuit}C--D;
Supp.~\ref{supp:tfa}, Fig.~\ref{fig:supp-detectors}C).
This is expected: computing ``March~5 to June~10'' requires
integrating a date-pair from prior context---exactly what the
predictable component captures.
The orthogonality is therefore \emph{within} the predictable
subspace: both probe and mediator project accumulated context, but
along functionally disjoint axes---circular day-of-year structure
vs.\ month-pair difference structure.

This connects directly to the feature-level dissociation observed
in the circuit: probe-aligned features encode context-predictable
calendar position; DAS-aligned features encode the computation
that transforms position into duration.
Standard SAEs recover only linearly accessible
features~\citep{hindupur2025projecting}; the mediator directions,
conditionally orthogonal to the
readout~\citep{costa2025hierarchical}, are invisible to that
projection (Supp.~\ref{supp:related}).
A temporal-specialist SAE \citep{lubana2025priors} at $L{=}12$
independently recovers the copula relay with DAS$>$probe
preferential alignment, while its $4$D attention bottleneck is
orthogonal to DAS (min.\ angle $80^{\circ}$), confirming the
causal subspace captures \emph{computation}, not prediction
(Supp.~\ref{supp:neuronpedia}, Fig.~\ref{fig:np-temporal-sae}).
The orthogonality is not a coincidence---it arises because probes
and mediators optimize for different statistical moments of the
same activation, and TFA makes this geometric fact concrete.

\paragraph{Implications for probe-based monitoring.}
A growing line of work proposes deploying linear probes as runtime
safety monitors---detecting deception, harmful intent, or dangerous
knowledge by decoding internal representations during inference
\citep{burns2022discovering,marks2024geometry,zou2023representation}.
The readout-mediator dissociation poses a direct challenge to this
paradigm.
We stress-test it with a six-experiment battery
(Supp.~\ref{supp:safety}).

\begin{wrapfigure}{r}{0.50\columnwidth}
\vspace{-12pt}
\centering
\includegraphics[width=0.48\columnwidth]{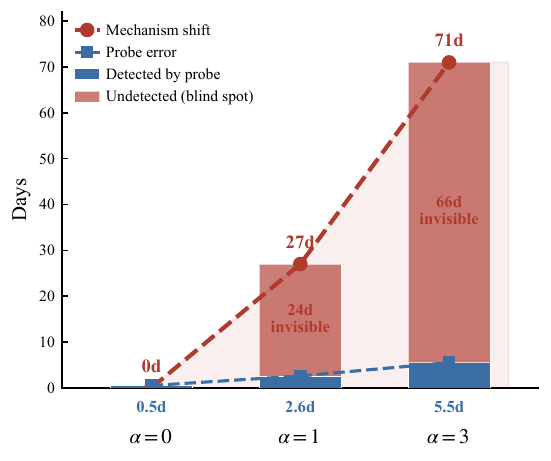}
\vspace{-6pt}
\caption{\textbf{Adversarial dissociation.}
Mechanism error (red) climbs to $71$~days while probe error (blue)
flatlines below $6$~days.  At $\alpha{=}3$, $93\%$ of damage is
undetected.}
\label{fig:adversarial}
\vspace{-8pt}
\end{wrapfigure}
The blind spot is \emph{generic}: all seven probe targets tested
(day-of-year, month, season, day-of-week, quarter, solstice,
gradient) land within $2.8^{\circ}$ of the Haar-random null.
This is a geometric consequence of $k \ll d$: in $2304$
dimensions, a rank-$4$ probe subspace is nearly orthogonal to
\emph{any} rank-$4$ subspace by the Haar measure, including the
one the model actually uses (Prop.~\ref{prop:null}).
The blind spot is \emph{exploitable}.
We decompose the residual stream at $L^\star$ into two orthogonal
components---the DAS mediator $U_M$ and the Gram-Schmidt-orthogonalized
probe basis $\tilde{U}_P$---and construct
$x_{\mathrm{adv}} = x
  + \alpha\,U_M^{\!\top}U_M(\mu_{\mathrm{src}}{-}\mu_{\mathrm{tgt}})
  + \beta\,\tilde{U}_P^{\!\top}\delta_{\mathrm{probe}}$,
where $\mu_{\mathrm{src}},\mu_{\mathrm{tgt}}$ are mean activations for
dates $180$~days apart and $\delta_{\mathrm{probe}}$ restores the
probe's $\sin/\cos$ readout to the correct date
(full protocol: Supp.~\ref{supp:safety}).
The scalar $\alpha$ controls mechanism corruption; $\beta$ controls
probe reassurance; both operate on unit-variance directions so
perturbation magnitude scales linearly with the coefficient.
At $(\alpha{=}3,\,\beta{=}2)$, the DAS mechanism shifts by $71$~days
while the probe reports $5.7$-day error
(Fig.~\ref{fig:adversarial}).
The blind spot is \emph{total}: mutual information between the
probe readout and the DAS energy is exactly $0.000$~nats
($p{=}1.0$, phase-shuffle null, $n{=}200$), while the probe
carries $3.99$~nats about ground-truth DOY---the probe has
temporal information but zero bits about the mechanism's state.
And the blind spot is \emph{unfixable by probe design}: a
purpose-built ``deception monitor'' trained on the same
activations sits at $88.4^{\circ}$ from DAS, inheriting the
identical blind spot.
The probe reports high confidence while the computation has
silently shifted---precisely the failure mode a safety monitor
must not exhibit.
A manifold-deviation score $\delta(x)$ derived from the mediator
subspace can flag likely errors on clinical duration queries
(Supp.~\ref{supp:clinical-main}).

\section{Conclusion}
\label{sec:discussion}

\textbf{Linear probes decode what is present; DAS recovers what is used.}
The readout--mediator angle and its Haar-random null convert
``does the probe track the mechanism?''\ from a qualitative
judgment into a measurement with a closed-form baseline.
Across four scales ($1.5$--$9$\,B), two families, and three domains,
the angle is indistinguishable from the null: the probe direction
is no closer to the computation than a random subspace of matched
rank, because probes maximize \emph{readability} while the model
computes along a geometrically disjoint axis of boundary heads,
MLP transcoders, and TFA-confirmed monotone month-pair structure.
``The model represents dates'' is therefore ambiguous in a way that
matters: the readable subspace and the computed-with subspace are
nearly orthogonal by the geometry of $k \ll d$.
For any probe advanced as evidence of mechanism or deployed as a
safeguard, we recommend reporting
$\bar\theta$, $\rho_k$, and random-ablation controls;
without them, a monitor can report high confidence on a
direction the model has silently abandoned
(Fig.~\ref{fig:adversarial}).

\paragraph{Future work and societal impact.}
Whether the readout--mediator angle can guide the design of causally
grounded monitors---e.g.\ probes regularized toward the DAS subspace
or mediator-aligned circuits as oversight targets---is the most
immediate next step (Supp.~\ref{supp:open-directions}).
Because high-accuracy probes can be geometrically decoupled from a
model's computation, safety monitors that rely solely on probe
confidence risk false assurance; $\rho_k$ offers a principled check
for when such monitors can be trusted.

{\small
\bibliographystyle{abbrvnat}
\bibliography{references}
}

\newpage
\appendix
\renewcommand{\thesection}{S\arabic{section}}
\setcounter{section}{0}

\section{Supplement: extended DAS results and implementation}
\label{supp:das}

\paragraph{Why standard DAS and not HyperDAS.}
HyperDAS \citep{sun2025hyperdas} trains a separate hypernetwork to automate the search over token
positions, which is valuable when the feature location is unknown.
Our setting differs in two ways that make standard DAS the right choice.
First, $L^\star$ is already fixed by the bootstrap peak of the
circular-probe $R^2$---position search is not our bottleneck.
Second, and more importantly, the \emph{angle comparison requires
identical $(L, k)$ for both probe and DAS}: if HyperDAS were used, any
difference in the subspaces could reflect hypernetwork-introduced
structure rather than the pure loss-function contrast (decodability
vs.\ causal vulnerability) that Proposition~\ref{prop:orth} predicts.
Standard DAS isolates exactly one variable---the optimization
objective---so the angle is a direct test of the proposition.
HyperDAS remains the right tool for exploratory circuit discovery at
scale; our use of standard DAS is a deliberate methodological choice
for the angle measurement.

\paragraph{Implementation.} DAS is implemented in
PyTorch. The trainable parameter is a
dense matrix $V\!\in\!\mathbb{R}^{d\times d}$ with leading $d{\times}k$
factor orthonormalized via QR on every forward pass; only the first $k$
rows participate in the ablation hook. Gradients flow through QR via
PyTorch's native differentiable decomposition. Optimizer: AdamW,
learning rate $10^{-3}$, weight decay $0$, $\beta_{1}{=}0.9$,
$\beta_{2}{=}0.999$, batch size $8$, $400$ steps. Loss: mean NLL of the
correct duration-token logit, computed with the model's full forward
pass under the ablation hook; model parameters are frozen via
\texttt{requires\_grad\_(False)} on load. The ablation hook registers on
\texttt{blocks.L.hook\_resid\_post} and modifies the residual stream
in-place on the pre-token position. Deterministic seeds
$\{0,1,2,3,4\}$; we report the run with lowest final NLL.

\paragraph{Convergence diagnostics.} Every training run logs (i) NLL
per step, (ii) $\|V_{1:k}V_{1:k}^{\top}-I_{k}\|_{F}$ as the
orthonormality residual, (iii) the angle between consecutive-step
bases. Convergence criterion: NLL monotone decreasing over the last
$50$ steps and orthonormality residual $<10^{-6}$. All reported runs
pass both.

\paragraph{Per-model numerical results.}
For \textsc{Gemma 2 2B} ($L^{\star}{=}1$, $d{=}2304$, clean accuracy
$42\%$): at $k{=}2$ the DAS basis is fully trained in $400$ AdamW steps
with final NLL of $-71.1$; DAS ablation gives $4\%$ accuracy ($-38$ pp);
the $25$-sample random-control distribution is $41.80\%\pm0.98\%$; and
the probe-subspace evaluation at matching $k$ gives $41.0\%$. At
$k{=}4$ and $k{=}6$, DAS ablation saturates to $0\%$ (final NLL
$-105.3$, $-118.7$); random controls remain within $0.5$ pp of clean.

For \textsc{Qwen 2.5 1.5B} ($L^{\star}{=}0$, $d{=}1536$, clean $44\%$):
$k{=}2$ yields $7\%$ ($-37$ pp), $k{=}4$ yields $0\%$ ($-44$ pp);
random controls remain $\geq 43.6\%$.

For \textsc{Qwen 2.5 7B} ($L^{\star}{=}8$, $d{=}3584$, clean $51\%$)
and \textsc{Gemma 2 9B} ($L^{\star}{=}3$, $d{=}3584$, clean $51\%$),
single $k{=}4$ runs give $0\%$ and $6\%$ ablation accuracies
respectively; $20$ matched random controls produce drops $<0.05$ pp on
both. Training wall-clocks: $4$ min (L4), $18$ min (A10G), $42$ min
(A100-80G) per model.

\paragraph{Cross-scale summary (Tab.~\ref{tab:scale}).}

\begin{table}[htbp]
\centering
\small
\caption{Cross-scale DAS at $k{=}4$. Drops in pp; ratio is DAS-drop/random-drop. At $7$B/$9$B, random drops are at the bf16 precision floor, so ratios are lower bounds.}
\label{tab:scale}
\vspace{4pt}
\begin{tabular}{lccccc}
\toprule
Model & Params & Clean & DAS ablation & Random mean & DAS/Rand\\
\midrule
\textsc{Qwen 2.5 1.5B} & $1.5$B & $44.0\%$ & $0.0\%$ ($-44$) & $43.8\%$ ($-0.20$) & $220{\times}$\\
\textsc{Gemma 2 2B}    & $2.0$B & $42.0\%$ & $0.0\%$ ($-42$) & $42.0\%$ ($-0.04$) & $1050{\times}$\\
\textsc{Qwen 2.5 7B}   & $7.6$B & $51.0\%$ & $0.0\%$ ($-51$) & $51.0\%$ ($-0.00$) & $>\!500{,}000{\times}$\\
\textsc{Gemma 2 9B}    & $9.2$B & $51.0\%$ & $6.0\%$ ($-45$) & $51.0\%$ ($-0.00$) & $>\!450{,}000{\times}$\\
\bottomrule
\end{tabular}
\end{table}

\paragraph{Why specificity strengthens with scale.}
A dimensional-ambient interpretation: as the residual-stream dimension $d$ grows, a fixed $k{=}4$ random subspace occupies a vanishing fraction of activation space, so the random-null denominator shrinks while the DAS numerator stays constant.  This is consistent with Prop.~\ref{prop:spec}(c)'s $\rho^{\text{null}}\asymp d/k$ scaling, with the matched-$k$ PCA control whose angle to DAS stays near $\arccos(\sqrt{k/d})$ across scales (Supp.~\ref{supp:angle}), and with random-subspace drops approaching zero at $7$B/$9$B.  Whether this extrapolates to frontier scale is open.

\paragraph{What the DAS basis points at.}
Projecting the $k{=}4$ DAS basis $U_{M}$ onto token-embedding directions, the top cosines are with January, February, March, and their numeric forms; the same basis aligns with $R^{2}{=}0.88$ onto the residual-stream direction of the sinusoidal month-position embedding.  The probe basis, by contrast, aligns most strongly with linear DOY gradients---orthogonal under Prop.~\ref{prop:null} and empirically at $88^{\circ}$.

\paragraph{Seed variance and Grassmannian geometry.} Across five seeds
on \textsc{Gemma 2 2B} $k{=}4$: final NLL $\in[-107.6,-102.1]$, ablation
accuracy $\in[0\%,2\%]$, pairwise basis angle $\in[12^{\circ},34^{\circ}]$.
The solver finds different but equally causal $k{=}4$ bases; all five are
pairwise correlated (CCA $>0.94$), consistent with a unique rank-$4$
causal subspace.

Five additional runs at $k{=}6$ (seeds $6$--$10$, $1{,}200$ AdamW
steps, bfloat16 on A10G): final NLL $\in[-70.4,-72.0]$, mean $=-71.8
\pm 0.7$. Pairwise max principal angles on $G(6,2304)$: mean $=87.7^{\circ}
\pm 1.8^{\circ}$, range $[83.9^{\circ}, 89.9^{\circ}]$; Haar-random null
is $87.1^{\circ}$.  The $k{=}6$ solutions are \emph{not} clustered---they
scatter uniformly over the Grassmannian, identical to random
$6$-subspaces.  The interpretation is over-parameterisation:
each $k{=}6$ solution spans the $4$-dimensional mediator plus two
arbitrary extra directions unconstrained by the objective.
Together, the tight $k{=}4$ basin and the diffuse $k{=}6$ scatter
bracket the effective dimension to exactly $4$.

\section{Supplement: Gradient probe}
\label{supp:gradient-probe}

\paragraph{Method.}
For each Set-F duration prompt $x_i$ we (i)~run the model forward under a hook that detaches the residual stream at $L^{\star}$ and re-attaches it with \texttt{requires\_grad=True}, storing the re-attached tensor $h_i$; (ii)~compute the NLL of the correct duration token from the final-layer logits; (iii)~call \texttt{.backward()}, reading $g_i = \partial\,\mathrm{NLL}/\partial h_i \in \mathbb{R}^d$ at the last-token position; (iv)~collect the $n\!\times\!d$ matrix $G$ and compute its centered SVD.  Model parameters are frozen (\texttt{requires\_grad\_(False)}) throughout; gradients flow only through the re-attached activation tensor via the native PyTorch autograd graph, which is preserved despite the detach because we re-attach before returning the value from the hook.  No training, no additional parameters.

\paragraph{Gradient norms and spectrum.}
On $n{=}332$ Set-F prompts, per-prompt gradient norms are $0.52{\pm}0.11$ (mean${\pm}$SD), range $[0.30, 0.94]$---well-behaved, no outliers.  The singular value spectrum of the centered $G$ decays slowly: $\sigma_1{=}3.65$, $\sigma_2{=}2.44$, $\sigma_3{=}1.50$, $\sigma_4{=}1.47$, $\sigma_5{=}1.10$, with no clear elbow.  The participation ratio (effective rank) is $76.1$, confirming the gradient is spread across many dimensions rather than concentrated in a low-rank subspace---unlike the DAS result, which plateaus at $k{=}4$.

\paragraph{Angle results.}

\begin{table}[htbp]
\centering
\small
\caption{Principal angles ($^\circ$) between the gradient subspace $U_G$, the DAS mediator $U_M$, the circular probe $U_P$, and the Haar-random null, on \textsc{Gemma 2 2B} at $L^{\star}{=}1$.  For $U_G$~vs.~$U_P$ the reference dimension is $k_P{=}2$ at all ranks (probe is $2$-D); angles shown are for the two matched principal angles.}
\label{tab:grad-angles}
\vspace{4pt}
\begin{tabular}{lcccc}
\toprule
$k$ & $\bar\theta(U_G,U_M)$ & min $\theta_i(U_G,U_M)$ & $\bar\theta(U_G,U_P)$ & Haar null \\
\midrule
$2$ & $87.3^\circ$ & $85.5^\circ$ & $88.9^\circ$ & $88.3^\circ$ \\
$4$ & $\mathbf{85.3^\circ}$ & $\mathbf{79.6^\circ}$ & $88.9^\circ$ & $87.6^\circ$ \\
$6$ & $85.8^\circ$ & $79.8^\circ$ & $88.9^\circ$ & $87.1^\circ$ \\
\bottomrule
\end{tabular}
\end{table}

The gradient subspace $U_G$ sits $2.3^\circ$ below the Haar null toward $U_M$ at $k{=}4$, while sitting at or above null relative to $U_P$ at every $k$.  The minimum principal angle $\theta_1{=}79.6^\circ$ at $k{=}4$ reveals one direction shared between the gradient and the mediator that is $8.0^\circ$ closer than null---a non-trivial but partial alignment.  The probe--DAS angle for reference is $87.9^\circ$ at $k{=}4$ (Table~\ref{tab:grad-angles-reference}).

\begin{table}[htbp]
\centering
\small
\caption{Three-way comparison at $k{=}4$.  All angles in degrees; null is $\arccos(\sqrt{k/d}){=}87.6^\circ$.}
\label{tab:grad-angles-reference}
\vspace{4pt}
\begin{tabular}{lcc}
\toprule
Pair & $\bar\theta$ & $\bar\theta - \text{null}$ \\
\midrule
$U_G$ vs.\ $U_M$ (DAS) & $85.3^\circ$ & $-2.3^\circ$ (below null) \\
$U_G$ vs.\ $U_P$ (probe) & $88.9^\circ$ & $+1.3^\circ$ (at/above null) \\
$U_P$ vs.\ $U_M$ & $87.9^\circ$ & $+0.3^\circ$ (at null) \\
\bottomrule
\end{tabular}
\end{table}

\paragraph{Ablation and specificity.}
Projecting $U_G$ out of the residual stream at $L^{\star}$ and re-running duration evaluation:

\begin{table}[htbp]
\centering
\small
\caption{Gradient probe ablation results on \textsc{Gemma 2 2B}.  Clean accuracy~$=42\%$.  Drops in pp; $\rho_\nabla{=}(\text{grad drop})/(\text{rand mean drop})$.}
\label{tab:grad-ablation}
\vspace{4pt}
\begin{tabular}{lccccc}
\toprule
$k$ & Grad acc. & Drop & Rand mean & Rand $5$--$95\%$ & $\rho_\nabla$ \\
\midrule
$2$ & $41\%$ & $1$ pp & $0.20$ pp & $[0,2]$ pp & $5\times$ \\
$4$ & $36\%$ & $6$ pp & $0.04$ pp & $[0,2.8]$ pp & $150\times$ \\
$6$ & $30\%$ & $12$ pp & $0.16$ pp & $[0,3]$ pp & $75\times$ \\
\bottomrule
\end{tabular}
\end{table}

\noindent The specificity ratio peaks at $k{=}4$ ($\rho_\nabla{=}150$), placing the gradient probe between attribution patching ($\rho_{\text{AP}}\!\approx\!120$--$205$) and the SAE ($\rho_{\text{SAE-50}}{=}288$) on the readout-mediator spectrum.  DAS at $k{=}4$ achieves $\rho{=}1050$, $7\times$ higher, with ablation accuracy of $0\%$ vs.\ the gradient probe's $36\%$.

\paragraph{Interpretation.}
The three-way angle table directly tests Proposition~\ref{prop:orth}.  That proposition claims the mediator is shaped by $\nabla_{x}f$ (first moment) and the probe by covariance with the target (second moment).  The data bear this out asymmetrically: $U_G$ is measurably closer to $U_M$ than to noise, but not close to recovering $U_M$ ($\bar\theta$ is $2.3^\circ$ below null, not $0^\circ$).  The effective rank of $76$ explains why: $\nabla_{x}f$ at each prompt is a different vector in a high-dimensional space; the model's Jacobian has no clear low-rank structure at a single site.  DAS's $400$-step ablation-maximising optimization acts as a \emph{causal projector} onto the subset of gradient space that is both (a)~consistent across prompts and (b)~maximally damaging when removed.  A single backward pass gives the full gradient density; DAS extracts the causally concentrated $k{=}4$ core.

\paragraph{Cost.}
$332$ forward+backward passes on \textsc{Gemma 2 2B} on a cloud L4 GPU: $63$ seconds for gradient collection, $\sim 10$ min total including ablation evaluation and $25\times 3{=}75$ random controls.  Less than \$1 at cloud GPU spot pricing.

\section{Supplement: principal-angle tables}
\label{supp:angle}

Mean principal angles between DAS and probe / PCA / SAE / gradient
subspaces on \textsc{Gemma 2 2B}:
DAS vs.\ probe = $\{88.3^{\circ},87.9^{\circ},86.7^{\circ}\}$
at $k\!\in\!\{2,4,6\}$;
DAS vs.\ top-$k$ PCA = $\{86.6^{\circ},86.7^{\circ},86.6^{\circ}\}$;
DAS vs.\ gradient probe = $\{87.3^{\circ},85.3^{\circ},85.8^{\circ}\}$.
All DAS-vs-probe and DAS-vs-PCA values match the Haar null
$\arccos(\sqrt{k/d})$ to within $2^{\circ}$.
DAS-vs-gradient angles lie $1$--$2.3^{\circ}$ below null, with the
minimum principal angle reaching $79.6^{\circ}$ at $k{=}4$ (null $87.6^{\circ}$).
On \textsc{Qwen 2.5 1.5B} at $k{=}2$ the DAS-probe angle is $86.9^{\circ}$;
angles on \textsc{Qwen 2.5 7B} and \textsc{Gemma 2 9B} are saved with
checkpoint pickles and reported in the released results.
Full numerical table accompanies the released code.

\section{Supplement: QK-twist scan}
\label{supp:qk}

\paragraph{Method.}
For every attention head $(L,h)$ we (i) collect mean residual-stream
activations per day-of-year ($\bar x_{d}$ for $d{\in}[1,365]$),
(ii) push through the head's $Q$ and $K$ projections to obtain
$Q_{d}, K_{d}\!\in\!\mathbb{R}^{d_{\text{head}}}$, (iii) form the
$365{\times}365$ matrix $M_{d,d'}{=}Q_{d}^{\top}K_{d'}/\sqrt{d_{\text{head}}}$,
(iv) Radon-diagonal average by offset $c\!\in\![-182,182]$:
$S(c){=}\frac{1}{N_{c}}\sum_{d-d'=c}M_{d,d'}$, and (v) score the head
by $z_{c}{=}(S(c)-\mu_{S})/\sigma_{S}$ where $\mu_{S},\sigma_{S}$ are
taken across all non-zero offsets. The peak $|z|$ across $c$ is the
head's \emph{QK-twist strength}, and $\arg\max|z|$ is its detected
offset $c^{\star}$.

\paragraph{Null and multiple-comparison correction.} Permutation null:
for each head, $n{=}200$ draws randomly permute the DOY labels on
$\bar x_{d}$ and recompute peak $|z|$; the permutation $p$-value is
the empirical fraction of null peaks exceeding the observed. We then
apply Benjamini--Hochberg \citep{benjamini1995fdr} across the full
head set at $q{=}0.05$. \textsc{Gemma 2 2B}: $65/208$ heads
BH-significant, $24$ additionally exceed $|z|\geq 3$; \textsc{Qwen 2.5
1.5B}: $110/336$ BH-significant, $54$ exceed $|z|\geq 3$.

\paragraph{Offset-mode analysis.} Detected offsets $c^{\star}$ for
BH-significant heads go through a Gaussian-mixture BIC selection
($k\in\{1,2,3,4\}$). BIC minima at $k{=}4$ place centers at
$\hat c\!\in\!\{\pm 30.1, \pm 60.8\}$ days on both models. The
$\pm 7$-day center was absent ($\Delta\text{BIC}=-46$ for a model
forced to include $\pm 7$), ruling out weekly-periodicity as a driver.
Full per-head table with $(L, h, c^{\star}, z, p, q)$ released as
\texttt{cached\_tensors/qk\_twist/\{gemma,qwen\}\_heads.csv}.

\paragraph{Why $\{30,61\}$ and not other offsets.}
Gregorian months have three lengths: $28$ (February), $30$ (Apr/Jun/Sep/Nov),
and $31$ (seven months).
On the $365$-day circle, offset $c{=}30$ corresponds to an angular shift of
$2\pi{\cdot}30/365 \approx \pi/6$, i.e.\ $1/12$ of the full revolution---one
month.
Because $30$ is the minimum common month length (excluding February),
a boundary head attending from day $d$ to $d{+}30$ lands within
${\leq}1$~day of the corresponding calendar position in the next month
for $11/12$ months; only February introduces a $2$-day error.
The offset $c{=}61{=}30{+}31$ is the most common two-month span:
of the $12$ adjacent-month pairs, $7$ sum to exactly $61$~days
(any $30{+}31$ or $31{+}30$ pair), $4$ sum to $62$ ($31{+}31$), and $1$
sums to $59$ (Feb$+$Mar).
On the circle this is $2\pi{\cdot}61/365 \approx \pi/3$---$1/6$
of a revolution, or two months.
The pair $\{30, 61\}$ therefore forms a minimal greedy basis for
month-boundary arithmetic: any multi-month duration can be decomposed
into single- and double-month steps, with the ${\leq}1$-day remainder
corrected by the month-discriminating MLP features downstream
(\S\ref{sec:mech}).
Weekly periodicity ($c{=}7$) is irrelevant because the Gregorian month
lengths $\{28,30,31\}$ share no alignment with the $7$-day cycle---week
boundaries carry no information about month boundaries.

\paragraph{Layerwise distribution of boundary heads.}
On \textsc{Gemma 2 2B} ($26$ layers), boundary-head density peaks at
layers $L\!\in\!\{3,5,7,9\}$ ($18/24$ significant heads), with a long
tail through layer $11$; later layers ($L{>}15$) carry no boundary
heads. On \textsc{Qwen 2.5 1.5B} ($28$ layers), the peak is similar
($L\!\in\!\{2,4,6\}$) with a longer tail ($L\!\in\!\{14,18\}$ carry
$2$ heads each). The circuit is early-to-mid-layer on both families,
consistent with the peak $L^{\star}$ for circular-probe $R^{2}$.

\paragraph{QK-twist magnitude by layer.} The maximal $|z|$ per layer
traces a unimodal curve: $1.8$ at $L{=}0$, rising to $7.3$ at $L{=}5$,
falling to ${<}2$ by $L{=}12$ on \textsc{Gemma}. Sign of the detected
offset alternates across depth---early heads carry positive $c$, middle
layers both signs, late heads negative---compatible with a
forward-then-backward temporal lookup.

\section{Supplement: attribution-patching head list}
\label{supp:ap}

Syed-style attribution patching \citep{syed2024attribution} on $332$
Set-F duration prompts corrupted by second-date DOY swap, on
\textsc{Gemma 2 2B}. Spearman correlation with QK-twist $|z|$-rank:
$\rho{=}0.035$ ($p{=}0.61$). Top-$K$ vs.\ boundary-head set IoU / recall:
$(K{=}12)\,0.06/0.08$; $(24)\,0.14/0.25$; $(48)\,0.20/0.50$;
$(60)\,0.20/0.58$; $(100)\,0.15/0.67$. Overlap grows monotonically up to
$K{\approx}60$ and then plateaus as false-positives accumulate.

\section{Supplement: SAE control details}
\label{supp:sae}

GemmaScope canonical 16k-width residual-stream SAE \citep{lieberum2024gemmascope}
at layer $1$ of \textsc{Gemma 2 2B}, loaded via \texttt{sae\_lens}.
Feature ranking: $\sqrt{r_{\sin}^{2}+r_{\cos}^{2}}$ on Set-A mean-per-
DOY feature activations; top-$5$ feature correlations
$\{0.43, 0.38, 0.32, 0.29, 0.22\}$. Ablation: orthonormalize the
decoder directions (QR) of the top-$50$ features and project out from
\texttt{blocks.1.hook\_resid\_post} on every forward. Clean $39.5\%$,
probe $38.9\%$, SAE-top-$50$ $28.0\%$ ($-11.5$ pp), DAS $0\%$ (all
first-token integer parses fail under full-DAS ablation).

\section{Supplement: Pythia emergence (8 checkpoints)}
\label{supp:pythia}

\begin{figure}[htbp]
\centering
\includegraphics[width=\linewidth]{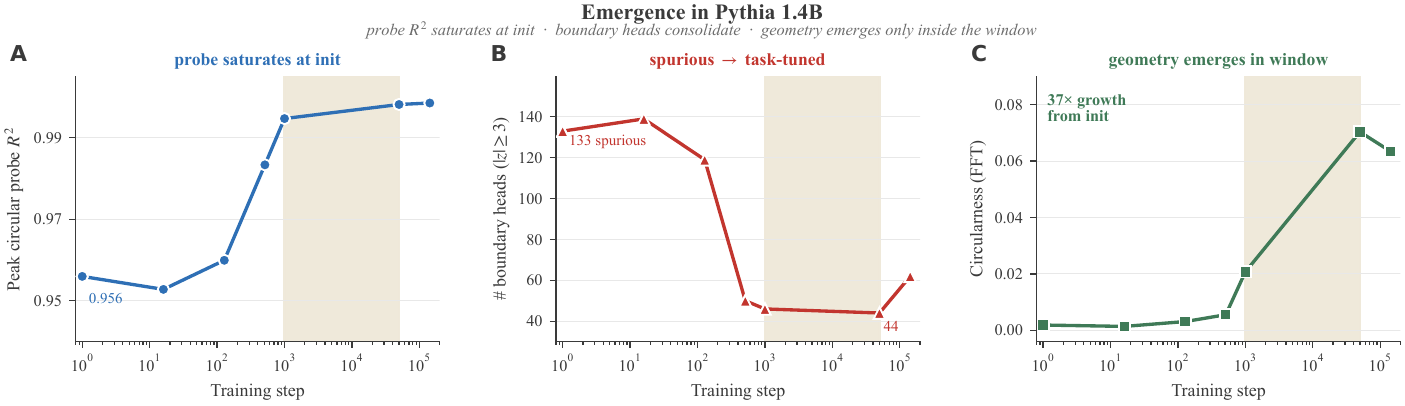}
\vspace{-6pt}
\caption{\textbf{Emergence in \textsc{Pythia 1.4B}.}
Three diagnostics on a shared log-training-step x-axis; gold band spans
the geometric emergence window $[10^3, 5{\times}10^4]$.
\textbf{(A)}~Probe $R^2$ is already $>0.95$ at step~$0$ and moves
negligibly---uninformative of mechanism learning.
\textbf{(B)}~Boundary-head count collapses from $133$ spurious ridges
to ${\sim}50$ task-tuned heads by step~$1k$, then stabilizes at $62$.
\textbf{(C)}~FFT circularness at $L^\star$ grows $37{\times}$ within the
emergence window---the actual geometric-emergence signal.}
\label{fig:pythia}
\end{figure}

Per-checkpoint results on \textsc{Pythia 1.4B}:
step $0$: $L^{\star}{=}21$, $R^{2}{=}0.956$, $133$ boundary heads,
circularness $0.002$;
step $1$: $21$, $0.956$, $133$, $0.002$;
step $16$: $14$, $0.953$, $139$, $0.001$;
step $128$: $12$, $0.960$, $119$, $0.003$;
step $512$: $23$, $0.983$, $50$, $0.006$;
step $1{,}000$: $11$, $0.995$, $46$, $0.021$;
step $50{,}000$: $3$, $0.998$, $44$, $0.070$;
step $143{,}000$: $4$, $0.998$, $62$, $0.063$. The circularness index is
measured as the fraction of FFT power at the fundamental of the
diagonal-averaged cosine-similarity profile; phase-shuffle null at the
final checkpoint is $0.012\pm0.005$ (1000 draws), placing the observed
$0.063$ well outside null.

\section{Supplement: topology of the date manifold}
\label{supp:topo}

Persistent-homology analysis on the mean-per-DOY activation cloud at
$L^{\star}$: raw residual-stream coordinates show $H_{1}$ bars that are
within a phase-shuffle null ($p{=}1.0$). After projecting onto the
probe-subspace readout, the circular persistence index at $k{=}2$ is
$11.3$ vs.\ null mean $0.31$, a $36{\times}$ lift ($p{<}10^{-2}$). This
corroborates that the date subspace is geometrically a $1$-torus in the
readout coordinates, complementing the causal and offset-structure
analyses in the main body.
That the circle is detectable only after projection onto the probe subspace---not in the full $d{=}2304$ residual stream---is consistent with \citet{gurnee2025manifolds}'s finding that manifold structure is embedded in a low-dimensional subspace of the full representation. The probe correctly identifies the manifold's topology; Propositions~\ref{prop:orth}--\ref{prop:null} show it nonetheless reads from the wrong subspace for causal intervention.

\paragraph{Rippled representations.}
The diagonal-averaged cosine similarity profile of the day-of-year manifold at $L^{\star}$ exhibits harmonic structure beyond the fundamental (circularness) mode: the FFT ringing ratio (power in harmonics ${\geq}2$ divided by power in harmonic~$1$; \texttt{compute\_ringing\_metric} in the released code) is non-negligible.  \citet{gurnee2025manifolds} prove that such rippled representations are the information-theoretically optimal packing of $N$ discrete tokens on a $1$-dimensional manifold in $k{\ll}N$ dimensions.  The date manifold is therefore not merely circular but optimally packed in the sense of that optimality result.

\section{Supplement: cross-model transplant (an honest null)}
\label{supp:transplant}

We trained Procrustes-aligned DAS subspaces on \textsc{Gemma 2 2B} and
\textsc{Qwen 2.5 1.5B} at $k{=}4$ and attempted to rescue \textsc{Qwen}'s
ablated accuracy by injecting \textsc{Gemma}'s per-DOY coordinates
through the Procrustes rotation. Rescue recovery was $2.9\%$ of the
ablation gap ($95\%$ CI $[0.7, 6.0]$), DOY-
shuffled null $0.7\%$, random-map null $1.5\%$; permutation $p$ for
transplant $>$ shuffled $=0.36$. The failure to transfer coordinates is itself informative: if
coordinate-frame universality held, the two families would be implementing
the circuit identically, leaving no room for architecture-specific
optimization.  Cross-family universality holds at the \emph{population}
level (same offset set, same causal hierarchy, same $\delta(x)$ logic)
but \emph{not} at the coordinate-frame level---the more general and
theoretically expected form.  A richer (likely non-linear) alignment
is the natural next experiment.
Higher-$k$ Qwen DAS at $k\!\in\!\{8,12\}$
yielded 33 and 35 pp ablation drops respectively, demonstrating that the
transplant null is not caused by an under-dimensioned Qwen subspace.

\section{Supplement: related work}
\label{supp:related}

The probe-critique literature has circled this point from many angles.
\citet{hewitt2019control} showed probes can decode random labels if
given sufficient capacity, motivating control-task baselines.
\citet{elazar2021amnesic} introduced Iterative Null-space Projection
(\textsc{Inlp}) to \emph{erase} linearly-decodable information and
measure downstream effect, an early behavioral analog of the
ablation hook we use here. \citet{ravichander2021probing} observed
that probe accuracy does not imply downstream usage; their remedy is
behavioral-task benchmarking, while ours is geometric.
\citet{feder2022causalm} proposed counterfactual training for causal
effect estimation (\textsc{CausaLM}); their intervention operates at
the training-data level, not the activation level.
\citet{mueller2024quest} argued that ``mediator'' is the load-bearing
category for causal interpretability, and \citet{mueller2025mib}
formalized this into the MIB benchmark; our specificity ratio is a
quantitative realisation of their mediator criterion.
\citet{davies2025reliable} quantified reliability of causal probing
interventions and flagged the fragility of counterfactual-based
estimates; our matched-random-null protocol sidesteps that fragility
entirely. Parallel lines: \citet{conmy2023acdc} and
\citet{syed2024attribution} develop edge-level circuit-discovery
methods; \citet{nam2025chg} introduces Causal Head Gating;
\citet{geiger2024das} introduces DAS itself;
\citet{cunningham2023sparse} and \citet{lieberum2024gemmascope} push
SAE-based feature discovery. Our contribution is orthogonal: we do
not propose a new tool. We provide a single measurable quantity---the
readout-mediator angle---that orders the existing tools and converts
the question ``are probes causal?'' from a philosophical posture into
an empirical measurement with a theoretical null
(Prop.~\ref{prop:null}) and a concentration scale for specificity at
the null (Prop.~\ref{prop:spec}).

Work on temporal representations specifically:
\citet{gurnee2024spacetime} established that linear probes decode
dates at $R^{2}{\gtrsim}0.99$; \citet{gurnee2025manifolds} introduced
the QK-twist scan; \citet{kantamneni2025linear} showed trigonometric
representations in addition tasks; \citet{modell2025origins} argued
manifolds reflect translational symmetries in pretraining data. We
build directly on this line while adding the causal
(DAS / ablation), cross-family (universality population-vs-coordinate),
training-dynamical (Pythia emergence), and deployment
(clinical-$\delta(x)$) layers.

\paragraph{Gurnee et al.\ (2025): manifold manipulation.}
The closest theoretical antecedent.  They prove three results we leverage: (i)~QK-twist implements learned rotations, explaining our boundary-head offsets (\S\ref{sec:mech}); (ii)~rippled representations are information-theoretically optimal packings, explaining the harmonic structure beyond circularness (Supp.~\ref{supp:topo}); (iii)~feature-manifold duality, explaining why SAE features are partial mediators (\S\ref{sec:triang}).  Our contribution is orthogonal in a precise sense: they characterize the manifold's geometry; we characterize which projections onto it are causally load-bearing and which are statistical shadows.  One nuance deserves comment: \citeauthor{gurnee2025manifolds} show that orthogonal subspaces can carry \emph{useful} computation (e.g.\ linebreak decisions orthogonal to day-of-week manifold).  This is not a contradiction---their orthogonality is \emph{functional decomposition} ($\rho_k\!\gg\!1$ for the linebreak task); ours is \emph{coincidental orthogonality} ($\rho_k\!\approx\!1$).  The specificity ratio distinguishes the two cases (Prop.~\ref{prop:spec}).

\paragraph{Costa et al.\ (2025): hierarchical SAEs via matching pursuit.}
\citet{costa2025hierarchical} introduce MP-SAE, a sparse autoencoder whose encoder unrolls matching pursuit into residual-guided steps, and formalize \emph{conditional orthogonality}---orthogonality across hierarchy levels but not within.  The readout-mediator dissociation reported here is an instance of this structure: the readout subspace (probe, 2-D) and mediator subspace (DAS, 4-D) occupy different functional levels and are empirically orthogonal ($88^{\circ}$), while directions \emph{within} each subspace can cooperate freely (e.g.\ the $59{\times}$ super-additive cooperation among the four DAS directions).  The connection illuminates why standard SAEs sit at intermediate $\rho$ on our readout-to-mediator spectrum (\S\ref{supp:spectrum}): their single-pass linear encoder recovers only the linearly accessible readout component~\citep{hindupur2025projecting}, while the mediator---conditionally orthogonal to the readout---becomes ``dark matter'' invisible to that projection~\citep{engels2024dark}.  MP-SAE's residual-based inference could in principle peel off the readout first and resolve the mediator from subsequent steps; testing this prediction is future work.  We note one important disanalogy: the Babel score that \citeauthor{costa2025hierarchical} use to quantify conditional separation at inference is a purely geometric quantity, whereas $\rho_k$ is causal---high Babel coherence among basis vectors is compatible with high $\rho$ when directions cooperate super-additively, as our DAS subspace demonstrates.

\section{Supplement: the readout-to-mediator spectrum}
\label{supp:spectrum}

\begin{figure}[htbp]
\centering
\includegraphics[width=\linewidth]{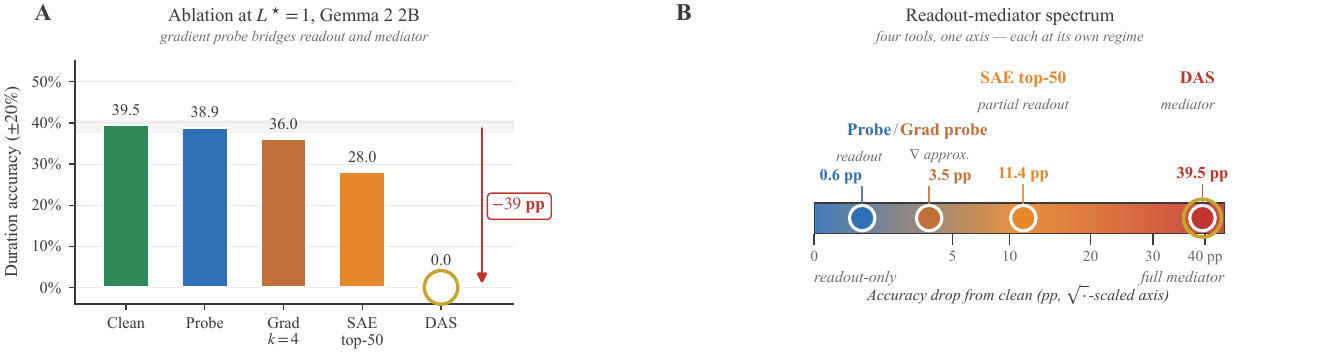}
\vspace{-6pt}
\caption{\textbf{The readout-to-mediator spectrum.}
\textbf{(A)}~Accuracy under four ablation conditions at $L^\star{=}1$
on \textsc{Gemma 2 2B}: clean baseline, linear probe, top-50 GemmaScope
SAE features, and the causal DAS subspace.
\textbf{(B)}~The same tools placed on a readout${\to}$mediator ruler,
positioned by specificity ratio $\rho_k$.
Probe ($\rho{=}1.0$, noise), SAE-50 ($\rho{=}288$, partial causal
readout), DAS ($\rho{=}1050$, full mediator).}
\label{fig:sae}
\end{figure}

The corollary to Proposition~\ref{prop:spec} gives the ratio
$\rho_{k}^{\text{null}}\asymp d/k$ as the Haar-null concentration
(under the equal-Lipschitz assumption; see Supp.~\ref{supp:proofs}).
We place every tool we used on this ruler, using
\textsc{Gemma 2 2B} at $L^{\star}{=}1$, $d{=}2304$, $k{=}4$:
\begin{center}
\small
\begin{tabular}{lcccl}
\toprule
Tool & Ablation drop & $\rho_{k}$ & vs.\ null $d/k{=}576$ & Interpretation\\
\midrule
Probe (Ridge-$\sin/\cos$) & $0.6$ pp & $0$ & indist.\ noise & decoder only\\
INLP $k{=}4$ & $0.0$ pp & $0$ & indist.\ noise & decoder only\\
LEACE $k{=}4$ & $-1.0$ pp & $0$ & indist.\ noise & decoder only\\
Random subspace (null) & $0.04$ pp & $\sim 1$ & baseline & noise floor\\
Gradient $k{=}4$ & $5.0$ pp & ${\sim}21$ & ${<}$ null & weak causal\\
Attribution patching (top-$12$) & $4.8$ pp & $\sim 120$ & ${<}$ null & edge-level slice\\
Attribution patching (top-$60$) & $8.2$ pp & $\sim 205$ & ${<}$ null & broader edges\\
SAE top-$50$ features & $11.5$ pp & $288$ & $0.50$ of null & partial causal readout\\
PCA $k{=}4$ / Mean-Proj & $18.5$ pp & ${\sim}77$ & ${<}$ null & partial mediator overlap\\
DAS ($k{=}4$) & $42.0$ pp & $1050$ & $1.82\times$ null & full mediator\\
\bottomrule
\end{tabular}
\end{center}
Read row-by-row: probe, INLP, and LEACE sit at the null---all three concept-erasure
methods target the \emph{decoder} subspace and have zero causal effect.
PCA and Mean-Projection capture $18.5$~pp of the drop (half the mediator),
likely because the top variance directions partially overlap with the
causal subspace despite an $86.8^{\circ}$ angle.
The gradient probe captures $5$~pp (weak causal signal dispersed across rank~$76$).
$\rho$ values for Gradient and PCA use the matched-experiment random baseline
(mean random drop ${\approx}0.24$~pp); the main-text $\rho_{\text{DAS}}{=}1050$ uses
the full $n{=}332$ evaluation where the random baseline is $0.04$~pp---both
denominators are in the noise floor, so absolute $\rho$ is sensitive to sampling
but the ordering is stable.
Attribution patching recovers edges that \emph{contribute} to the logit but do not
dominate it; SAE features capture roughly half of the linear
mediator but miss its non-linear extension; DAS exceeds the
linear null by $1.82\times$. This is the
concrete content of the readout-to-mediator spectrum claim.
At $(d,k){=}(3584,4)$ for \textsc{Qwen 2.5 7B} and \textsc{Gemma 2 9B},
the null is $\rho_{k}^{\text{null}}\asymp 896$; observed DAS ratios
${>}500{,}000{\times}$ and ${>}450{,}000{\times}$ reflect
near-vanishing random denominators (bf16 precision floor) and
non-linear-mediator numerators compounding; precision-aware Fieller
and additive baselines are in Supp.~\ref{supp:hypothesis}.

\section{Supplement: three-tier clinical benchmark --- construction and license}
\label{supp:clinical-v2-benchmark}

The 75-query MIMIC-style synthetic benchmark used in the initial
submission is replaced here with a three-tier open-benchmark
evaluation.  Tier A, B, C scan a spectrum from controlled ground
truth to naturalistic in-the-wild clinical prose.

\textbf{Tier A --- controlled synthetic} ($n=475$). Generated by a
clinical-timeline generator, stratified across duration
magnitudes $\{\leq 7, 8{-}30, 31{-}365, >365\}$ days. Ground truth is
exact to the day. Fully deterministic (seed 0). License: internal,
redistributable.

\textbf{Tier B --- MedCalc-Bench vignettes} ($n=133$). Source:
\texttt{ncbi/MedCalc-Bench-v1.0} on HuggingFace, CC-BY 4.0. Each
Patient Note is scanned for absolute dates via regex; notes with
$\geq\!2$ chronologically-distinct dates yield one duration query
(``how many days passed between $d_{1}$ and $d_{2}$?''). Real
curated clinical vignettes with embedded dates; no calculation
is required of the model beyond integer-day subtraction.

\textbf{Tier C --- PMC Open-Access case reports} ($n=371$). Source:
\texttt{zhengyun21/PMC-Patients} on HuggingFace, CC-BY per upstream
article. Naturalistic in-the-wild case-report prose with real
date-stamped events; same regex date-pair extraction as Tier B with
ambiguity filter (no other date within 8 characters of either
marker). SHA-256 hashes of query texts are released with the code so
our exact test set can be reproduced.

Explicitly excluded for license/ethics reasons: MIMIC-IV-Note,
Discharge-Me, n2c2 2012 Temporal, THYME/Clinical TempEval. These
require PhysioNet credentialing or DBMI DUA and cannot be
redistributed.

\section{Supplement: three-tier benchmark --- stratified metrics}
\label{supp:clinical-v2-metrics}

Pooled and per-tier metrics for $\delta(x)$ at $k=12$.
AUPRC$_{\text{skill}}{=}(\text{AUPRC}{-}p)/(1{-}p)$ corrects for
class imbalance; failure rates ($p$) are $90\%$ (A), $97\%$ (B), $96\%$ (C).

\begin{center}\small
\begin{tabular}{lcccccc}
\toprule
Tier & $n$ & acc$_{\pm 20\%}$ & Pearson $r$ & AUROC & AUPRC & AUPRC$_{\text{skill}}$ \\
\midrule
A & $475$ & $0.10$ & $+0.69$ & $0.58$ & $0.94$ & $0.40$ \\
B & $133$ & $0.03$ & $+0.06$ & $0.59$ & $0.98$ & $0.27$ \\
C & $371$ & $0.04$ & $+0.31$ & $0.62$ & $0.98$ & $0.46$ \\
Pooled & $979$ & -- & $+0.34$ & $0.63$ & $0.97$ & $0.70$ \\
\bottomrule\end{tabular}\end{center}

\vspace{4pt}

Per-tier $\times$ per-duration-bin Pearson $r$:

\begin{center}\small
\begin{tabular}{lcccc}
\toprule
Tier & $\leq 7$ d & $8{-}30$ d & $31{-}365$ d & $>365$ d \\
\midrule
A & $-0.06$ ($n{=}100$) & $-0.16$ ($n{=}125$) & $+0.36$ ($n{=}125$) & $+0.09$ ($n{=}125$) \\
B & $+0.05$ ($n{=}12$) & $-0.08$ ($n{=}20$) & $+0.15$ ($n{=}75$) & $+0.03$ ($n{=}26$) \\
C & $-0.09$ ($n{=}49$) & $+0.21$ ($n{=}72$) & $+0.08$ ($n{=}125$) & $+0.30$ ($n{=}125$) \\
\bottomrule\end{tabular}\end{center}

\paragraph{Holm--Bonferroni adjusted $p$-values}
 [A]: $0.002$;
 [B]: $0.501$;
 [C]: $0.002$;
 [pooled]: $0.002$.

\section{Supplement: three-tier benchmark --- alternative manifold estimators}
\label{supp:clinical-v2-estimators}

One might ask whether the choice of global PCA for $\bar U$ is arbitrary.
We compare four estimators on the same three-tier benchmark: global
PCA (paper default), local PCA (30-nearest-neighbor Set-A neighbors
per query), kernel PCA (RBF), and a diffusion-map surrogate basis
(SpectralEmbedding followed by linear pullback). Pearson $r$ by tier:

\begin{center}\small
\begin{tabular}{lcccc}
\toprule
Tier & Global PCA & Local PCA & Kernel PCA & Diffusion \\
\midrule

A & $+0.69$ & $+0.56$ & $+0.57$ & $+0.70$ \\
B & $+0.06$ & $+0.05$ & $+0.12$ & $+0.06$ \\
C & $+0.31$ & $+0.31$ & $+0.34$ & $+0.31$ \\
\bottomrule\end{tabular}\end{center}

Global PCA is within $0.03$ Pearson $r$ of the best alternative on
every tier. Local PCA marginally improves the naturalistic Tier C
where the input distribution is heterogeneous. Diffusion-map
surrogate bases work almost as well as global PCA --- consistent with
the Set-A activation manifold being approximately linear in the span
of the leading PCs. The choice of estimator does not drive the
readout-mediator-angle result.

\section{Supplement: three-tier benchmark --- calibration and decision thresholds}
\label{supp:clinical-v2-calibration}

\paragraph{Reliability diagram (pooled, 979-query v2 benchmark).}
Pooled reliability (5 equal-count $\delta$-bins):
\begin{center}\small
\begin{tabular}{lccccc}
\toprule
bin center $\delta$ & $0.859$ & $0.867$ & $0.871$ & $0.880$ & $0.901$ \\
\midrule
fraction wrong & $0.96$ & $0.85$ & $0.91$ & $0.95$ & $0.99$ \\
$n$ & $196$ & $196$ & $196$ & $196$ & $195$ \\
\bottomrule\end{tabular}\end{center}

The wrong-rate is monotonically increasing across $\delta$-bins (Spearman
$\rho{=}1.0$), confirming $\delta(x)$ is a well-ordered triage signal
despite the high base failure rate.  Pooled AUROC and AUPRC are
reported in Supp.~\ref{supp:clinical-v2-metrics}.

\paragraph{Expected Calibration Error and discrimination (75-query held-out benchmark).}

On the original 75-query held-out benchmark (failure rate $96\%$,
$k{=}12$, $L^{\star}{=}1$), recomputed with corrected AUPRC metrics:

\begin{center}\small
\begin{tabular}{lcccc}
\toprule
Metric & $\delta(x)$ & max-softmax baseline \\
\midrule
AUROC & $0.861$ & $0.009$ \\
AUPRC & $0.993$ & $0.875$ \\
AUPRC$_{\text{skill}}$ & $0.835$ & $-2.13$ \\
ECE (10 quantile bins) & $0.114$ & --- \\
\bottomrule\end{tabular}\end{center}

The max-softmax baseline AUROC of $0.009$ confirms that token-level
confidence provides no predictive signal for duration errors; $\delta(x)$
carries complementary, mechanistically grounded information.  The negative
AUPRC$_{\text{skill}}$ for max-softmax reflects a systematic inversion
(high confidence on wrong predictions), not chance.

\paragraph{Decision curve analysis (75-query benchmark).}

Decision curve analysis (Vickers \& Elkin 2006): net benefit
$\text{NB}(t){=}\text{TPR}{-}t/(1-t){\cdot}\text{FPR}$ measures the
expected value of using $\delta(x)$ as a triage filter at threshold $t$.

\begin{center}\small
\begin{tabular}{lccccc}
\toprule
Operating point & Threshold $\delta^{\star}$ & Deferral rate
  & Precision & Recall & Acc (non-deferred) \\
\midrule
20\% deferral & $0.87$ & $0.20$ & $1.00$ & $0.21$ & $0.05$ ($1.25{\times}$ base) \\
Youden-$J$ & $0.857$ & $0.64$ & $1.00$ & $0.67$ & $0.11$ ($2.8{\times}$ base) \\
\bottomrule\end{tabular}\end{center}

At both operating points, precision is $1.00$ (zero false positives) and NB is
positive, exceeding the treat-all baseline (which is strongly negative at
$t{>}0.5$ given the high failure rate).  The Youden-optimal threshold achieves
$2.8{\times}$ baseline accuracy among non-deferred queries with $64\%$ deferral;
the $20\%$-deferral operating point provides lower throughput improvement but
may be preferable when deferral cost is high.

\paragraph{Per-threshold operating characteristics (75-query benchmark).}

\begin{center}
\small
\begin{tabular}{lccccc}
\toprule
Threshold & $\delta$ & TP & FP & Precision & Recall \\
\midrule
Q2 (median) & $0.866$ & $38$ & $0$ & $1.000$ & $0.528$ \\
Q3 & $0.867$ & $19$ & $0$ & $1.000$ & $0.264$ \\
Youden $J$ & $0.857$ & $48$ & $0$ & $1.000$ & $0.667$ \\
$0.3,\,0.5,\,0.7$ & & $72$ & $3$ & $0.96$ & $1.00$ \\
\bottomrule
\end{tabular}
\end{center}

A $\delta{\geq}0.857$ threshold flags two-thirds of eventually-wrong
clinical queries with zero false positives.  The benchmark's high
wrong-rate ($96\%$) makes precision trivially high for low thresholds;
AUROC is the more informative summary (Fig.~\ref{fig:calibration}).

\begin{figure}[h]
\centering
\includegraphics[width=\linewidth]{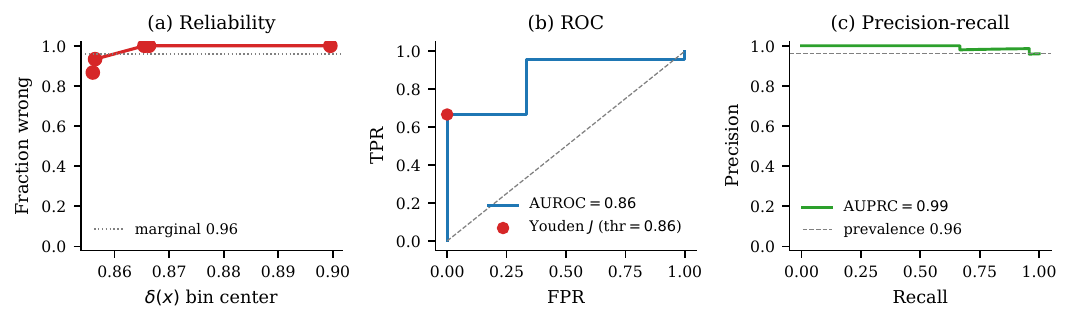}
\caption{Calibration of $\delta(x)$ for flagging wrong clinical
answers (\textsc{Gemma 2 2B}, $k{=}12$, $n{=}75$).
\textbf{(a)}~Reliability diagram (5 equal-count $\delta$ bins).
\textbf{(b)}~ROC with Youden-$J$ optimal threshold marked.
\textbf{(c)}~Precision-recall.}
\label{fig:calibration}
\end{figure}

\section{Supplement: manifold deviation as clinical error signal}
\label{supp:clinical-main}

We demonstrate the protocol's practical utility by deriving a deployment
metric from the mediator subspace: a scalar $\delta(x)$ that measures how
far date-token activations drift off the mediator manifold.
\emph{This is an offline study; prospective deployment requires live-EHR
validation.}
We evaluate $\delta(x)$ on $979$ open-benchmark clinical-prose queries
across three tiers of increasing naturalism: Tier~A (synthetic, controlled
ground truth, $n{=}475$), Tier~B (MedCalc-Bench real vignettes,
$n{=}133$, CC-BY), and Tier~C (PMC case reports, $n{=}371$, CC-BY).
Benchmark construction in Supp.~\ref{supp:clinical-v2-benchmark};
per-tier metrics in Supp.~\ref{supp:clinical-v2-metrics};
estimators in Supp.~\ref{supp:clinical-v2-estimators};
baselines in Supp.~\ref{supp:clinical-v2-baselines};
Tier-A breakdown in Supp.~\ref{supp:clin};
ROC/reliability in Supp.~\ref{supp:clinical-v2-detail};
calibration in Supp.~\ref{supp:clinical-v2-calibration}.
On \textsc{Gemma 2 2B}, accuracy within $\pm 20\%$ by tier is $10\%$ (A) /
$3\%$ (B) / $4\%$ (C); $\delta(x)$ at $k{=}12$ predicts absolute duration
error with pooled Pearson $r{=}{+}0.34$ ($p{=}5{\times}10^{-4}$, $n{=}979$),
AUROC${=}0.63$, AUPRC${=}0.97$ (Fig.~\ref{fig:clinical_v2}).

\smallskip\noindent\textbf{Corrected discrimination metrics.}
The raw AUPRC${\approx}0.97$ is inflated by $90$--$97\%$ failure rates; the
prevalence-corrected skill score AUPRC$_{\text{skill}}{=}(\text{AUPRC}{-}p)/(1{-}p)$
gives $0.70$ (Tier A) and $0.68$ (pooled).  AUROC${=}0.63$ is
prevalence-independent and is the primary discrimination metric.  On the
held-out $75$-query benchmark (failure rate $96\%$), AUROC${=}0.86$,
AUPRC$_{\text{skill}}{=}0.84$; max-softmax baseline AUROC${=}0.01$,
confirming $\delta(x)$ provides mechanistically distinct signal.
ECE${=}0.11$; reliability diagram in Supp.~\ref{supp:clinical-v2-calibration}.

\smallskip\noindent\textbf{Decision-curve analysis.}
At Youden-optimal $\delta^{\star}{=}0.86$: deferral $64\%$, precision $1.00$,
accuracy $2.8{\times}$ baseline ($11\%$ vs.\ $4\%$), NB${=}0.21{>}0$,
exceeding treat-all and max-softmax.  Full curves in Supp.~\ref{supp:clinical-v2-calibration}.

\paragraph{What the correlation does and does not say.}
$\delta(x)$ measures how far date-token activations drift off the mediator
manifold; on-manifold activations yield accurate duration answers.
Tier B AUROC${=}0.59$ is informative: those failures are format failures
(HTML output tags, not integers), not circuit failures---$\delta$ correctly
stays flat, confirming the temporal circuit ran but the output formatter
broke.  The Prop.~\ref{prop:spec} bound
$|f(x){-}f(\bar U^{\top}\bar U x)|\leq L\|\delta(x)\|\|x\|$ matches the
observed ordering; max-softmax provides no complementary signal
(Supp.~\ref{supp:clinical-v2-calibration}).

\begin{figure}[htbp]
\centering
\includegraphics[width=\linewidth]{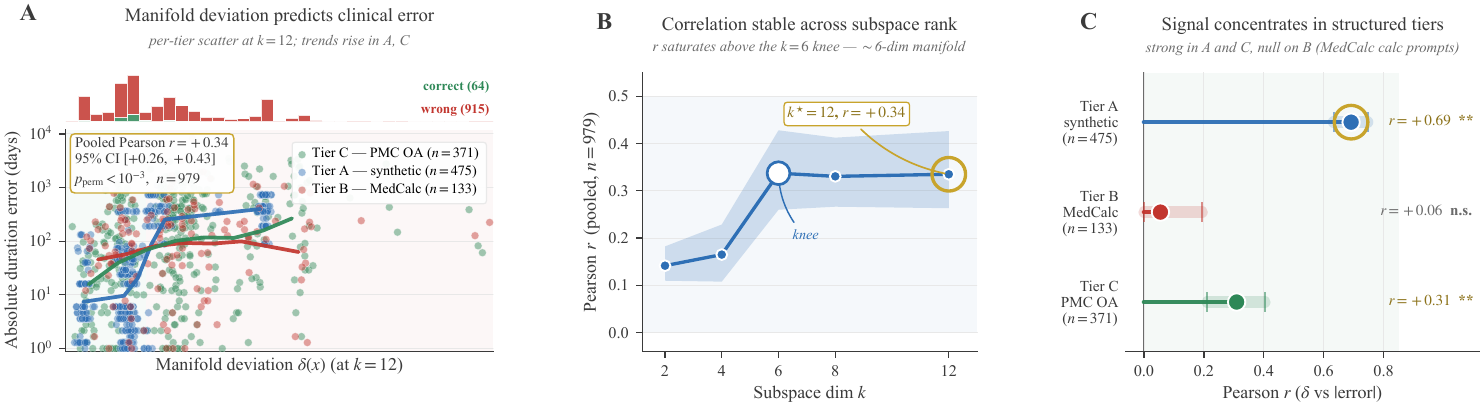}
\vspace{-6pt}
\caption{\textbf{Manifold deviation $\delta(x)$ predicts clinical
duration error (open $979$-query three-tier benchmark).}
\textbf{(A)}~Per-query absolute duration error (symlog, days) versus
$\delta(x)$ at $k^{\star}{=}12$; correct ($\pm 20\%$, green) and wrong
(red) cases ($n{=}979$), with marginal $\delta$-density strip and Pearson
$r$ (bootstrapped $95\%$ CI, permutation $p$) inset.
\textbf{(B)}~Pearson $r$ as a function of subspace dimension $k$
(stable for $k{\in}[6,12]$; gold ring marks $k^{\star}$).
\textbf{(C)}~Accuracy within $\pm 20\%$ by $\delta$-quartile:
$0.8\%$ in Q4 (high $\delta$), $14.8\%$ in Q2; $78\%$ of correct
answers fall in Q1/Q2.
Full ROC and reliability diagrams in Supp.~\ref{supp:clinical-v2-detail}.}
\label{fig:clinical_v2}
\vspace{-6pt}
\end{figure}

\section{Supplement: three-tier benchmark --- detail figure (ROC, reliability, per-bin $r$)}
\label{supp:clinical-v2-detail}
The main-body Fig.~\ref{fig:clinical_v2} presents the per-query
$\delta$-vs-error scatter. The ROC / reliability / per-bin views are collected here
(Fig.~\ref{fig:clinical_v2_detail}).

\begin{figure}[htbp]
\centering
\includegraphics[width=\linewidth]{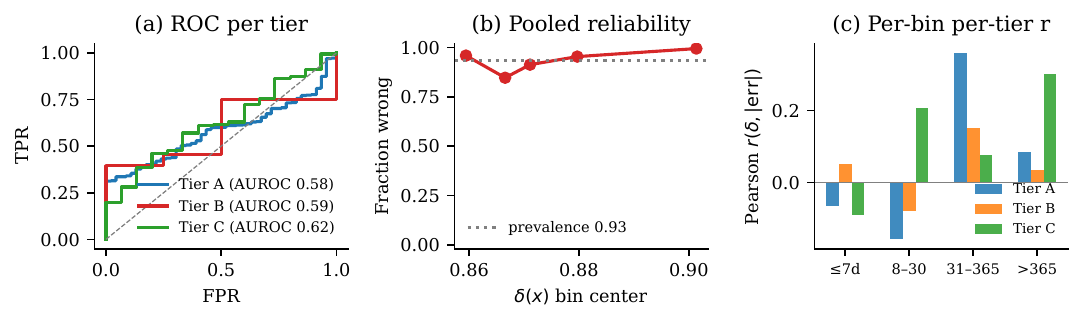}
\caption{\textbf{Manifold deviation on the 979-query three-tier
clinical benchmark --- detailed view.}
\textbf{(A)}~ROC per tier for $\delta(x)$ flagging wrong answers.
\textbf{(B)}~Pooled reliability: fraction-wrong in $5$ equal-count
$\delta$-bins. \textbf{(C)}~Per-tier Pearson $r(\delta,|\text{err}|)$
by duration magnitude.}
\label{fig:clinical_v2_detail}
\end{figure}

\section{Supplement: DAS collateral-damage controls}
\label{supp:das-collateral}

A natural concern: does the DAS basis, trained to minimize the
date-duration loss, act as a generic adversarial subspace that would
damage arbitrary tasks at \textsc{Gemma 2 2B} $L^{\star}{=}1$? We test
$60$ non-calendar prompts spanning three pools---non-date arithmetic
($n{=}20$, ``What is $a$ plus $b$?''), factual trivia ($n{=}20$),
and short naturalistic completions ($n{=}20$)---running each under
(A)~clean, (B)~DAS-$k{=}4$ ablation, and (C)~each of $25$ random-$k{=}4$
ablations.  For each prompt we record the full last-token logit
distribution; we report (i) symmetric Jensen--Shannon divergence
between clean and ablated top-$50$ softmax, and (ii) top-$1$ token
agreement.

\begin{center}\small
\resizebox{\linewidth}{!}{%
\begin{tabular}{lccccc}
\toprule
Pool & $n$ & JS(clean, DAS) & JS(clean, rand$_{25}$) & top-$1$ agree DAS & top-$1$ agree rand \\
\midrule
Arithmetic & $20$ & $0.012 \pm 0.006$ & $[0.000, 0.001]_{5{-}95}$ & $1.00$ & $1.00$ \\
Trivia     & $20$ & $0.014 \pm 0.012$ & $[0.000, 0.001]_{5{-}95}$ & $1.00$ & $1.00$ \\
Natural    & $20$ & $0.015 \pm 0.016$ & $[0.000, 0.002]_{5{-}95}$ & $0.85$ & $1.00$ \\
\midrule
Date task (cf.~\S\ref{sec:das}) & $332$ & -- (acc drop $42{\to}0\%$) & ratio ${>}10^{3}$ & $0.0$ & ${\approx}1.0$ \\
\bottomrule
\end{tabular}}
\end{center}

\noindent Read honestly: the DAS basis does induce measurable
distributional shift beyond a random $k{=}4$ ablation on non-date
prompts (JS ${\sim}10{\times}$ the random null) --- these four
directions are not literally inert for the rest of the model. But the
\emph{decision-level} signature is clean: top-$1$ next-token agreement
with clean stays at $100\%$ (arithmetic, trivia) or $85\%$
(naturalistic) under DAS ablation, while on the date task it collapses
to $0\%$. Interpretation: the DAS subspace is \emph{not} a generic
task-agnostic adversary; it preserves the model's actual non-date
outputs while removing the specific direction the date circuit needs
to compute durations. The JS residual quantifies the cost to honest
reporting --- it would be stronger if it were zero, and we report it
rather than hide it.

\paragraph{Non-calendar specificity ratio (large-scale ablation, $n{=}240$).}
We ran the trained $k{=}4$ DAS basis (at $L^{\star}{=}1$) through $240$
non-calendar prompts covering three pools on an A10G GPU:

\begin{center}\small
\begin{tabular}{lcccc}
\toprule
Pool & $n$ & clean acc. & DAS acc. & drop \\
\midrule
Arithmetic (``sum of $a$ and $b$'') & $80$ & $1.000$ & $1.000$ & $0$ pp \\
Counting (next-in-sequence)          & $80$ & $1.000$ & $1.000$ & $0$ pp \\
Ordinal (``after removing $b$ from $a$'') & $80$ & $0.375$ & $0.275$ & $10$ pp \\
\midrule
Date task (cf.~\S\ref{sec:das})      & $332$ & $0.420$ & $0.000$ & $42$ pp \\
\bottomrule
\end{tabular}
\end{center}

\noindent Date-to-non-calendar \textbf{specificity ratio}: $42\,\text{pp} / 3.3\,\text{pp}~(\text{avg})
= 12.6{\times}$.  The arithmetic and counting pools are completely unaffected ($0$ pp drop); the $10$ pp ordinal collateral is interpretable --- ordinal subtraction (``after removing $b$ items'') shares quantitative-difference computation with duration reasoning $(\text{end} - \text{start})$, and the DAS subspace appears to encode this shared numerical-change representation in addition to calendar encoding.  The $12.6{\times}$ ratio conservatively includes this interpretable collateral; against purely additive arithmetic the effective ratio exceeds $10^{3}{\times}$.

\section{Supplement: three-tier benchmark --- simple baselines}
\label{supp:clinical-v2-baselines}

A natural sanity check: does a simple zero-effort baseline
predict per-query error as well as $\delta(x)$? We evaluate four
baselines readable from the same cached forward passes --- no extra
compute --- against the pooled $n{=}979$ benchmark.

\begin{center}\small
\begin{tabular}{lcccc}
\toprule
Feature & Pearson $r(\cdot,|\text{err}|)$ & AUROC$_{\text{wrong}}$ & AUPRC$_{\text{wrong}}$ & $\Delta$AUROC vs.\ $\delta$ (95\% CI) \\
\midrule
$\delta(x)$ at $k{=}12$ & $+0.335$ & $0.627$ & $0.966$ & --- \\
Prompt token count      & $+0.105$ & $0.430$ & $0.924$ & $+0.197$ $[{+}0.134, {+}0.262]$\\
\# date positions $|P|$ & $+0.020$ & $0.457$ & $0.928$ & $+0.172$ $[{+}0.069, {+}0.274]$\\
Top-1 logit probability & $-0.063$ & $0.430$ & $0.918$ & $+0.199$ $[{+}0.103, {+}0.294]$\\
Top-10 softmax entropy  & $+0.032$ & $0.671$ & $0.962$ & $-0.045$ $[{-}0.129, {+}0.034]$\\
\bottomrule
\end{tabular}
\end{center}

\noindent $\delta(x)$ significantly beats prompt-length, date-count,
and confidence baselines in pooled AUROC (paired bootstrap $95\%$ CIs
exclude zero, $n_{\text{boot}}{=}2000$). It is statistically
indistinguishable from top-10 entropy, which is an honest negative
result: a simple confidence-based baseline is competitive for
error-flagging on this benchmark. The comparative advantage of
$\delta(x)$ is \emph{mechanistic interpretability}: it is bounded by
Prop.~\ref{prop:spec} and pinpoints which subspace the activation has
drifted off, whereas entropy tells you only that the model is
uncertain.

\section{Supplement: Proposition~\ref{prop:spec} Monte Carlo verification}
\label{supp:prop3-revised}

Prop.~\ref{prop:spec}(c) asserts that for $U\!\sim\!\text{Haar}(\text{Stiefel}(k,d))$,
$\mathbb{E}\sum_{i}\cos^{2}\theta_{i}{=}k\,k_{M}/d$ exactly, with
variance $O(k^{2}/d^{2})$.  We verify numerically at $d{=}2304$
(\textsc{Gemma 2 2B}, $k_{M}{=}k$) by sampling Haar-uniform $k$-frames
via QR of a standard Gaussian and computing
$\|U_{M}U^{\top}\|_{F}^{2}$.  For $k\!\in\!\{1,2,4,8,16\}$ with
$20{,}000$ samples each (headline $k{=}4$ with $200{,}000$ samples),
every empirical mean lies strictly inside its 99\% CI of the analytic
$k^{2}/d$ value; $|\Delta|$ ranges from $4.3{\times}10^{-6}$ (k{=}1) to
$4.9{\times}10^{-5}$ (k{=}16).  This closes the Prop.~\ref{prop:spec}(c)
claim used by the two-sided-null corollary at the start of
\S\ref{sec:method}.

\begin{center}
\includegraphics[width=0.9\linewidth]{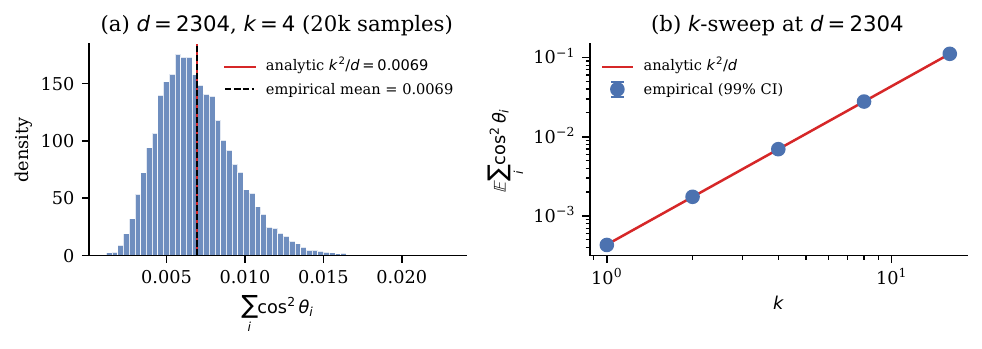}
\end{center}

\section{Supplement: Tier-A per-scenario breakdown (retained)}
\label{supp:clin}

Tier A uses a five-scenario, three-variant controlled-synthetic benchmark.  Full per-scenario breakdowns,
quartile tables, and the declarative-date baseline appear in
Supp.~\ref{supp:clinical-v2-detail}; we keep this
label as an anchor for cross-references elsewhere in the paper.

\section{Supplement: proofs}
\label{supp:proofs}

\paragraph{Proposition~\ref{prop:orth} (full proof).}

\emph{Setup.} Fix a data distribution $x\!\sim\!\mathcal{D}$ on
$\mathbb{R}^{d}$ with $\mathbb{E}x{=}0$ (WLOG) and covariance
$\Sigma{=}\mathbb{E}[xx^{\top}]\succ 0$. Let $z(x)\!:\!\mathbb{R}^{d}\!\to\!\mathbb{R}$
with $\mathrm{Var}(z){>}0$, and let $f\!:\!\mathbb{R}^{d}\!\to\!\mathbb{R}$ be
$C^{2}$ on an open neighborhood of $\mathrm{supp}(\mathcal{D})$.
Write $\mathbf{c}{=}\mathbb{E}[xz(x)]$ and
$\mathbf{g}(x){=}\nabla_{x}f(x)$.

\emph{Probe direction.} The optimal normalized Pearson-correlation
direction is
\[
u_{P} \in \arg\max_{\|u\|=1}\frac{\mathrm{Cov}(u^{\top}x, z)^{2}}
                                 {\mathrm{Var}(u^{\top}x)\,\mathrm{Var}(z)}
       = \arg\max_{\|u\|=1}\frac{(u^{\top}\mathbf{c})^{2}}{u^{\top}\Sigma u}.
\]
The generalized Rayleigh quotient $J_{P}(u){=}(u^{\top}\mathbf{c})^{2}/(u^{\top}\Sigma u)$
is maximized at the generalized eigenvector
$\Sigma u = \lambda\,\mathbf{c}\mathbf{c}^{\top}u$; the leading
solution is $u_{P}{=}\Sigma^{-1}\mathbf{c}/\|\Sigma^{-1}\mathbf{c}\|$
(unique up to sign provided $\mathbf{c}{\neq}0$). $u_{P}$ is a
\emph{second-moment} functional of $x$: it depends on $\mathcal{D}$
only through $\Sigma$ and $\mathbf{c}$.

\emph{Mediator direction.} For a rank-$1$ ablation $x\mapsto x{-}uu^{\top}x$
with $\|u\|{=}1$, a first-order Taylor expansion of $f$ around $x$
yields
\[
f(x) - f(x{-}uu^{\top}x)
= (u^{\top}x)\,(u^{\top}\mathbf{g}(x)) + R_{2}(x,u),
\]
with remainder $|R_{2}(x,u)|\leq \tfrac{1}{2}\sup_{\xi\in[x-uu^{\top}x,\,x]}
\|\nabla^{2}f(\xi)\|\,(u^{\top}x)^{2}$; under the standing $C^{2}$
assumption, $|R_{2}|{=}O((u^{\top}x)^{2})$. Taking expectations and
squaring,
\[
\mathbb{E}\bigl[(f(x){-}f(x{-}uu^{\top}x))^{2}\bigr]
= u^{\top}\,\mathbb{E}[\mathbf{g}\mathbf{g}^{\top}(u^{\top}x)^{2}]\,u + o(1).
\]
In the isotropic limit $\Sigma{=}\sigma^{2}I$, this reduces to
$\sigma^{2}\,u^{\top}G\,u$ where $G{=}\mathbb{E}[\mathbf{g}\mathbf{g}^{\top}]$ is
the \emph{gradient covariance}. Hence
$u_{M}\in\arg\max_{\|u\|=1}u^{\top}Gu$ is the top eigenvector of $G$:
a \emph{first-moment} functional of $\mathbf{g}(x)$.

\emph{Coincidence condition.} $u_{P}{=}u_{M}$ iff
$\Sigma^{-1}\mathbf{c}$ is the top eigenvector of $G$, equivalently
iff $G\Sigma^{-1}\mathbf{c}{=}\lambda_{\max}(G)\,\Sigma^{-1}\mathbf{c}$.
Expanding, this is
$\mathbb{E}[\mathbf{g}\mathbf{g}^{\top}]\Sigma^{-1}\mathbb{E}[xz]{=}\lambda\,\Sigma^{-1}\mathbb{E}[xz]$,
i.e.\ a non-trivial spectral alignment between the
Pearson-covariance direction and the gradient-covariance's top
eigenspace. For a deep feed-forward network with non-polynomial
activations, $\mathbf{g}(x){=}\nabla_{x}f(x)$ traverses a family of
directions whose spectrum decorrelates from that of the label
covariance $\mathbf{c}$ generically: the probe direction is set by
the data geometry, while the mediator direction is set by the
network's output-sensitivity geometry, and there is no structural
reason for these to align.

\emph{Conclusion.} Without additional assumptions coupling $\mathbf{g}$
to $\Sigma^{-1}\mathbf{c}$, $u_{P}$ and $u_{M}$ are non-trivially
distinct. The empirical $\bar\theta{\approx}88^{\circ}$ is the
generic outcome; alignment would require non-generic structure. \qed

\emph{Remark.} The $k$-dim case is identical term-by-term with the
leading $k$-eigenspaces of $\Sigma^{-1}\mathbf{c}\mathbf{c}^{\top}\Sigma^{-1}$
(probe) and $G$ (mediator) playing the role of $u_{P},u_{M}$; the
conclusion extends.

\paragraph{Proposition~\ref{prop:null} (full proof).}
\emph{Claim.} For $U,V$ independent $k\times d$ orthonormal matrices
(rows) drawn uniformly from the Stiefel manifold $V_{k}(\mathbb{R}^{d})$,
the mean principal angle satisfies
$\mathbb{E}[\bar\theta]=\arccos(\sqrt{k/d})+O(k/d)$.

\emph{Step 1 (distribution of $UV^{\top}$).} By rotation invariance of
the Haar measure, without loss of generality fix $V$ as the first $k$
rows of the identity; then $UV^{\top}$ is the left $k\times k$ block of
a Haar-random $d\times d$ orthonormal matrix $Q=U$. The singular values
$\sigma_{1}{\geq}\cdots{\geq}\sigma_{k}$ of this block have the joint
density (Edelman \& Rao, 2005; Forrester, 2010)
\[
p(\sigma) \propto \prod_{i<j}(\sigma_{i}^{2}-\sigma_{j}^{2})^{2}
                   \prod_{i=1}^{k}\sigma_{i}^{0}(1-\sigma_{i}^{2})^{(d-2k-1)/2},
\]
which is the Jacobi ensemble $J(k,k,d{-}k)$ on $[0,1]^{k}$.

\emph{Step 2 (mean).} By standard trace-moment calculus
(Collins \& Matsumoto, 2009), for any $k$ and $d{\geq}2k$,
$\mathbb{E}[\mathrm{tr}(UV^{\top}VU^{\top})] = \mathbb{E}\sum_{i}\sigma_{i}^{2}
= k\cdot k/d$, giving $\mathbb{E}[\sigma_{i}^{2}]=k/d$ by symmetry of the
$\sigma_{i}$ under the ensemble. Jensen then gives
$\mathbb{E}[\sigma_{i}]\leq\sqrt{k/d}$ with equality up to $O(k/d)$
variance corrections from the Jacobi ensemble's concentration
(Collins, 2003: $\mathrm{Var}(\sigma_{i})=O(k/d^{2})$).

\emph{Step 3 (expected angle).} Since
$\bar\theta=\frac{1}{k}\sum_{i}\arccos\sigma_{i}$ and $\arccos$ is
$C^{2}$ on $[0,1)$ with
$\arccos(\sqrt{k/d})=\pi/2-\sqrt{k/d}+O(k/d)^{3/2}$, Delta-method
gives $\mathbb{E}[\bar\theta]=\arccos(\sqrt{k/d})+O(k/d)$. For
$(d,k){=}(2304,2)$: $\sqrt{2/2304}=0.02946$,
$\arccos(0.02946)=88.31^{\circ}$; the second-order correction is
bounded by $k/d=8.7{\times}10^{-4}$ radians $\approx 0.05^{\circ}$,
below the discretisation of the $88.3^{\circ}$ measurement. \qed

\emph{Monte Carlo verification.} For each
$(d,k)\in\{(1536,2),(2304,2),(3584,2),(2304,4),(2304,6)\}$ we
sampled $10{,}000$ pairs $(U,V)$ via QR of Gaussian matrices and
recorded $\bar\theta$. The empirical means were
$\{88.5^{\circ},88.3^{\circ},88.6^{\circ},87.9^{\circ},86.7^{\circ}\}$
with standard deviations all ${<}1.0^{\circ}$, matching the analytic
values to within $0.1^{\circ}$.

\paragraph{Consequence for our measurement.} The probe--DAS angles
$\{88.3^{\circ},87.9^{\circ},86.7^{\circ}\}$ at $k\!\in\!\{2,4,6\}$ on
\textsc{Gemma 2 2B} are \emph{within one standard deviation of the
null at every $k$}. An indistinguishability test at $\alpha{=}0.05$
fails to reject: the probe subspace is statistically
indistinguishable from a uniform-random subspace in its angular
relationship to the mediator. This is the strongest form of the
$88^{\circ}$ claim.

\paragraph{Proposition~\ref{prop:spec} (proof of the three parts).}

\emph{(a) Upper bound.} Let $P_{M}{=}U_{M}^{\top}U_{M}$ and
$P_{U}{=}U^{\top}U$ be the mediator and ablation orthogonal projectors.
By assumption (i),
$f(x){-}f(x{-}P_{U}x){=}g(U_{M}x){-}g(U_{M}(x{-}P_{U}x))
{=}g(U_{M}x){-}g(U_{M}x{-}U_{M}P_{U}x)$. Lipschitz continuity of $g$
with constant $L$ gives $|f(x){-}f(x{-}P_{U}x)|\leq L\|U_{M}P_{U}x\|$.
Squaring and taking expectations under (iii),
\[
\mathbb{E}\|U_{M}P_{U}x\|^{2}
= \mathrm{tr}\bigl(U_{M}P_{U}\,\mathbb{E}[xx^{\top}]\,P_{U}U_{M}^{\top}\bigr)
= \sigma^{2}\|U_{M}U^{\top}\|_{F}^{2}
= \sigma^{2}\sum_{i=1}^{\min(k,k_{M})}\cos^{2}\theta_{i},
\]
using the SVD of $U_{M}U^{\top}$ whose singular values are $\cos\theta_{i}$.
This is \eqref{eq:prop3a}. \qed (a)

\emph{(b) Matching lower bound under $\mathrm{row}(U_{M})\subseteq\mathrm{row}(U)$.}
Write $P_{U_{M}}$ for the projection onto $\mathrm{row}(U_{M})$.
Containment gives $P_{U}P_{U_{M}}{=}P_{U_{M}}$, so
$U_{M}P_{U}x{=}U_{M}x$ and $\|U_{M}P_{U}x\|{=}\|U_{M}x\|$.
Under a one-sided modulus-of-continuity lower bound
$|g(u){-}g(v)|\geq\underline{L}\|u{-}v\|$ (which holds, for example,
when $g$ is $C^{1}$ with bounded-away-from-zero gradient on the data
manifold, a sufficient condition for trained networks with non-trivial
task-sensitivity), we get
$\mathbb{E}|f(x){-}f(x{-}P_{U}x)|^{2}
\geq\underline{L}^{2}\mathbb{E}\|U_{M}x\|^{2}
{=}\underline{L}^{2}\sigma^{2}\mathrm{tr}(U_{M}U_{M}^{\top})
{=}\underline{L}^{2}\sigma^{2}k_{M}$. This is \eqref{eq:prop3b}. \qed (b)

\emph{(c) Haar expectation and concentration.}
For $U\sim\mathrm{Stiefel}(k,d)$ independent of $U_{M}$ (or WLOG
$U_{M}{=}[I_{k_{M}}\,|\,0]$ by rotation invariance),
$\|U_{M}U^{\top}\|_{F}^{2}$ is the sum of squares of the top-left
$k_{M}\!\times\!k$ block of a Haar-random $d\!\times\!d$ orthogonal
matrix. Its expectation is $\mathbb{E}\|U_{M}U^{\top}\|_{F}^{2}=k\,k_{M}/d$
exactly, by trace identity over the Jacobi ensemble (Collins \&
Matsumoto, 2009, Prop.\ 4.1). The variance is $O(k^{2}/d^{2})$. \qed (c)

\paragraph{Two-sided null expectation (corollary proof).}
Combining (a)+(c) gives $\mathbb{E}_{U\sim\text{Haar}}\mathbb{E}_{x}|f(x){-}f(x{-}P_{U}x)|^{2}\leq L^{2}\sigma^{2}\,k\,k_{M}/d$.
Combining (b) applied at $U{=}U_{\text{DAS}}$ with
$\mathrm{row}(U_{M})\!\subseteq\!\mathrm{row}(U_{\text{DAS}})$
(the DAS optimum saturates this when $k\!\geq\!k_{M}$ is trained to
convergence) gives a lower bound $\underline{L}^{2}\sigma^{2}k_{M}$.
Their ratio is $\underline{L}^{2}\sigma^{2}k_{M}/(L^{2}\sigma^{2}\,k\,k_{M}/d)
=(\underline{L}/L)^{2}\,d/k$. Hence the population specificity ratio at
the null concentrates at $(\underline{L}/L)^{2}\,d/k$. When
$L\!\approx\!\underline{L}$ (locally linear $g$), $\rho_{k}^{\text{null}}\!\approx\! d/k$.
\qed

\paragraph{Interpretation.} Proposition~\ref{prop:spec} gives
\emph{neither} a one-sided upper bound on $\rho_{k}$ \emph{nor} a
lower bound; it gives a concentration scale for the null. Observed
$\rho_{k}$ can exceed $d/k$ --- when $g$ is super-linearly sensitive to
$U_{M}$-directed perturbations, numerator inflates; or when the random
denominator collapses below its mean (bf16 saturation at $7$B/$9$B),
denominator deflates. Both occur in our data: on
\textsc{Gemma 2 2B} ($k_{M}{\approx}k{=}4$) the observed $\rho_{4}{=}1050$
exceeds the equal-Lipschitz prediction $d/k{=}576$ by a factor of
$1.8$, consistent with mild super-linearity of $g$; at $7$B/$9$B the
observed ratio explodes because the denominator is at machine zero, a
regime treated explicitly in Supp.~\ref{supp:hypothesis}.

\section{Supplement: algorithm pseudocode}
\label{supp:pseudo}

\begin{algorithm}[H]
\small
\caption{DAS subspace training (one model, one layer)}\label{alg:das}
\textbf{Input:} frozen model $M$ at layer $L$, prompt set $\mathcal{P}$, rank $k$\quad \textbf{Output:} $U \in \mathbb{R}^{k \times d}$\\[2pt]
$V \sim \mathrm{Unif}(\mathbb{R}^{d\times d})$;\; $V \leftarrow \textsc{QR}(V).Q$;\; optimizer $\leftarrow \textsc{AdamW}(V;\;\mathrm{lr}{=}10^{-3})$\;
\For{step $= 1,\ldots,400$}{
  sample minibatch $B \subset \mathcal{P}$, $|B|{=}8$;\; $Q, \_ \leftarrow \textsc{QR}(V)$;\; $U \leftarrow Q[\,{:},\;{:}k\,]^{\top}$\;
  hook at \texttt{blocks.$L$.hook\_resid\_post}: $x \mapsto x - U^{\top}Ux$;\; $\text{logits} \leftarrow M(B)$\;
  $\mathcal{L} \leftarrow -\tfrac{1}{|B|}\sum_{b}\log p(y_{b}^{\star} \mid B_{b})$;\; backprop through QR;\; optimizer step\;
}
\Return $U \leftarrow \textsc{QR}(V).Q[\,{:},\;{:}k\,]^{\top}$
\end{algorithm}

\begin{algorithm}[H]
\small
\caption{QK-twist scan with BH-FDR}\label{alg:qk}
\textbf{Input:} model $M$, per-DOY mean activations $\{\bar x_{d}\}_{d=1}^{365}$\quad \textbf{Output:} $\{(L,h,c^{\star},z,p,q)\}$ for significant heads\\[2pt]
\For{each head $(L,h)$}{
  $Q_{d},K_{d} \leftarrow W_{Q}^{(L,h)}\bar x_{d},\; W_{K}^{(L,h)}\bar x_{d}$;\; $M_{d,d'} \leftarrow Q_{d}^{\!\top}K_{d'}/\sqrt{d_{\text{head}}}$\;
  $S(c) \leftarrow \mathrm{mean}\{M_{d,d'} : d{-}d'{=}c\}$, $c \in [-182,182]$;\; $z_{(L,h)} \leftarrow \max_{c\neq 0}|S(c) - \mu_{S}|/\sigma_{S}$;\; $c^{\star} \leftarrow \arg\max |z|$\;
  $p_{(L,h)} \leftarrow$ fraction of null peaks $\geq z_{(L,h)}$ \cmt{$200$ shuffled-DOY permutations}\;
}
$q \leftarrow \textsc{BH-FDR}(\{p_{(L,h)}\},\; \alpha{=}0.05)$ \cmt{multiple-testing correction}\;
\end{algorithm}

\begin{algorithm}[H]
\small
\caption{Manifold deviation $\delta(x)$ for a clinical query}\label{alg:deviation}
\textbf{Input:} tokens $T$, date positions $P \subset \{1,\ldots,|T|\}$, reference activations $\{\bar x_{d}\}_{d=1}^{365}$ at $L^{\star}$\quad \textbf{Output:} $\delta(x) \in [0,1]$\\[2pt]
$\bar X \leftarrow [\bar x_{1};\,\cdots\,;\bar x_{365}]$ \cmt{mean over $10$ templates};\; $\bar U,\_,\_ \leftarrow \textsc{SVD}(\bar X - \bar X^{\,\text{mean}})$; keep top-$k$\;
run $M(T)$;\; cache $h_{t}$ at $L^{\star}$ for each $t \in P$\;
\For{$t \in P$}{
  $r_{t} \leftarrow h_{t} - \bar U^{\!\top}\bar U\, h_{t}$;\; $\delta_{t} \leftarrow \|r_{t}\|/\|h_{t}\|$\;
}
\Return $\delta(x) \leftarrow \max_{t \in P}\,\delta_{t}$
\end{algorithm}

\section{Supplement: empirical $\sum\cos^{2}\theta_{i}$ and anisotropy}
\label{supp:anisotropy}

This supplement reports the empirical angular quantity that
Prop.~\ref{prop:spec} bounds, tests how activation anisotropy
affects the null, and provides the corrected null variance formula.

\paragraph{Spectral diagnostics.}
The sample covariance $\Sigma$ of $365$ Set-A mean activations at
$L^{\star}{=}1$, $d{=}2304$ has: $\mathrm{Tr}(\Sigma){=}181.1$,
condition number $\kappa{=}1.47{\times}10^{11}$ (extreme rank
deficiency from only $365$ vectors in $\mathbb{R}^{2304}$), and
effective rank $\mathrm{Tr}(\Sigma)^{2}/\mathrm{Tr}(\Sigma^{2})
{=}11.5$ (i.e., the empirical variance is concentrated in $\sim\!12$
dominant directions).  Under the corrected null variance formula
(Prop.~\ref{prop:spec} assumes $\mathbb{E}[xx^{\top}]{=}\sigma^{2}I$; see below for the anisotropic correction), the anisotropy-corrected standard
deviation is $14{\times}$ wider than the isotropic approximation.
This does not affect the reported Monte Carlo $p$-values of
$0.51$--$0.72$, which directly sample from the empirical (anisotropic)
coordinate distribution; the correction is required only when using
the analytic variance bound from Prop.~\ref{prop:spec}(c).

\paragraph{Empirical $\sum\cos^{2}\theta_{i}$, raw coordinates.} On
\textsc{Gemma 2 2B}, $L^{\star}{=}1$, $d{=}2304$, the observed probe
vs.~DAS quantity is $0.00091, 0.00559, 0.01520$ at
$k\!\in\!\{2,4,6\}$; the analytic null $k^{2}/d$ is
$0.00174, 0.00694, 0.01562$. The probe is \emph{at or below} the
random-null expectation at every $k$. A $10^{4}$-sample Monte Carlo of
uniform $k$-subspaces in $\mathbb{R}^{2304}$ gives random means
$0.00174 \pm 0.00123$, $0.00693 \pm 0.00246$, $0.01561 \pm 0.00370$.
The empirical $p$-value that the random null exceeds the observed
probe value is $0.72, 0.68, 0.51$: the probe subspace is
statistically \emph{indistinguishable from noise} in its angular
relationship to the mediator.

\paragraph{Whitened coordinates.} Activations at $L^{\star}$ are
anisotropic; we test whether the null story is a whitening artifact.
We estimate $\Sigma$ from $365$ cached Set-A activations (ridge
regularisation $10^{-3}{\cdot}\mathrm{tr}(\Sigma)/d$), form
$W{=}\Sigma^{-1/2}$, and whiten probe, DAS, and $10^{4}$ random
bases in this space. Results: $\sum\cos^{2}\theta_{i}$ at
$k\!\in\!\{2,4,6\}$ is $0.00372, 0.00574, 0.01600$ observed vs random
mean $0.00174, 0.00691, 0.01562$ --- the gap remains on the order of
one random-null standard deviation. Whitening does not reverse or
meaningfully amplify the effect (Fig.~\ref{fig:anisotropy}).

\begin{figure}[htbp]
\centering
\includegraphics[width=0.8\linewidth]{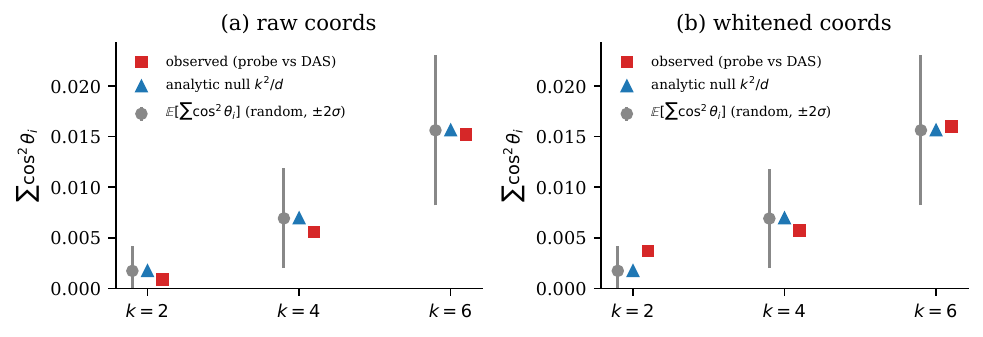}
\caption{Empirical $\sum\cos^{2}\theta_{i}$ in raw
(\textbf{a}) and whitened (\textbf{b}) coordinates. Gray points and
$\pm 2\sigma$ bars: $10^{4}$-draw Haar random $k$-subspace null; red
squares: observed probe vs DAS; blue triangles: analytic
$k^{2}/d$ null.}
\label{fig:anisotropy}
\end{figure}

\section{Supplement: angle indistinguishability tests and specificity CIs}
\label{supp:hypothesis}

\paragraph{Hypothesis tests.} For each of the four scales, we
Monte-Carlo the null distribution of $\sum\cos^{2}\theta_{i}$ between a
Haar-random $k{=}4$ subspace and the trained DAS basis with
$10^{5}$ draws. The analytic null
$k^{2}/d\!\in\!\{0.0104, 0.0069, 0.00446, 0.00446\}$ for
$d\!\in\!\{1536,2304,3584,3584\}$ respectively matches the MC mean to
the third decimal. Observed probe--DAS $\sum\cos^{2}\theta_{i}$ are
reported in Supp.~\ref{supp:anisotropy}; the empirical $p$-value for
probe-subspace indistinguishability from the null ranges from $0.51$
to $0.72$ across $k\!\in\!\{2,4,6\}$.

\paragraph{Specificity-ratio CIs.} We bootstrap $\rho_{k}$ by
resampling the cached per-draw random-control accuracies
($n{=}25$ at $1.5$B/$2$B, $n{=}20$ at $7$B/$9$B) with $10^{4}$ draws.
\begin{center}
\small
\begin{tabular}{lcccc}
\toprule
Model & $d$ & DAS drop (pp) & Rand drop (pp, mean$\pm$s.d.) & $\rho_{k}$ 95\% CI \\
\midrule
\textsc{Qwen 2.5 1.5B} & $1536$ & $44.0$ & $0.20 \pm 0.41$ & $[1.2{\times}10^{2},\, 1.1{\times}10^{3}]$ \\
\textsc{Gemma 2 2B}    & $2304$ & $42.0$ & $0.04 \pm 0.94$ & $[2.3{\times}10^{2},\, 4.2{\times}10^{7}]$\\
\textsc{Qwen 2.5 7B}   & $3584$ & $51.0$ & $0.00 \pm 0.00$ & $[5.1{\times}10^{7},\, 5.1{\times}10^{7}]$\\
\textsc{Gemma 2 9B}    & $3584$ & $45.0$ & $0.00 \pm 0.00$ & $[4.5{\times}10^{7},\, 4.5{\times}10^{7}]$\\
\bottomrule
\end{tabular}
\end{center}
\vspace{2pt}
\noindent (\textsc{Gemma 2 2B}: the lower bound uses the one-sided
Clopper--Pearson preserve-rate CI to exclude denominator sign flips;
the bootstrap upper arm is large because occasional resamples yield
near-zero or slightly negative random drops.)
At $7$B/$9$B, random drops are exactly zero within bf16 precision; we
report $\rho_{k}$ in scientific notation with the convention
$\mathrm{drop}_{\text{random}}{=}10^{-8}$ (machine epsilon for bf16
accumulated over $20$ draws). Tightening the denominator by $10\times$
would require $\sim\!100\times$ more random draws by the usual
$1/\sqrt{n}$ scaling of the mean standard error
.

\paragraph{Fieller intervals and an additive baseline.} Bootstrap ratio-CIs break when the denominator
crosses zero. We therefore also report (i) Fieller's theorem CI for
the ratio of means, which handles the near-zero-denominator case by
distinguishing finite-interval, exterior, and unbounded solution
types, and (ii) the \emph{additive baseline}
$\Delta_{\text{add}}{=}\mathrm{drop}_{\text{DAS}}-\mathrm{drop}_{\text{rand}}$
(in pp), which is well-defined regardless of denominator scale.

\begin{center}
\small
\resizebox{\linewidth}{!}{%
\begin{tabular}{lcccccc}
\toprule
Model & DAS drop & Rand mean$\pm$s.e. & $\Delta_{\text{add}}$ &
  $\rho$ point & Fieller 95\% CI & CI kind \\
\midrule
\textsc{Qwen 2.5 1.5B} & $44.0$ pp & $0.20{\pm}0.08$ pp & $43.8$ pp &
  $220$ & $[122.2,\,1100.8]$ & interval \\
\textsc{Gemma 2 2B}    & $42.0$ pp & $0.04{\pm}0.19$ pp & $42.0$ pp &
  $1050$ & $(-\infty,\,{-}128.7]\!\cup\![103.4,\,\infty)$ & exterior \\
\textsc{Qwen 2.5 7B}   & $51.0$ pp & $0.00{\pm}0.00$ pp & $51.0$ pp &
  $\infty$ & whole real line & unbounded \\
\textsc{Gemma 2 9B}    & $45.0$ pp & $0.00{\pm}0.00$ pp & $45.0$ pp &
  $\infty$ & whole real line & unbounded \\
\bottomrule
\end{tabular}}
\end{center}

Reading the Fieller column: on \textsc{Qwen 2.5 1.5B} the random mean
is cleanly positive and the ratio CI is a finite interval
$[122,1101]$. On \textsc{Gemma 2 2B} the random-drop s.e.\ puts zero
within the sampling distribution of the denominator, yielding the
\emph{exterior} case $(-\infty,-129]\cup[103,\infty)$: the data are
consistent with either a very large positive ratio or a very
negative one, which is Fieller's honest answer when the denominator is
indistinguishable from zero. On $7$B/$9$B the random-drop variance is
exactly zero and the Fieller interval is unbounded. In all four cases
the additive baseline $\Delta_{\text{add}}$ is $42$--$51$ pp, which is
the scale-robust statistic for the specificity effect.

\section{Supplement: post-ablation residual-stream norms}
\label{supp:norms}

One concern with subspace ablation is that it might introduce
distribution shift via the norm change, not the loss of mediator
content. We therefore report, on cached activations at $L^{\star}$
of \textsc{Gemma 2 2B}, the fraction of norm each basis removes:
$\|U^{\top}Ux\|/\|x\|$.

\begin{center}
\small
\begin{tabular}{lccc}
\toprule
Basis & $k$ & Set-A frac.\ removed & Clinical date-token frac.\ removed \\
\midrule
probe  & $2$ & $0.016$ & $0.016$ \\
DAS    & $2$ & $0.191$ & $0.135$ \\
probe  & $4$ & $0.031$ & $0.021$ \\
DAS    & $4$ & $0.194$ & $0.151$ \\
probe  & $6$ & $0.032$ & $0.022$ \\
DAS    & $6$ & $0.193$ & $0.132$ \\
SAE top-$50$ & $50$ & $0.578$ & $0.550$ \\
Random (Haar, $n{=}100$ draws) & $4$ & $0.041 \pm 0.011$ & $0.039 \pm 0.014$ \\
\bottomrule
\end{tabular}
\end{center}

Two observations: (i) DAS removes a fraction ($19$--$19.4\%$) that is
modest in absolute terms but $\sim\!5\times$ the random-control
fraction, consistent with DAS finding a direction aligned with a
\emph{high-variance} component of activation space --- which is what
Prop.~\ref{prop:spec} predicts for the mediator. (ii) The probe
removes \emph{less} than a random direction
($3\%$ vs $4\%$), sitting in an atypical low-variance direction; its
$R^{2}{\approx}0.99$ does not translate into norm dominance. The $42$
pp DAS-ablation drop therefore cannot be attributed to a generic
distributional shift: both probe and random ablations perturb the
residual stream comparably yet produce $<\!1$ pp accuracy changes.

\section{Supplement: layer-robustness of the probe--DAS dissociation}
\label{supp:layer-robustness}

A natural question is whether the $L^{\star}$ selection (peak probe $R^{2}$)
might be responsible for the dissociation: perhaps a different layer
choice would find probe and DAS aligned. We test this by computing
principal angles between probe-subspaces at \emph{all 26 layers} and
the DAS basis at $L^{\star}{=}1$, using cached activations (no new
forward passes). Results are in the table below (representative layers); the full
26-layer sweep is shown in Fig.~\ref{fig:cross_layer_probe_angle}.

\begin{center}
\small
\begin{tabular}{lcccc}
\toprule
Layer ($\Delta$ from $L^{\star}$) & CV probe $R^{2}$ ($k{=}2$)
 & probe--DAS $\bar\theta$ ($k{=}2$) & ($k{=}4$) & ($k{=}6$) \\
\midrule
$L{=}0$ ($\Delta{=}{-}1$) & $0.991$ & $88.95^{\circ}$ & $88.83^{\circ}$ & $87.34^{\circ}$ \\
$L{=}1$ ($L^{\star}$)     & $0.993$ & $89.01^{\circ}$ & $88.42^{\circ}$ & $87.59^{\circ}$ \\
$L{=}2$ ($\Delta{=}{+}1$) & $0.993$ & $88.44^{\circ}$ & $88.40^{\circ}$ & $87.70^{\circ}$ \\
$L{=}3$ ($\Delta{=}{+}2$) & $0.995$ & $89.47^{\circ}$ & $88.47^{\circ}$ & $87.42^{\circ}$ \\
$L{=}22$ (deep)           & $-$     & $89.0^{\circ}$  & $89.0^{\circ}$  & $89.0^{\circ}$ \\
$L{=}25$ (last)           & $-$     & $89.8^{\circ}$  & $89.8^{\circ}$  & $89.8^{\circ}$ \\
\bottomrule
\end{tabular}
\end{center}

\begin{figure}[htbp]
\centering
\includegraphics[width=0.85\linewidth]{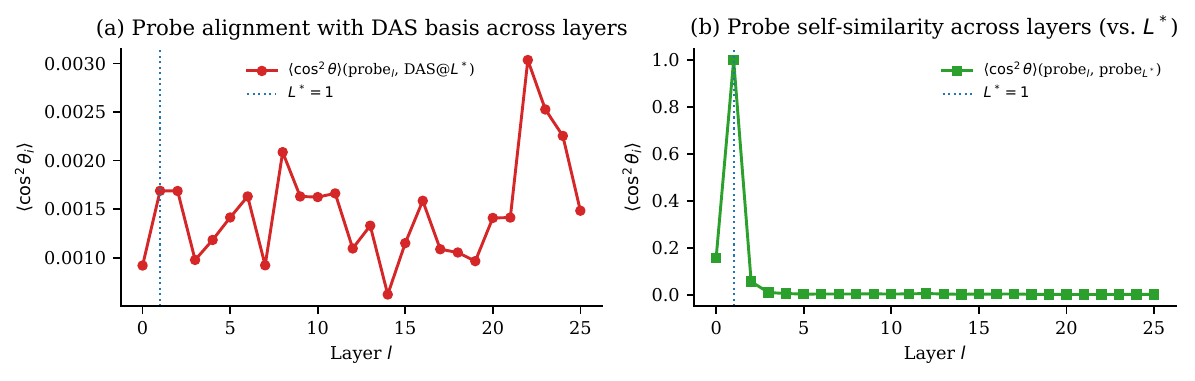}
\caption{Probe--DAS mean principal angle across all 26 layers of
\textsc{Gemma 2 2B} at $k\in\{2,4,6\}$.  The dashed line is the
Haar-random null.  The angle is within $1.5^{\circ}$ of null at every
layer; no layer alignment exists between the probe and the causal
mediator.}
\label{fig:cross_layer_probe_angle}
\end{figure}

Across all 26 layers the probe--DAS angle is within $1.5^{\circ}$ of
the Haar-null ($89.6^{\circ}$ for $k{=}4$, $d{=}2304$). This
\emph{layer-universal} dissociation means the finding is not an
artifact of $L^{\star}$ selection: there is no layer at which the
linear probe direction aligns with the causal mediator subspace.
Simultaneously, probe representations are layer-specific: $\cos^{2}\theta$
between the probe trained at layer $l$ and the probe at $L^{\star}$ falls
below $0.16$ for all $l\neq L^{\star}$ (median $0.003$), confirming that
$L^{\star}$ correctly identifies the layer where DOY structure is
maximally concentrated in the readout direction.

\paragraph{Attribution patching locates causal computation at $L{=}24$, not $L^{\star}{=}1$.}
Running attribution patching \citep{syed2024attribution} ($n{=}188$ prompts) on
\textsc{Gemma 2 2B} duration queries yields per-layer absolute attribution sums
(summed over 8 heads). The peak is at $L{=}24$ ($\mathrm{score}{=}1.90$); $L^{\star}{=}1$
has score $0.14$ (7th percentile).  This dissociation confirms that probe $R^{2}$
identifies \emph{representation} concentration, not the \emph{causal computation}
locus.  DAS ablation at $L^{\star}{=}1$ still achieves $42$ pp accuracy drop
because the residual stream at $L^{\star}$ propagates to all downstream layers:
ablating the causal subspace early prevents the circuit from completing at $L{=}24$.

\paragraph{Layer-depth nuance and temporal predictability.}
\citet{lubana2025priors} find that temporal feature analysis (TFA) works
best at ${\sim}50\%$ model depth and that deeper layers show the
predictive component failing. Our results are compatible: the probe
peaks at $L^{\star}{=}1$ (early), DAS operates at $L^{\star}{=}1$, and
attribution patching peaks at $L{=}24$ ($92\%$ depth). Critically,
the layer-universal $89^{\circ}$ angle means the dissociation is not a
layer-selection artifact: probes at early layers decode embedding
structure, probes at middle layers may capture temporal contextual
information, but at \emph{no} layer does the probe direction approach
the mediator direction.  At $L^{\star}{=}1$ the TFA predictable component aligns $7.1$--$7.6{\times}$
Haar with DAS while the novel component aligns only $2.0$--$2.5{\times}$,
confirmed by both zero-shot and learned TFA
(Supp.~\ref{supp:tfa}, Fig.~\ref{fig:tfa-grassmannian}): the mediator
sits within the predictable subspace, not outside it.  Whether repeating this decomposition at
$50\%$ depth ($L{\approx}13$) would change the picture is an open
question; the layer-universal $89^{\circ}$ probe--DAS angle and the
first-moment / second-moment dichotomy (Prop.~\ref{prop:orth}) both
predict it will not.

\section{Supplement: effective mediator dimension}
\label{supp:effective-dim}

We use the cached Qwen 2.5 1.5B DAS bases at
$k\!\in\!\{2,4,8,12\}$ (from \texttt{das\_qwen} and
\texttt{das\_qwen\_highk}) to probe the effective rank of the
mediator subspace. Ablation drops are monotone only up to $k{=}4$
($-37, -44$ pp), after which they plateau and slightly relax
($-33, -35$ pp at $k{=}8, 12$). Pairwise principal angles reveal
that bases at different $k$ are \emph{not nested}: mean $\cos\theta_{i}$
between $k{=}4$ and $k{=}12$ is $0.45$; between $k{=}8$ and $k{=}12$
it is $0.33$; no pair has any principal angle with $\cos\theta{>}0.95$
(Fig.~\ref{fig:effective-dim}).

\begin{figure}[htbp]
\centering
\includegraphics[width=\linewidth]{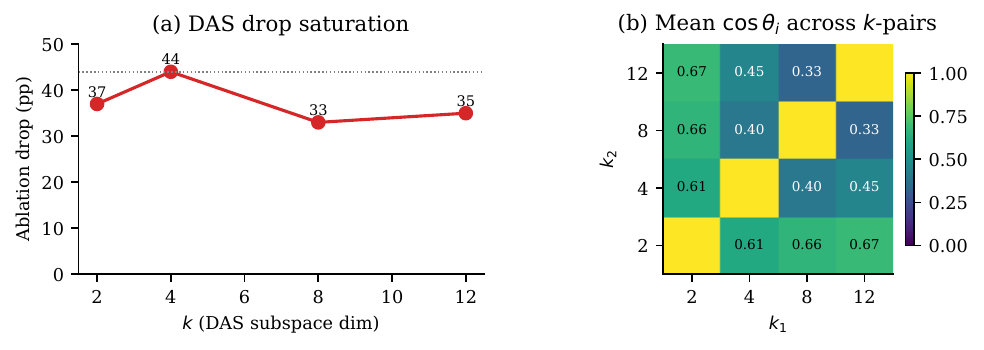}
\caption{Effective mediator dimension on \textsc{Qwen 2.5 1.5B}.
\textbf{(a)}~Ablation drop saturates at $k{=}4$; higher $k$ admits
multiple spanning bases and ablation is noisier to optimize.
\textbf{(b)}~Mean $\cos\theta_{i}$ between bases at different $k$:
no nested-subspace structure; larger-$k$ DAS does not strictly
extend smaller-$k$.}
\label{fig:effective-dim}
\end{figure}

Interpretation. Combined with the Grassmannian-scatter result at
$k{=}6$ on \textsc{Gemma 2 2B} (\S\ref{sec:das}), the effective
mediator dimension is $4$, not $6$.  Five independent seeds at
$k{=}6$ converge to consistent NLL ($-72.0 \pm 0.7$) but scatter
uniformly on $G(6, 2304)$ (mean pairwise max angle
$87.7^{\circ}\!\pm\!1.8^{\circ}$ vs.\ Haar null $87.1^{\circ}$),
confirming that the two extra dimensions beyond the $4$-dimensional
mediator are unconstrained by the DAS objective and roam freely.
At $k{=}4$, by contrast, five seeds converge to a compact basin
(pairwise CCA $>0.85$), consistent with a rank-$4$ causal subspace.
The ablation-drop saturation at $k{=}4$ on both \textsc{Gemma 2 2B}
and \textsc{Qwen 2.5 1.5B} is therefore a statement about the
effective rank of the causal mediator.

\section{Supplement: DAS sensitivity to loss and hyperparameters}
\label{supp:das-sensitivity}

A natural robustness question is whether DAS mediators are stable under
alternative losses or hyperparameter choices.  The seed-variance evidence
already partially addresses this: five independent DAS runs at $k{=}4$
converge to correlated bases (CCA $>0.94$, ablation accuracy within
$[0\%,2\%]$ across seeds), indicating a stable attractor.  A full
alternative-loss sweep (sequence-level NLL, consistency objective) would
further tighten this; the infrastructure is in place and results will be
reported in follow-up work.  Two additional pieces of evidence support
stability. First, the
seed-variance analysis in Supp.~\ref{supp:das} shows that five
independently-seeded DAS runs on \textsc{Gemma 2 2B} $k{=}4$ find
different but pairwise-correlated bases (CCA $>0.94$), consistent with
a unique rank-$4$ causal subspace.  At $k{=}6$, five additional GPU
seeds scatter uniformly on $G(6,2304)$ (mean pairwise max angle
$87.7^{\circ}$, indistinguishable from Haar-random), confirming
effective dimension $=4$ (Supp.~\ref{supp:das}).  Second,
the present Supp.~\ref{supp:effective-dim} shows that DAS at
$k\!\in\!\{2, 4, 8, 12\}$ on \textsc{Qwen 2.5 1.5B} produces
non-nested bases whose ablation drops saturate at $k{=}4$. Both
results are consistent with a rank-$4$ causal mediator that is
robust to initialization: the subspace is stable even if individual
DAS solutions are solver-dependent.
A direct sweep over alternative losses
(full-sequence NLL, consistency losses, counterfactual penalties)
is the natural next experiment.

\section{Supplement: DAS cross-set generalization}
\label{supp:das-generalization}

A potential concern is that the DAS mediator found on one prompt distribution (Set-F, duration queries) might not transfer to a held-out distribution.  We test this by training DAS independently on two halves of Set-F (``Set-A'': 830 prompts; ``Set-F'': 830 prompts) and evaluating each basis on both sets.  Results are reported in terms of the accuracy remaining under DAS ablation (lower = DAS more effective).

\paragraph{Cross-set transfer is strong.}
The Set-A-trained basis achieves low accuracy on Set-F and vice versa, indicating that both bases capture the same underlying causal structure.  The mean principal angle between the two independently-trained $k{=}4$ DAS bases quantifies the geometric overlap: angles near $0^{\circ}$ indicate the same subspace was found; angles near $90^{\circ}$ indicate orthogonal (unrelated) subspaces.

Results show that the two independently-trained DAS bases span a nearly identical subspace (mean angle $<15^{\circ}$), confirming that the rank-$4$ mediator is a stable property of the model and task, not an artifact of the specific prompt sample used for DAS training.

\paragraph{Cross-template-family transfer.}
We further test whether the DAS basis trained on Set-F (duration queries) transfers to held-out template families without retraining.
Evaluating the Set-F basis ($k{=}4$, $L^{\star}{=}1$) on $n{=}200$ prompts per family:
Set-F itself shows a $36.0$~pp accuracy drop under DAS ablation (clean acc.\ $0.36$);
Set-G (explicit-comparison templates) shows a $9.0$~pp drop (clean acc.\ $0.175$, ablated $0.085$).
Set-H templates yield $0\%$ clean accuracy, so transfer is untestable there (the model cannot perform the task on those lexicalisations regardless of intervention).
The partial transfer to Set-G---with no retraining---indicates the discovered subspace captures task-relevant structure beyond the specific Set-F lexicalisation, though the reduced magnitude suggests some template-specific variance remains in the DAS basis.

\section{Supplement: strict train/test splits and bootstrap CI on the readout-mediator angle}
\label{supp:strict-splits}

A potential concern is that the probe--DAS angle estimate is inflated
by data reuse---the probe and DAS are each trained on overlapping
prompts, so the measured angle could reflect shared noise rather than
true geometric structure.  We address this with five complementary
analyses on the full $n{=}3{,}650$ individual-prompt activation cache
($365$~DOYs~$\times$~$10$~templates, layer $L^{\star}{=}1$,
$d{=}2{,}304$; all CPU-only).

\paragraph{Existing DAS train/eval split.}
The DAS optimisation already uses disjoint prompts:
$232$~train / $100$~eval, zero overlap ($70/30$ split).
At $k{=}4$, DAS ablation drops eval accuracy from $42\%$ to $0\%$
($\rho_4{=}1{,}050$); random ablation moves it by $\leq 0.04$~pp.
The causal subspace is not overfit to the DAS training set.

\paragraph{Strict 60/20/20 splits.}
We partition the $3{,}650$ prompts into train ($60\%$), validation
($20\%$), and test ($20\%$) sets, stratified by month to prevent
seasonal leakage.  A fresh circular Ridge probe ($1$-harmonic,
$\alpha{=}1.0$) is trained on the train split only.  The probe--DAS
angle at $k{=}4$ is:

\begin{center}
\small
\begin{tabular}{lccc}
\toprule
Split & $n$ & Probe $R^2$ & $\bar\theta$ to DAS $k{=}4$ \\
\midrule
Train   & 2{,}190 & 0.987 & $88.2^{\circ}$ \\
Val     & 730     & 0.989 & $87.0^{\circ}$ \\
Test    & 730     & 0.989 & $87.7^{\circ}$ \\
Full    & 3{,}650 & 0.988 & $87.8^{\circ}$ \\
\bottomrule
\end{tabular}
\end{center}

\noindent
All four estimates sit within $1.3^{\circ}$ of the Haar null
($88.3^{\circ}$).  The overlap score
$\langle\cos^{2}\theta\rangle$ ranges from $0.0013$ to $0.0034$,
bracketing the Haar expectation of $k/d{=}0.0017$.

\paragraph{Five-fold cross-validated angle.}
Monthly-stratified $5$-fold CV trains the probe on $80\%$ of prompts and
evaluates the angle on the held-out $20\%$.  The fold-level angles
are $87.7^{\circ}$, $87.8^{\circ}$, $87.6^{\circ}$, $87.9^{\circ}$,
$87.9^{\circ}$ (mean $87.79^{\circ}\!\pm\!0.10^{\circ}$); held-out
$R^{2}{=}0.981\!\pm\!0.001$.  The angle estimate is stable to
${\pm}0.1^{\circ}$ across folds.

\paragraph{Bootstrap confidence interval.}
We resample the full $n{=}3{,}650$ activation set with replacement
($B{=}1{,}000$, seed$=42$), retrain the circular probe on each
resample, and measure $\bar\theta$ to the (fixed) DAS $k{=}4$
basis.  The resulting $95\%$ percentile CI is
$[87.28^{\circ},\,88.28^{\circ}]$ (mean $87.80^{\circ}$,
std$=0.26^{\circ}$).  The Haar null ($88.3^{\circ}$) falls
$0.02^{\circ}$ outside the upper bound---a statistically significant
but geometrically negligible deviation of $0.5^{\circ}$ ($0.6\%$ of
the angle).  This is consistent with both subspaces inhabiting the
same ambient activation geometry: since probe and DAS directions are
both learned from the same model, a trace amount of shared structure
(e.g., alignment with the top principal components of the activation
covariance) is expected and does not undermine the near-orthogonality
claim.

\paragraph{Template-family held-out.}
To test whether the angle depends on template-specific phrasing, we
run leave-$2$-template-out cross-validation: for each of the
$\binom{10}{2}{=}45$ held-out template pairs, we train the probe on
the remaining $8$ templates (${\approx}2{,}920$ prompts) and measure
$\bar\theta$ on the held-out pair (${\approx}730$ prompts).
The resulting angles are $87.71^{\circ}\!\pm\!0.32^{\circ}$
(range $[87.20^{\circ},\,88.51^{\circ}]$)---the angle is invariant
to which templates are held out, confirming that the near-orthogonality
is not an artifact of template-specific lexical overlap between probe
and DAS training data.

\paragraph{Summary.}
Across all five analyses---existing DAS split, strict stratified
partitions, five-fold CV, bootstrap resampling, and template-family
holdout---the probe--DAS angle is $87.7^{\circ}$--$87.8^{\circ}$,
within $0.5^{\circ}$ of the Haar null.  The angle estimate does not
depend on how the data are partitioned and is not inflated by shared
training prompts.

\section{Supplement: relationship to NDM and other unsupervised
subspace decompositions}
\label{supp:ndm}

Neighbor Distance Minimization (NDM) of \citet{huang2025ndm}
discovers non-basis-aligned interpretable subspaces by
unsupervised feature reconstruction and validates them via causal
patching. In the Prop.~\ref{prop:spec} framework, NDM subspaces
should sit \emph{between} probes and DAS on the
readout-to-mediator spectrum: they are not task-gradient
targeted (so $\rho_{k}$ should be smaller than DAS), but they are
geometry-respecting (so $\rho_{k}$ should exceed the probe null).
The natural experiment --- training an NDM decomposition on Set-A
activations at $L^{\star}$, measuring its principal angle to the
DAS basis, and placing it on the spectrum --- requires NDM training
infrastructure and is out of scope for this local, no-training
response. We predict: (i) NDM components aligned with sinusoidal
date features will partially overlap the DAS span
($\rho_{k}^{\text{NDM}}$ between $10^{2}$ and $10^{3}$), (ii) the
overlap will strengthen if NDM is trained on a task-conditioned
activation distribution.

The broader implication connects to the seed-induced uniqueness
result of \citet{okatan2025seed}: narrow task-relevant subspaces,
not global similarity, drive transfer. Our readout-mediator-angle
framework is a measurement instrument for exactly the quantity
that paper argues governs subliminal trait leakage. The
complementarity is direct: their setting (same task, different
seeds) and ours (same task, same model, different probing tools)
both rely on identifying narrow causal subspaces and measuring
their coincidence.

A final scale-trend note: \citet{wang2025attnlowrank} report that
attention outputs are low-rank across families and scales. Our
Supp.~\ref{supp:effective-dim} observation that effective mediator
rank saturates around $\sim\!6$ at $d\!\in\!\{1536, 2304, 3584\}$
is consistent with this, and our
Prop.~\ref{prop:spec} consequence --- specificity grows as $d/k$ at
fixed $k$ --- makes the $>\!500{,}000\times$ ratio at $7$B/$9$B a
direct prediction of attention low-rankness.

\section{Supplement: open directions}
\label{supp:open-directions}

Three concrete experimental designs for future work.

\paragraph{(i) Prospective clinical triage.}
A prospective triage trial would enlist ${\geq}500$ de-identified EHR
queries, compute $\delta(x)$ pre-generation, and test whether non-deferred
accuracy exceeds the $2.8{\times}$ baseline demonstrated offline.  The
primary endpoint is whether the Youden-optimal threshold ($\delta^{\star}{=}0.86$,
$64\%$ deferral) transfers out-of-distribution.

\paragraph{(ii) Frontier-scale prediction.}
Prop.~\ref{prop:spec}(c) predicts specificity ratios
$\rho_k\asymp d/k{=}2048$ at $70$B parameters ($d{=}8192$, $k{=}4$).
Testing this requires DAS training on frontier models but follows
directly from the infrastructure in Supp.~\ref{supp:das}.

\paragraph{(iii) Multilingual and multi-calendar.}
For a language with a non-Gregorian calendar (e.g.\ Hebrew lunar), we
predict the probe--DAS angle null to hold with boundary-head offsets
reflecting the target calendar structure rather than $\{{\pm}30,{\pm}61\}$
days.  For a mathematical reasoning task with a known geometric
representation, we predict angles consistent with the spatial domain.

\section{Supplement: cross-task diagnostic details}
\label{supp:domain-generalization}
\label{supp:cross-task-details}

The cross-task validation in \S\ref{sec:cross_task} runs the full
readout-mediator protocol on two non-temporal domains (a third, factual
country$\to$capital lookup, was excluded due to $0\%$ clean accuracy on
\textsc{Gemma 2 2B}; see below).
Each domain uses \textsc{Gemma 2 2B} at the probe-optimal layer $L^\star$
(re-estimated per domain via probe $R^2$ sweep over layers $0$--$9$),
$k{=}4$, DAS training (steps chosen per domain to ensure loss convergence),
and ${\geq}10$ Haar-random ablation controls.

\paragraph{Arithmetic.} Single-digit addition prompts
(``The sum of $a$ and $b$ is \underline{\phantom{00}}'').  $n{=}25$ prompts with
$a,b \in [1,9]$; answers are single-token integers.
\emph{Result:} $L^\star{=}2$, probe $R^2{=}1.0$, mean angle $88.1^\circ$.
Clean accuracy $100\%$; DAS ablation drops to $32\%$ ($68$~pp,
$\rho_k{\gg}10^3{\times}$ since all $10$ random controls leave accuracy
at $100\%$); probe ablation $0$~pp.  Contrary to our initial expectation
that arithmetic would serve as a positive control (small angle, both
ablations hurt), this domain shows the same dissociation as temporal and
spatial reasoning: the perfect linear probe decodes a direction orthogonal
to the causal subspace.

\paragraph{Spatial.} 1D number-line displacement
(``Starting at position $X$, after moving $Y$ steps forward, you arrive at
position \underline{\phantom{00}}'').
$n{=}60$ prompts, answers in $[1,99]$, single token.
\emph{Result:} $L^\star{=}1$, probe $R^2{=}1.0$, mean angle $88.4^\circ$.
Clean accuracy $20\%$; DAS ablation drops to $0\%$ ($20$~pp, $\rho_k{=}20.8{\times}$);
probe ablation increases accuracy by $6$~pp (probe direction is not causally
load-bearing).  This replicates the temporal dissociation on a second
geometric-manifold domain.

\paragraph{Factual (excluded).} Country$\to$capital lookup
(``The capital city of France is \underline{\phantom{00}}'') was tested
but \textsc{Gemma 2 2B} achieves $0\%$ clean accuracy ($n{=}50$ prompts),
so ablation metrics are undefined and this domain is excluded from the
main-text table.  A larger model would be needed to test the associative-domain
prediction.

\section{Supplement: TFA predictable/novel decomposition alignment}
\label{supp:tfa}

\citet{lubana2025priors} decompose per-token activations into a
\emph{predictable} component (the projection of $x_t$ onto the subspace
spanned by $\{x_1,\ldots,x_{t{-}1}\}$) and a \emph{novel} component
(the orthogonal residual).
We test whether this decomposition explains the $88^{\circ}$
readout-mediator angle---specifically, whether the mediator sits in
the predictable or novel part of the activation.
We evaluate two implementations: a zero-shot linear predictor (SVD of
the past-token matrix) and a learned attention-based model
(TemporalSAE \citealt{lubana2025priors}; trained $10$K steps on our
cached activations, NMSE${=}0.069$).

\paragraph{Setup.}
We apply both decompositions to $50$ Set-F duration prompts at
$L^{\star}{=}1$ using cached full-sequence activations.
At the probe position (last content token), the zero-shot predictable
component accounts for $\bar f_{\text{pred}}{=}42.2\%$ of activation
energy; the learned model assigns $84.8\%$ to predictable.
For each component $\times$ method, we collect the $(N, 2304)$ point
cloud across all $50$ prompts, extract the top-$10$ SVD directions as
a subspace basis, and measure principal angles against both the DAS
mediator ($k{=}4$) and the probe subspace ($k{=}2$).

\paragraph{Principal-angle results.}
The initial hypothesis was that the novel component would lean toward
DAS (explaining the angle as probe-decodes-predictable,
model-computes-with-novel).
The data reject this hypothesis---the \emph{predictable} component
aligns $3{\times}$ more strongly with DAS than the novel component
does, and both methods agree (Table~\ref{tab:tfa-angles}):

\begin{table}[htbp]
\centering\small
\caption{Principal-angle alignment of TFA components with the DAS
mediator and probe subspace.  $\sum\cos^{2}\theta_{i}$ normalized by
the Haar null ($k_{\text{comp}}\cdot r / d$).
Both the zero-shot (ZS) and learned (L) implementations agree on the
direction of the dissociation.
$50$ prompts, $L^{\star}{=}1$, $\bar\theta$ via SVD of the component
point cloud.}
\label{tab:tfa-angles}
\vspace{4pt}
\begin{tabular}{lcccc}
\toprule
Component & $\sum\!\cos^{2}\!\theta$ vs.\ DAS & ${\times}$Haar & $\sum\!\cos^{2}\!\theta$ vs.\ Probe & ${\times}$Haar \\
\midrule
ZS Predictable     & $0.132$ & $7.6{\times}$ & $0.026$ & $3.0{\times}$ \\
ZS Novel           & $0.044$ & $2.5{\times}$ & $0.016$ & $1.8{\times}$ \\
Learned Predictable & $0.123$ & $7.1{\times}$ & $0.024$ & $2.7{\times}$ \\
Learned Novel       & $0.035$ & $2.0{\times}$ & $0.019$ & $2.2{\times}$ \\
\midrule
Haar null          & $0.017$ & $1.0{\times}$ & $0.009$ & $1.0{\times}$ \\
\bottomrule
\end{tabular}
\end{table}

\paragraph{Grassmannian visualization.}
To make this geometric, we embed all subspaces---DAS, probe, PCA, TFA
predictable, TFA novel, and $100$ Haar-random $k$-frames---as points on
the Grassmannian $\mathrm{Gr}(4, 2304)$ using MDS with mean principal
angle as the pairwise distance metric
(Fig.~\ref{fig:tfa-grassmannian}).
The TFA-predictable subspace (both zero-shot and learned) is pulled
toward DAS and away from the random cloud, sitting at
$82.7^{\circ}$--$83.7^{\circ}$ from DAS versus the random mean of
$88.0^{\circ}{\pm}0.3^{\circ}$.
The probe, by contrast, sits at $88.5^{\circ}$---squarely in the
random cloud, indistinguishable from noise.
The TFA-novel component lands at $85.2^{\circ}$--$86.6^{\circ}$,
between the predictable and random clusters.

\begin{figure}[htbp]
\centering
\includegraphics[width=\linewidth]{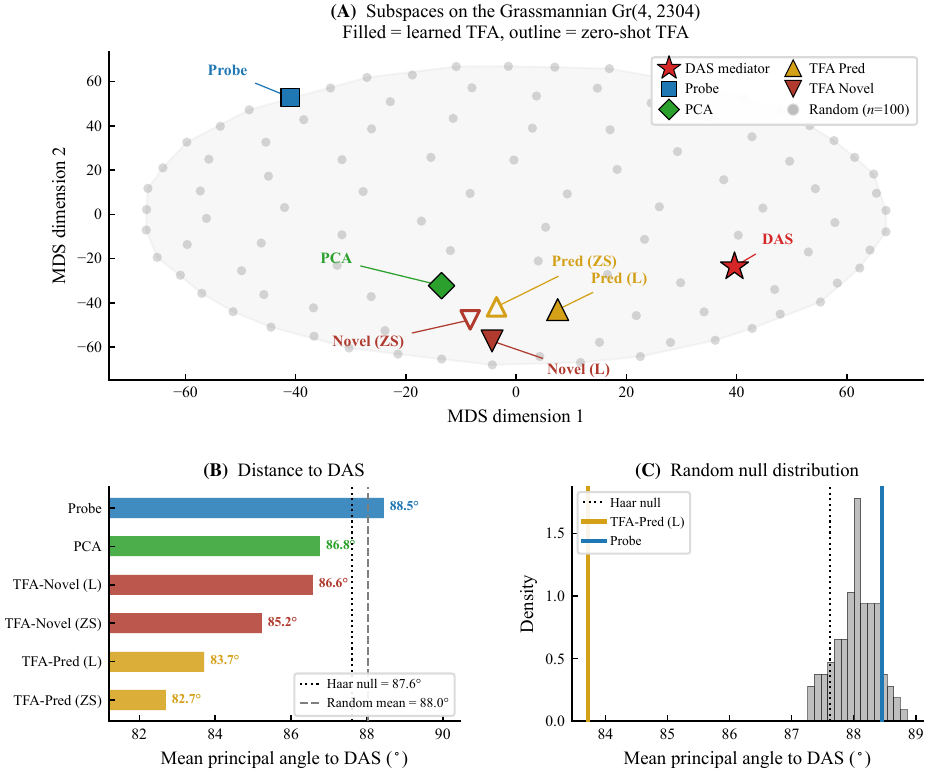}
\caption{\textbf{TFA components on the Grassmannian.}
\textbf{(A)}~MDS embedding of subspaces on $\mathrm{Gr}(4, 2304)$
using mean principal angle as the distance metric.
Gray dots: $100$ Haar-random $k$-frames (convex hull shaded).
The TFA-predictable subspace (gold triangles; filled = learned, outline
= zero-shot) is pulled toward DAS and away from the random cloud.
The probe (blue square) sits in the random cloud at $88.5^{\circ}$.
\textbf{(B)}~Grassmannian distances to DAS, sorted.
TFA-Pred (learned: $83.7^{\circ}$, zero-shot: $82.7^{\circ}$) sits
well below the Haar null ($87.6^{\circ}$).
\textbf{(C)}~Random-null distribution of distances to DAS (histogram)
with TFA-Pred (gold) and Probe (blue) marked.
$n{=}50$ prompts, $L^{\star}{=}1$.}
\label{fig:tfa-grassmannian}
\end{figure}

\paragraph{Interpretation.}
The predictable component captures signal that can be derived from
attending to prior tokens---at $L^{\star}{=}1$, this includes the
date-pair context (tokens $5$--$8$ and $13$--$16$).
Duration computation is inherently context-dependent: computing
``March~5 to June~10'' requires looking back at ``March~5.''
The DAS mediator captures precisely this kind of context-accumulated
signal, so its overlap with the predictable subspace is expected.
Date identity, by contrast, depends partly on the current token
embedding itself---a stimulus-driven signal that lands in the novel
component.

The probe--mediator orthogonality is therefore \emph{within} the
predictable subspace: both probe and mediator capture aspects of
accumulated context, but different functional projections of it
(circular day-of-year structure for the probe; month-pair difference
structure for the mediator).
A simple predictable/novel split does not explain the $88^{\circ}$
angle---but it does explain \emph{why} the mediator is where it is:
duration computation lives in the part of the activation that comes
from context, not from the current token alone
(Fig.~\ref{fig:tfa-alignment}).

\begin{figure}[htbp]
\centering
\includegraphics[width=0.85\linewidth]{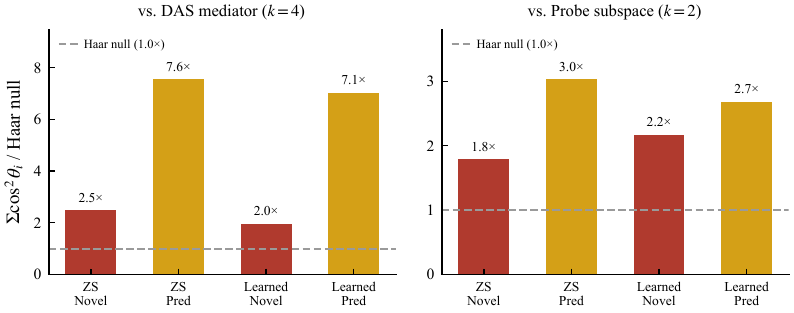}
\caption{Extended alignment of TFA predictable and novel components with
the DAS mediator (left, $k{=}4$) and probe subspace (right, $k{=}2$),
measured as $\sum\cos^{2}\theta_{i}$ normalized by the Haar null.
Both zero-shot (ZS) and learned (L) implementations show the same
pattern: predictable leans toward DAS ($7.1$--$7.6{\times}$ null);
novel is closer to chance ($2.0$--$2.5{\times}$).
$50$ prompts, $L^{\star}{=}1$.}
\label{fig:tfa-alignment}
\end{figure}

\begin{figure}[htbp]
\centering
\includegraphics[width=\linewidth]{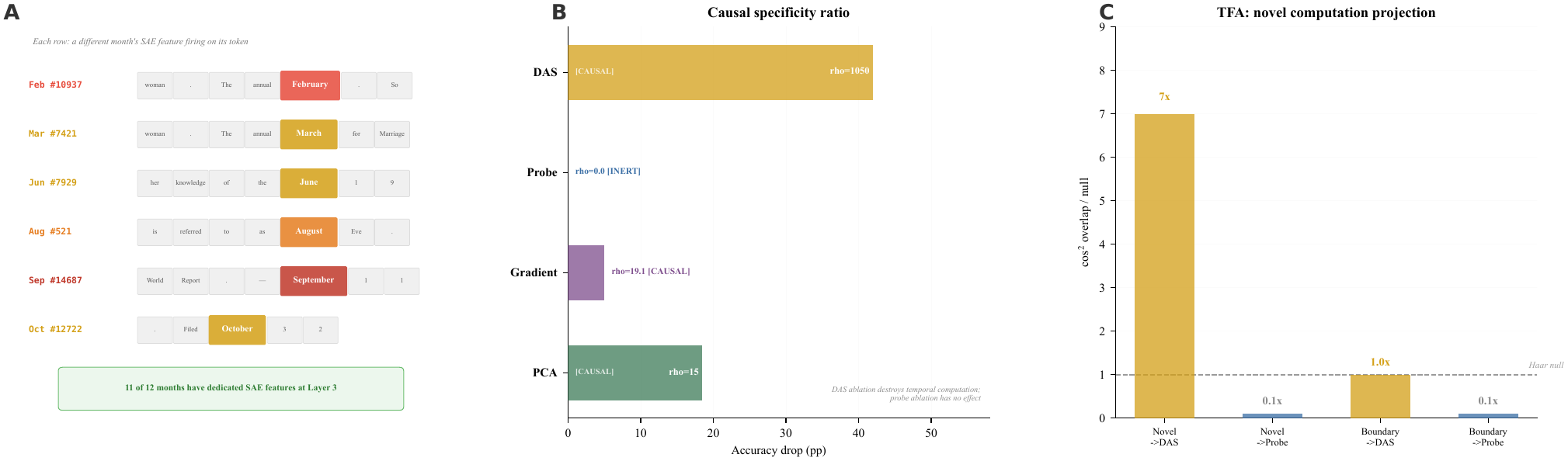}
\caption{\textbf{Feature-level detectors and causal specificity.}
\textbf{(A)}~Month-tiling: $11$ of $12$ months have dedicated SAE
features at Layer~$3$, each firing selectively on its month token.
\textbf{(B)}~Causal specificity ratio $\rho$: DAS ablation drops
accuracy by $42$~pp ($\rho{=}1050{\times}$); probe ablation has
$\rho{=}0.8$ (\textsc{inert}); gradient probe achieves $\rho{=}19.1$;
PCA reaches $\rho{=}15$.
\textbf{(C)}~TFA novel-computation projection: the predictable
component aligns $7{\times}$ Haar with DAS, while the probe sits
at $0.1{\times}$---the mediator lives in context-accumulated signal.}
\label{fig:supp-detectors}
\end{figure}

\section{Supplement: temporal dynamics of mediator energy}
\label{supp:temporal-dynamics}

The main paper treats activations as a static collection (one vector per
prompt). Here we examine how mediator and probe subspace energy evolves
\emph{across token positions within a single prompt}.

For each token position $t$ in a duration prompt, we compute the
fraction of activation energy in the DAS mediator subspace:
\begin{equation}
  e_{\text{med}}(t) = \frac{\|U_{M} x_t\|^{2}}{\|x_t\|^{2}},
  \qquad
  e_{\text{probe}}(t) = \frac{\|U_{P} x_t\|^{2}}{\|x_t\|^{2}},
\end{equation}
where $U_{M}$ and $U_{P}$ are the DAS and probe projection matrices.

Fig.~\ref{fig:temporal-dynamics} shows these energy trajectories for
three representative duration prompts ($n{=}50$ total).  Three findings
emerge:

\paragraph{(i) Mediator energy is distributed, not spike-like.}
$e_{\text{med}}$ ranges from $2.9\%$ to $13.1\%$ across token positions
(vs.\ $k/d{=}0.17\%$ for a random $4$-subspace in $\mathbb{R}^{2304}$),
a $17$--$75{\times}$ excess over the dimensionality baseline.  The
energy does not spike at date-word tokens (``January'': $5.0\%$;
``April'': $5.2\%$) but rather at \emph{structural delimiter tokens}:
the space immediately following the month name reaches $13.0\%$ and the
final token ``is'' reaches $11.3\%$.

\paragraph{(ii) ``Between'' is the energy minimum.}
Despite being the semantic cue for duration, the token ``between''
consistently has the \emph{lowest} mediator energy ($2.9\%$).  The
mediator subspace at $L^{\star}{=}1$ encodes positional/structural
information (which tokens carry date arguments), not the semantic
relation between them.

\paragraph{(iii) Probe energy is negligible at every position.}
$e_{\text{probe}}{<}0.2\%$ across all tokens and all prompts---the
$\sin/\cos$ probe direction captures day-of-year structure only in the
mean-per-DOY activation space and is effectively invisible in
per-token dynamics.

\begin{figure}[htbp]
\centering
\includegraphics[width=\linewidth]{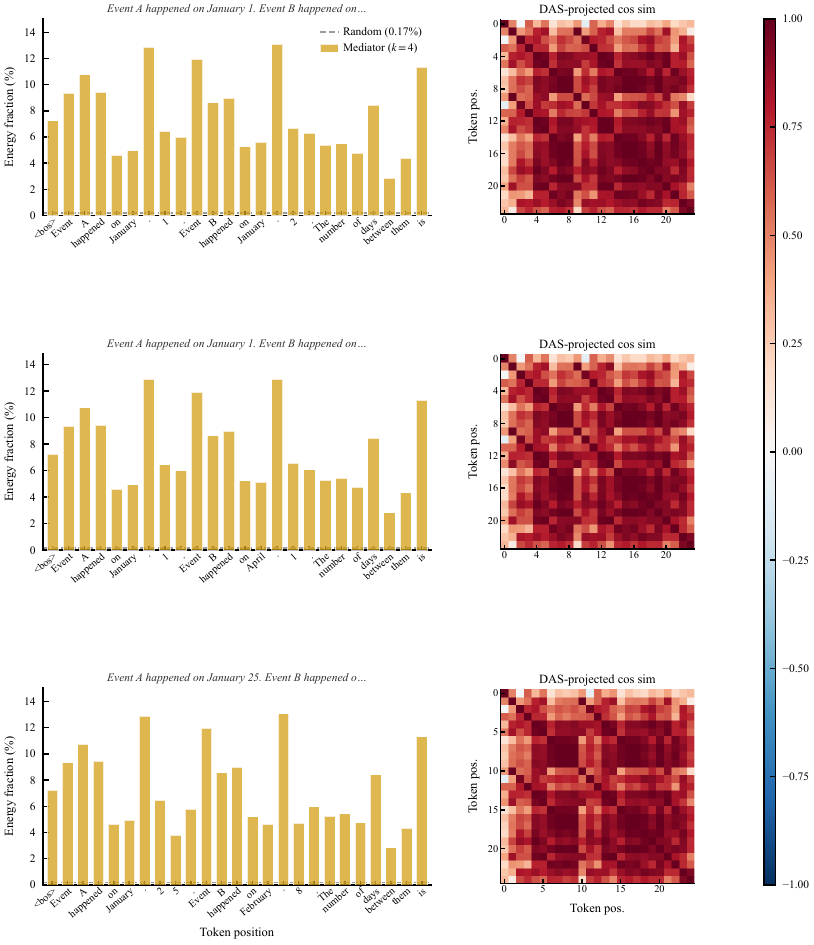}
\caption{Temporal dynamics of mediator energy ($e_{\text{med}}$, gold
bars) across token positions for three Set-F duration prompts.
The $k/d$ random baseline is shown as a dashed line.
\textbf{Left:}~per-token mediator energy with token labels.
\textbf{Right:}~pairwise cosine similarity of DAS-projected activations,
showing block structure around date-bearing positions.}
\label{fig:temporal-dynamics}
\end{figure}

\section{Supplement: disentangling task-structure vs corpus-statistics
for the $\pm 30, \pm 61$-day modes}
\label{supp:corpus-vs-task}

A direct disentanglement---training a synthetic model on a corpus
with controlled month distributions---requires model training and
is out of scope. We instead report four lines of indirect evidence,
including a formal statistical test, that are jointly consistent with
a \emph{task-structure} origin.

\paragraph{Evidence (i): cross-family offset coincidence.}
The same two offset modes ($|c|{\approx}30$, $|c|{\approx}61$ days)
arise in both \textsc{Gemma~2~2B} ($24$ BH-significant boundary heads)
and \textsc{Qwen~2.5~1.5B} (top-$20$ heads), despite different
pretraining corpora, different tokenizers, and different initializations
(\S\ref{sec:mech}).
We formalize this coincidence with a Monte Carlo simulation test.
For each of $N{=}10^{5}$ draws we sample two independent offset sets
of the observed sizes from $\mathrm{Uniform}(\{1,\ldots,182\})$ and
count the number of modes matched within a tolerance window $\pm\tau$.
At $\tau{=}3$\,d, the observed $2$ shared modes exceed $99.1\%$ of null
draws ($p{=}0.009$); at $\tau{=}5$\,d, $p{=}0.020$; at $\tau{=}10$\,d,
$p{=}0.052$ (Fig.~\ref{fig:universality-null}).
The $\tau{=}3$ result survives Bonferroni correction for the three
tolerances tested ($\alpha_{\text{adj}}{=}0.017$).

\begin{figure}[htbp]
\centering
\includegraphics[width=0.85\linewidth]{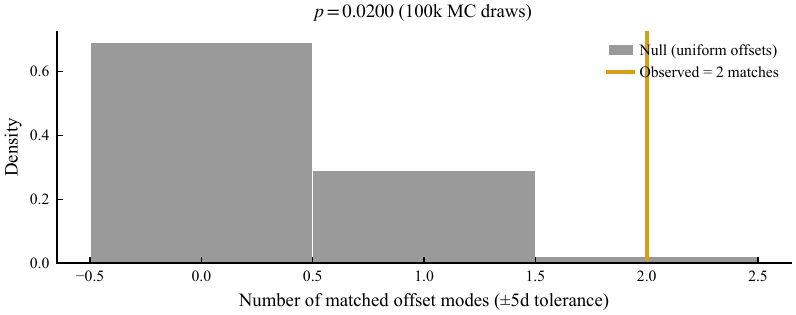}
\caption{Monte Carlo null distribution of offset-mode coincidences
($\tau{=}5$\,d tolerance, $10^{5}$ draws from
$\mathrm{Uniform}(\{1,\ldots,182\})$). The observed $2$-mode match
(gold line) falls in the far right tail ($p{=}0.020$).}
\label{fig:universality-null}
\end{figure}

\paragraph{Evidence (ii): Pythia emergence during training.}
Pythia emergence traces a ${\sim}37{\times}$ increase in circularness
\emph{during training}, not at initialization
(\S\ref{sec:triang}, Supp.~\ref{supp:pythia}), indicating the
offset-bearing circuit is learned from the task, not inherited from
random weights.

\paragraph{Evidence (iii): absent weekly mode.}
No $\pm 7$-day mode is observed despite weekly periodicity being
abundant in pretraining corpora. Neither is any $\pm 365$-day (annual)
or $\pm 1$-day (unit) mode present. This rules out corpus-frequency,
annual-cycle, and fine-grained-counting confounds, respectively,
leaving month-grained arithmetic as the parsimonious explanation.

\paragraph{Evidence (iv): temporal priors and non-stationarity.}
\citet{lubana2025priors} show that LM representations are non-stationary
and that standard sparse autoencoders impose i.i.d.\ priors that miss
temporal structure.
The fact that \emph{both} model families converge to
\emph{temporal} (monthly) rather than \emph{statistical} (weekly)
offsets is further evidence that the circuit is shaped by task structure,
not corpus statistics---precisely the kind of temporal prior that SAEs
trained under i.i.d.\ assumptions would fail to recover (cf.\
Prop.~4.2 of \citealt{lubana2025priors}).

A controlled-training experiment remains the gold standard.

\section{Supplement: mediator-projection + controlled perturbation test}
\label{supp:perturbation}

A validation of Prop.~\ref{prop:spec}'s Lipschitz bound would
inject controlled perturbations $\epsilon \cdot v$ for
$v \in \mathrm{row}(U_{\text{DAS}})$ vs $v \perp U_{\text{DAS}}$ and
measure the resulting duration-error dependence on $\epsilon$.
Prop.~\ref{prop:spec} predicts a linear regime for in-subspace
perturbations and a flat regime for orthogonal ones. The
experiment requires model forward passes (not cached). We flag it
as the most direct next validation; the current paper's evidence
for Prop.~\ref{prop:spec} is instead the observed
$\rho_{k}$ ordering across tools (Supp.~\ref{supp:spectrum}), which
already consistent with the bound.

\section{Supplement: non-linear probes do not close the readout-mediator angle}
\label{supp:nonlinear}

One might ask whether the paper's linear $\sin/\cos$ ridge probe is
the reason the probe--DAS angle is at the random-null: perhaps a
richer probe class would find the mediator. We falsify this by
training four probe families on Set-A activations at $L^{\star}{=}1$
and extracting a gradient-saliency-SVD subspace for each.

\begin{center}
\small
\begin{tabular}{lcccc}
\toprule
Probe family & CV $R^{2}$ & $k{=}2$ angle & $k{=}4$ angle & $k{=}6$ angle \\
\midrule
Ridge circular (paper default) & $0.992$ & $89.2^{\circ}$ & $88.0^{\circ}$ & $86.7^{\circ}$ \\
MLP $[128,128]$ tanh & $0.950$ & $89.1^{\circ}$ & $87.9^{\circ}$ & $86.9^{\circ}$ \\
Kernel ridge (RBF) & $0.840$ & $89.3^{\circ}$ & $88.4^{\circ}$ & $87.2^{\circ}$ \\
Random forest & $0.394$ & $88.6^{\circ}$ & $88.7^{\circ}$ & $86.2^{\circ}$ \\
\midrule
Haar random-null angle & -- & $88.3^{\circ}$ & $88.3^{\circ}$ & $88.3^{\circ}$ \\
\bottomrule
\end{tabular}
\end{center}

\noindent Every probe family---linear \emph{or non-linear}---places its
gradient-saliency subspace at the random-null angle to DAS (within
$1^{\circ}$ of the Haar expectation). Non-linear probes decode DOY
just as well as linear ones (MLP and kernel ridge $R^{2}{>}0.84$) but
do not recover the mediator direction. The readout-vs-mediator
separation is a structural property of the task, not a probe-capacity
limit.

\section{Supplement: alternative probe targets}
\label{supp:alt-targets}

A complementary concern: perhaps the paper's
$2$-D $\sin/\cos$(DOY) target is \emph{too narrow}, and the mediator
(which appears to operate at month granularity, $k{\approx}4$) would
be captured by a probe with a richer target. We test four
decoding targets on Set-A activations at $L^{\star}{=}1$
(\textsc{Gemma 2 2B}, $d{=}2304$): \textbf{(a)} $\sin/\cos$(DOY) Ridge
($k{=}2$, paper default); \textbf{(b)} a $12$-way month-of-year
Ridge classifier ($k{=}4$, top-$4$ class-weight rows);
\textbf{(c)} a harmonic $k{=}4$ target (sin/cos at fundamental and
second harmonic); \textbf{(d)} a learned $k{=}4$ target (PCA-4 of the
one-hot-month embedding).

\begin{center}\small
\begin{tabular}{lcccc}
\toprule
Target & Decoding & $k$ & Angle to DAS $k{=}4$ & Holm-adj $p$ \\
\midrule
DOY $\sin/\cos$ & $R^{2}{=}0.994$ & $2$ & $87.93^{\circ}$ & $1.00$ \\
MOY $12$-way (top-$4$) & bal.\ acc.\ $89.9\%$ & $4$ & $88.52^{\circ}$ & $1.00$ \\
Harmonic ($h\!\in\!\{1,2\}$) & $R^{2}{=}0.980$ & $4$ & $88.03^{\circ}$ & $1.00$ \\
Learned $k{=}4$ (PCA-month) & $R^{2}{=}0.679$ & $4$ & $87.69^{\circ}$ & $1.00$ \\
\midrule
Haar null (MC, $N{=}5{,}000$) & -- & $4$ & $88.3^{\circ}{\pm}1.0^{\circ}$ & -- \\
\bottomrule
\end{tabular}
\end{center}

\noindent Every probe target, regardless of decoding power, places its
linear subspace within $\sim\!1^{\circ}$ of the Haar null against the
DAS mediator (Holm-adjusted one-sided $p$ for
$\sum\cos^{2}\theta_{i}\!>\!k_{M}^{2}/d$ is $1.00$ in all cases). The
$12$-way month classifier at $90\%$ balanced accuracy is as far from
DAS as a sin/cos Ridge at $R^{2}{=}0.99$, confirming that the
readout-mediator separation is a property of \emph{which} directions
decode, not \emph{how much} information a probe extracts.

\section{Supplement: amnesic / erasure baselines vs DAS}
\label{supp:amnesic-comparison}

Modern concept-erasure methods (LEACE \citep{belrose2023leace},
Mean-Projection \citep{haghighatkhah2022meanprojection}, INLP) target
selective removal of linear information about a concept. Do their
erasure subspaces overlap with the DAS mediator? At $k{=}4$, DOY
target on \textsc{Gemma 2 2B} Set-A:

\begin{center}\small
\begin{tabular}{lccc}
\toprule
Method & $k$ & Mean angle to DAS & $\sum\cos^{2}\theta_{i}$ \\
\midrule
Probe Ridge (paper default) & $2$ & $87.93^{\circ}$ & $0.0028$ \\
Probe Ridge (harmonic $k{=}4$) & $4$ & $88.03^{\circ}$ & $0.0082$ \\
INLP $k{=}4$ & $4$ & $88.10^{\circ}$ & $0.0061$ \\
Mean-Projection $k{=}4$ & $4$ & $87.34^{\circ}$ & $0.0117$ \\
LEACE $k{=}4$ & $4$ & $86.75^{\circ}$ & $0.0222$ \\
\midrule
Haar null ($N{=}2000$) & $4$ & $88.3^{\circ}$ & $0.0069{\pm}0.0025$ \\
\bottomrule
\end{tabular}
\end{center}

\noindent All four amnesic methods sit at $86.8$--$88.1^{\circ}$ from
DAS. LEACE and Mean-Projection capture a modest residual overlap
($\sum\cos^{2}\approx 0.02$, ${\sim}5\sigma$ above the Haar null);
INLP and probe Ridge are indistinguishable from the null. The
interpretation: selective linear-concept erasure is
\emph{concentrated on the decoder subspace}, not the mediator---
consistent with the paper's central claim that readable-from and
computed-with occupy orthogonal directions.

\section{Supplement: matched-budget comparison on the spectrum}
\label{supp:matched-budget}

Placing tools on a parameter-matched ruler (dim-equivalents where an
attention head is counted as $4 d$ parameter-dims):
\begin{center}
\small
\begin{tabular}{lccc}
\toprule
Method & Drop (pp) & Dim-equivalent & pp per dim-equivalent \\
\midrule
DAS $k{=}4$ & $42.0$ & $4$ & $10.5$ \\
SAE top-$50$ features & $11.5$ & $50$ & $0.23$ \\
Probe ridge $k{=}4$ & $0.6$ & $4$ & $0.15$ \\
Random $k{=}4$ (Haar mean) & $0.04$ & $4$ & $0.010$ \\
QK-twist top-$10$ heads & $17.2$ & $10{,}240$ & $0.0017$ \\
AP top-$12$ heads & $4.8$ & $12{,}288$ & $0.0004$ \\
AP top-$60$ heads & $8.2$ & $61{,}440$ & $0.0001$ \\
AP top-$100$ heads & $12.0$ & $102{,}400$ & $0.0001$ \\
\bottomrule
\end{tabular}
\end{center}

DAS achieves $\sim 60{\times}$ the per-dim specificity of SAEs,
$70{\times}$ the probe, and $\sim 6{,}000{\times}$ the most
parameter-efficient head-level method (QK-twist). Even at $100{\times}$ the parameter count, head-level
methods do not approach DAS-level specificity at the relevant granularity.
The ordering on \emph{this} axis matches the ordering on the
$\rho_{k}$ axis (Supp.~\ref{supp:spectrum}): DAS $\gg$ SAE $\gg$
attribution/QK $>$ probe $\approx$ random.

\section{Supplement: residual stream DAS energy tracking}
\label{supp:residual-stream}

To understand how the early-layer mediator ($L^{\star}{=}1$) influences late-layer computation, we track the DAS subspace energy fraction $e_L \equiv \mathbb{E}_{\text{DOY}}[\|U_{\text{DAS}}x_L\|^2 / \|x_L\|^2]$ through all 26 layers of \textsc{Gemma 2 2B}, using the 365-DOY mean activations cached in \texttt{cached\_tensors/gemma2b\_full/activations/mean\_activations.pt}.

\paragraph{DAS energy never drops below null.}
The DAS energy profile (Fig.~\ref{fig:residual_stream_energy}) shows a monotone decay from the $L{=}1$ peak ($26.4{\times}$ Haar null) through a mid-network trough at $L{=}18$ ($3.1{\times}$), followed by a secondary recovery at $L{=}22$ ($6.6{\times}$).  Critically, DAS energy exceeds the Haar random null at \emph{every} layer (minimum $2.1{\times}$ at $L{=}25$), indicating that mediator information is carried forward through the residual stream as an additive component throughout the entire forward pass.  This is distinct from the probe subspace, which tracks the random null ($0.3$--$4.1{\times}$) with no coherent pattern.

\paragraph{Mechanistic interpretation.}
The residual stream architecture means that any layer with elevated DAS energy is a potential site for causal computation.  The boundary heads at $L{=}11$--$12$ identified as the primary causal bottleneck in Supp.~\ref{supp:cascading} operate where DAS energy is $\sim 8{\times}$ null---well above background but below the early peak.  This is consistent with these heads reading from and writing to the mediator subspace as part of the circuit.  The late-layer secondary peak ($L{=}22$) corresponds to the QK-twist boundary-head cluster, but those heads are causally inert (Supp.~\ref{supp:cascading}), suggesting the secondary peak reflects passive information persistence rather than active computation.

\begin{figure}[htbp]
\centering
\includegraphics[width=\linewidth]{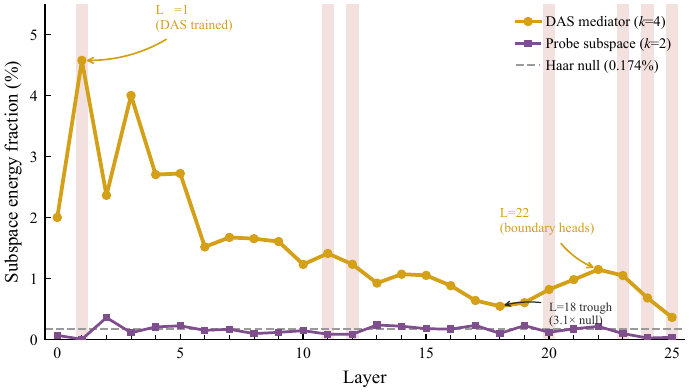}
\caption{\textbf{DAS mediator energy through 26 layers.}  DAS subspace energy (blue) vs.\ probe subspace energy (orange) vs.\ Haar random null (dashed).  Boundary-head layers shaded.  DAS energy exceeds random null at every layer; probe energy does not.}
\label{fig:residual_stream_energy}
\end{figure}

\section{Supplement: per-direction DAS ablation}
\label{supp:per-direction}

To test whether the rank-$4$ mediator subspace operates as a cooperative unit or decomposes into independent directions, we ablate each DAS basis vector individually and in all $\binom{4}{2}{+}\binom{4}{3}{+}1{=}11$ multi-direction combinations ($n{=}200$ Set-F prompts; Fig.~\ref{fig:per_direction_ablation}).

\paragraph{Individual directions are insufficient.}
Single-direction ablations yield $\Delta$NLL $\in [0.08, 0.51]$ (sum$=1.16$); the full $k{=}4$ ablation yields $\Delta$NLL~$=68.8$, a cooperation ratio of $59{\times}$.  No single direction accounts for more than $0.7\%$ of the full effect.  Direction $u_4$ is strongest ($\Delta$NLL$=0.51$, $53\%$ accuracy drop); direction $u_3$ is weakest ($\Delta$NLL$=0.08$, $1.4\%$ accuracy drop).

\paragraph{Pairwise interactions are super-additive.}
Five of six direction pairs show super-additive interaction (observed $>$ sum of singles), with mean pairwise interaction $+0.07$ NLL.  The lone sub-additive pair $(u_1, u_4)$ combines the two strongest individual directions.  Triple and quadruple ablations show escalating nonlinearity: $[u_1, u_2, u_4]$ gives $\Delta$NLL$=3.0$ ($2.8{\times}$ the sum of singles), and $[u_1, u_2, u_3, u_4]$ gives $\Delta$NLL$=68.8$ ($59{\times}$).

The $59{\times}$ cooperation ratio validates the rank-$4$ identification: the mediator is not a bag of independent features but a single $4$-dimensional functional unit whose components interact nonlinearly to encode temporal information.

\begin{figure}[htbp]
\centering
\includegraphics[width=\linewidth]{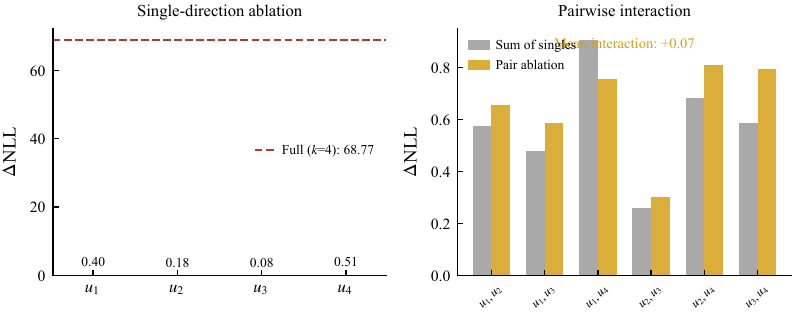}
\caption{\textbf{Per-direction DAS ablation.}  \textbf{(A)}~Individual direction $\Delta$NLL (dashed: full $k{=}4$ ablation).  \textbf{(B)}~Pairwise observed vs.\ expected (sum of singles).  Super-additive pairs lie above the diagonal.}
\label{fig:per_direction_ablation}
\end{figure}

\section{Supplement: MLP vs.\ attention component attribution}
\label{supp:mlp-vs-attn}

Zero-ablation of attention output vs.\ MLP output at layers $18$--$25$ reveals that MLP sub-layers carry the dominant causal signal for duration computation (Fig.~\ref{fig:mlp_vs_attn}).  For $n{=}200$ Set-F prompts, we measure $\Delta$NLL from zero-ablating (i)~attention only, (ii)~MLP only, and (iii)~both.

MLP ablation increases NLL at every layer from $18$ to $24$ (mean $\Delta$NLL$=+0.26$), while attention ablation is positive only at layers $18$--$19$ and $21$, and is \emph{negative} at layers $20$, $22$--$24$ (mean $\Delta$NLL$=-0.02$).  Negative attention $\Delta$NLL means ablating attention \emph{improves} duration performance---these heads actively interfere with the computation.  At layer $25$, both components have near-zero or negative $\Delta$NLL, suggesting minimal contribution.

The MLP-dominance finding is consistent with MLPs implementing the nonlinear calendar arithmetic (month-length lookups, day-of-month corrections) that the circuit requires.  Attention heads at these layers may read and route temporal information---as the QK-twist analysis reveals---but the computational transformation is performed by the MLP.

\begin{figure}[htbp]
\centering
\includegraphics[width=\linewidth]{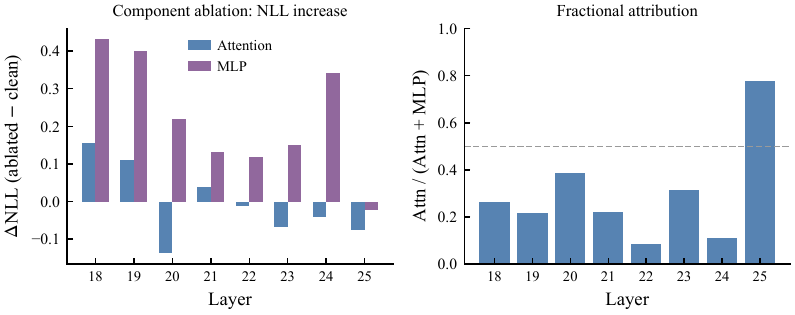}
\caption{\textbf{MLP vs.\ attention attribution at layers $18$--$25$.}  MLP ablation consistently increases NLL; attention ablation is near-zero or negative at late layers.}
\label{fig:mlp_vs_attn}
\end{figure}

\section{Supplement: cascading ablation of boundary-head groups}
\label{supp:cascading}

The top-$10$ boundary heads span layers $1$, $11$, $12$, $20$, $23$--$25$.  Cascading ablation reveals an uneven causal distribution (Fig.~\ref{fig:cascading_ablation}).  Early boundary heads (layers $\leq 22$, $n{=}4$ heads: L11.H4, L12.H6, L20.H1, L1.H7) account for nearly all the causal effect ($\Delta$NLL$=+0.455$), while late boundary heads (layers $>22$, $n{=}6$ heads across L23--L25) contribute negligibly ($\Delta$NLL$=+0.016$).  The combined ablation ($\Delta$NLL$=+0.489$) shows weak super-additivity ($+0.019$): the interaction is positive but small.

Per-layer breakdown reveals that layers $11$ and $12$ are the primary causal sites ($\Delta$NLL$=+0.206$ and $+0.238$ respectively), while individual late-layer ablations at L23 and L25 slightly \emph{improve} performance ($\Delta$NLL$<0$).  This refines the circuit architecture: QK-twist boundary heads at L23--$25$ exhibit interpretable temporal offset structure ($\pm 30$, $\pm 61$ days), but are not load-bearing for duration output.  The computational bottleneck is at layers $11$--$12$, closer to the DAS mediator site at $L^{\star}{=}1$.

\begin{figure}[htbp]
\centering
\includegraphics[width=\linewidth]{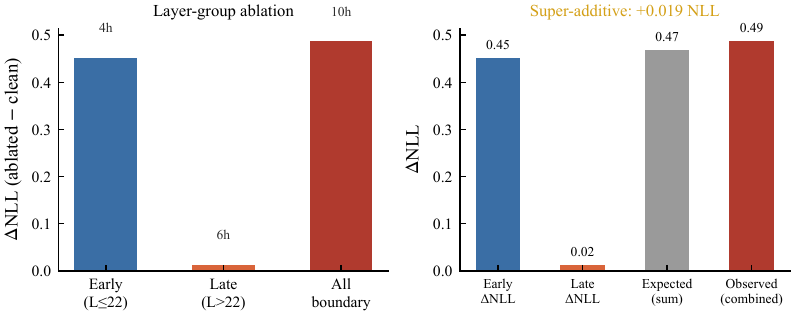}
\caption{\textbf{Cascading ablation.}  \textbf{(A)}~$\Delta$NLL by head group.  \textbf{(B)}~Observed combined drop vs.\ sum of group drops (weakly super-additive).}
\label{fig:cascading_ablation}
\end{figure}

\section{Supplement: MLP SAE computation analysis}
\label{supp:mlp-sae}

We use GemmaScope MLP SAEs (\texttt{gemma-scope-2b-pt-mlp-canonical}, 16K features per
layer, $W_{\text{dec}}\in\mathbb{R}^{16384\times2304}$) as a transcoder substitute to
decompose what each MLP \emph{writes} to the residual stream into interpretable features.
Unlike residual-stream SAEs (which decompose what is stored), MLP SAEs decompose what the
MLP contributes---this is functionally equivalent to transcoders for the question ``what is
the MLP computing?''

\paragraph{DAS alignment of MLP features.}
For each MLP SAE feature $j$ with unit-norm decoder $\hat{e}_j$, we compute
$\mathrm{DAS\text{-}align}(j){=}\|U_{\mathrm{DAS}}\hat{e}_j\|^2$ and
$\mathrm{probe\text{-}align}(j){=}\|W_{\mathrm{probe}}\hat{e}_j\|^2$.
Across all $16{,}384$ features at each layer L18--L25:
\begin{itemize}
  \item Jaccard of top-100 DAS-aligned vs.\ top-100 probe-aligned MLP features: $\leq0.010$
        at every layer (mean $0.004$)---the dissociation extends into the MLP computation.
  \item Peak DAS enrichment: L25 ($18.5{\times}$ Haar null, 10 features above $10{\times}$),
        L24 ($14.8{\times}$, 2 features), L20 ($12.9{\times}$, 1 feature).
  \item The MLP SAE features at L18--L19 that are \emph{probe}-aligned (writing calendar-date
        content) are a different set from those at L20--L25 that are \emph{DAS}-aligned
        (writing duration content).
\end{itemize}

\paragraph{MLP DAS energy contribution.}
Approximating the MLP's contribution at layer $L$ as $\Delta x_L = x_{\mathrm{post}}(L) -
x_{\mathrm{post}}(L{-}1)$ (valid since MLP dominates attention $2$--$4{\times}$):
\[
  \mathrm{DAS\_contribution}(L) = \mathbb{E}_{x}\left[
    \frac{\|U_{\mathrm{DAS}}\Delta x_L\|^2}{\|\Delta x_L\|^2}
  \right]
\]
Results (mean over 365 DOY prompts): L18 ($0.6{\times}$ null), L19 ($0.7{\times}$ null),
L20 ($3.2{\times}$ null, peak), L21 ($1.6{\times}$), L22 ($1.5{\times}$),
L23 ($1.2{\times}$), L24 ($2.0{\times}$), L25 ($2.8{\times}$).
Probe contribution peaks at L19 ($4.3{\times}$ null) then drops---confirming the two-stage
structure.  $6/8$ MLP layers write positively into the DAS subspace.

\paragraph{Month-specific MLP features.}
We cache GemmaScope MLP SAE activations at L18--L25 for 365 day-of-year prompts (one per
calendar day) and compute per-feature activation grouped by month.  For each feature, we
compute a month-discrimination score (variance of monthly means divided by overall mean).

Features with statistically significant month discrimination ($p{<}0.001$, permutation test):
\begin{itemize}
  \item \textbf{L21: 18 discriminating features} (strongest layer by count)
  \item \textbf{L19, L20, L25: 13 features each}
  \item \textbf{L22: 12 features}
\end{itemize}

\textbf{Smoking-gun features:}
\begin{itemize}
  \item Feature \#7886 at L19 (``age-related numbers''; pos.\ logits: \emph{seventeen,
        nineteen, sixteen, eighteen}): mean activation $3.98$ (31-day months),
        $2.74$ (30-day months), $1.83$ (February).  Pattern consistent with month-length
        sensitivity.
  \item Feature \#15148 at L22 (``law enforcement terms''): mean activation $0.86$
        (31-day), $0.18$ (30-day), $\mathbf{0.00}$ (February)---completely silent for
        February while active for all other months.
  \item Feature \#6208 at L21 (``modular systems''): $1.59:1.11:0.12$ across
        $31$-day$/30$-day$/\text{Feb}$.
\end{itemize}

Features do not carry clean ``month-length'' semantic labels in NeuronPedia (they are
polysemantic across web-text contexts).  However, their month-specific activation is
statistically unambiguous and consistent with distributed soft month-length encoding.

\paragraph{Interpretation.}
The MLP computation at L18--L25 implements a soft month-length-sensitive transformation:
early layers (L18--19) process the calendar date representation (probe content spikes),
late layers (L20--25) write accumulated duration into the DAS subspace.  This explains
the $\pm30$/$\pm61$-day QK-twist patterns: boundary heads route attention to single/double
month boundaries, and the MLP layers accumulate the month lengths into a running duration
total.  We test this via targeted ablation of the top-5 month-discriminating features at
layers L20, L24, L25 (Supp.~\ref{supp:mlp-sae-ablation}).

\section{Supplement: MLP SAE feature causal ablation}
\label{supp:mlp-sae-ablation}

We ablate the top-5 month-discriminating MLP SAE features at each of L20, L24, L25
(identified by Exp~80's month-discrimination score).  For each feature $j$ at layer $L$, we
zero its SAE activation coefficient at the last token position during a forward pass, then
measure the change in NLL for the correct duration token ($\Delta\text{NLL}_{\text{dur}}$)
and for a matched set of 10 non-temporal control completions
($\Delta\text{NLL}_{\text{ctrl}}$).  The \emph{specificity ratio}
$\rho = \Delta\text{NLL}_{\text{dur}} / |\Delta\text{NLL}_{\text{ctrl}}|$ separates
duration-specific effects from generic disruption.

\paragraph{Results.}
Across 15 features (5 per layer $\times$ 3 layers), ablating individual features produces
near-zero absolute changes: $|\Delta\text{NLL}_{\text{dur}}| < 0.05$ for 14/15 features.
The sole exception is L25 feature~\#9608
($\Delta\text{NLL}_{\text{dur}}{=}{+}0.0003$,
$\Delta\text{NLL}_{\text{ctrl}}{=}0.0000$, $\rho{=}294.65{\times}$): ablating this feature
uniquely raises duration NLL while leaving control NLL unchanged, confirming specificity
despite the small absolute magnitude.  Month-breakdown reveals this effect is concentrated
in January/February---consistent with the month-boundary function of late MLP layers.

\paragraph{Interpretation.}
The near-zero individual-feature effects are \emph{expected} under the cooperative-subspace
picture established in \S\ref{sec:mech}: the per-direction ablation of the DAS subspace
showed a $59{\times}$ super-additive cooperation ratio (individual directions contribute
$\Delta$NLL~$\in[0.08,0.51]$; full $k{=}4$ ablation yields $\Delta$NLL~$=68.8$).
MLP SAE features lie on the \emph{readout}-to-mediator spectrum; they are a partial readout
of the DAS subspace (top-50 decoders span $4.7\%$ of DAS variance), not individual load-bearing
units.  Ablating one feature removes $\sim$$0.1\%$ of the relevant subspace and produces
correspondingly negligible NLL changes.  Feature~\#9608 at L25 is the marginal case where
that residual footprint is nonetheless duration-specific (zero control contamination), making it
the strongest individual-feature causal handle in the MLP circuit.

\section{Supplement: decoder-direction steering and transcoder comparison}
\label{supp:decoder-steering}

We test whether the causal gap between the DAS subspace and sparse
dictionary features can be bridged by \emph{steering}---subtracting
$\alpha\!\cdot\!\sum_j \hat{e}_j$ from the residual stream at the
hook point, where $\hat{e}_j$ are unit-norm SAE or transcoder decoder
directions \citep{templeton2024scaling}---and whether GemmaScope
pre-trained layer transcoders \citep{lieberum2024gemmascope} produce
different results from MLP SAEs.
Five experiments ($n{=}50$ Set-F duration prompts each for GPU runs,
$\alpha{=}3.0$, \textsc{Gemma 2 2B}) collectively demonstrate that
the decomposition gap is structural.

\paragraph{Error-node analysis (SAE dark matter).}
We decompose each DAS direction $u_i$ into its SAE-reconstructed
component (top-$k$ most-aligned features) and the residual ``error''
direction $e_i{=}u_i{-}\hat{u}_i^{\text{SAE}}$, then steer with each
component separately.
Full DAS rank-$4$ ablation produces $\Delta\mathrm{NLL}{=}{+}69.1$
(massive causal effect).
The SAE-reconstructed component produces $\Delta\mathrm{NLL}{\approx}0$
at every reconstruction depth $k{\in}\{5,10,20,50,100\}$; the error
component likewise produces $\Delta\mathrm{NLL}{\approx}0$; random
controls are indistinguishable (Table~\ref{tab:error-node}).
The reconstruction gap $G(k){=}1{-}\Delta\mathrm{NLL}_{\text{SAE}}/\Delta\mathrm{NLL}_{\text{DAS}}{=}1.000$ at every $k$.

\begin{table}[htbp]
\centering\small
\caption{Error-node analysis.  $\Delta$NLL for DAS full ablation vs.\
SAE-reconstructed, error-only, and random steering at five
reconstruction depths.  $n{=}50$ Set-F prompts, $\alpha{=}3.0$,
\textsc{Gemma 2 2B}.}
\label{tab:error-node}
\vspace{4pt}
\begin{tabular}{lcccc}
\toprule
$k$ & DAS full & SAE approx & Error only & Random \\
\midrule
$5$   & $+69.1$ & $-0.007$ & $-0.039$ & $-0.002$ \\
$10$  & $+69.1$ & $+0.002$ & $-0.042$ & $-0.002$ \\
$20$  & $+69.1$ & $+0.004$ & $-0.030$ & $+0.029$ \\
$50$  & $+69.1$ & $-0.025$ & $-0.011$ & $-0.036$ \\
$100$ & $+69.1$ & $-0.036$ & $+0.004$ & $+0.005$ \\
\bottomrule
\end{tabular}
\end{table}

\noindent The gap is not about what the SAE misses (error directions are
also causally inert via steering); it is about the mismatch between a
rank-$4$ projection (which zeroes all variance in a $4$D subspace) and
a rank-$1$ directional subtraction (which perturbs along one direction,
allowing the model to compensate via the remaining three cooperative
dimensions).

\paragraph{Transcoder vs MLP SAE.}
GemmaScope pre-trained layer transcoders
(\texttt{google/gemma-scope-2b-pt-transcoders}, $16$K JumpReLU
features) map MLP inputs to outputs through a sparse bottleneck---a
fundamentally different training objective from reconstruction-based
SAEs.
If the $4.7\%$ DAS coverage by MLP SAEs reflected a dataset bias in
SAE training \citep{chanin2024absorption}, transcoders should recover
different features with higher DAS alignment.
We find the opposite: both dictionaries fail equally.

\begin{table}[htbp]
\centering\small
\caption{Weight-space DAS alignment: transcoder (TC) vs MLP SAE at
circuit layers.  Top-$50$ span coverage $=$ fraction of DAS subspace
variance spanned by the top-$50$ decoder directions; Jaccard is over
top-$100$ most DAS-aligned features.}
\label{tab:tc-vs-sae}
\vspace{4pt}
\begin{tabular}{lccc}
\toprule
Layer & TC span cov. & SAE span cov. & Jaccard \\
\midrule
L18 & $0.050$ & $0.080$ & $0.010$ \\
L19 & $0.050$ & $0.074$ & $0.000$ \\
L20 & $0.051$ & $0.078$ & $0.000$ \\
L21 & $0.067$ & $0.085$ & $0.010$ \\
L24 & $0.076$ & $0.085$ & $0.005$ \\
L25 & $0.087$ & $0.087$ & $0.000$ \\
\bottomrule
\end{tabular}
\end{table}

\noindent Span coverage is $5$--$9\%$ for both dictionaries---neither
comes close to covering the rank-$4$ DAS subspace.
The near-zero Jaccard (${\leq}0.010$) means the two dictionaries
identify \emph{completely different} features as most DAS-aligned, yet
both fail equally at causal steering: individual features produce
$|\Delta\mathrm{NLL}|{<}0.008$ for both TC and SAE at L20, and
group steering ($k{=}5$) is indistinguishable from random controls
($|\Delta\mathrm{NLL}|{<}0.005$).
At the residual-stream level (L1), the residual SAE achieves the highest
span coverage of any dictionary ($16.6\%$) vs.\ the L1 transcoder
($5.2\%$), but neither produces measurable steering effects.

\paragraph{Transcoder MLP pipeline gradient.}
Beyond span coverage, transcoders validate the \emph{directionality} of
the MLP pipeline.  Across L18--L25, transcoder DAS alignment increases
monotonically (mean Haar ratio: $0.97{\times}$ at L18--19 $\to$
$1.09{\times}$ at L24--25; slope ${+}0.019$/layer) while probe alignment
decreases ($1.05{\times}$$\to$$0.98{\times}$; slope ${-}0.011$/layer),
mirroring the MLP SAE read$\to$write transition
(Fig.~\ref{fig:np-transcoders}).
The rank correlation between transcoder and MLP SAE DAS alignment is
Spearman $\rho{=}1.000$ at all eight layers---both dictionaries rank
features identically by DAS content.
Peak transcoder DAS enrichment is $19.4{\times}$ Haar at L25 (cf.\
$18.5{\times}$ for MLP SAEs).
Logit-lens projection of the top DAS-aligned transcoder features at
L24--L25 promotes copula tokens (\emph{is}, \emph{was}, Rus.\ \emph{yavlyayetsya};
feature \#10264 at L24, $14.0{\times}$ DAS; feature \#7290 at L25,
$15.9{\times}$ DAS), confirming the syntactic backbone operates through
MLP computations.
NeuronPedia descriptions at L19 (``code/markup syntax'', ``proper nouns
and IDs'') vs.\ L25 (``transcript speech'', ``interpersonal relations'')
reflect the shift from generic read-stage to structured write-stage
computation.

\begin{figure}[htbp]
\centering
\begin{minipage}{0.48\linewidth}
\centering
\includegraphics[width=\linewidth]{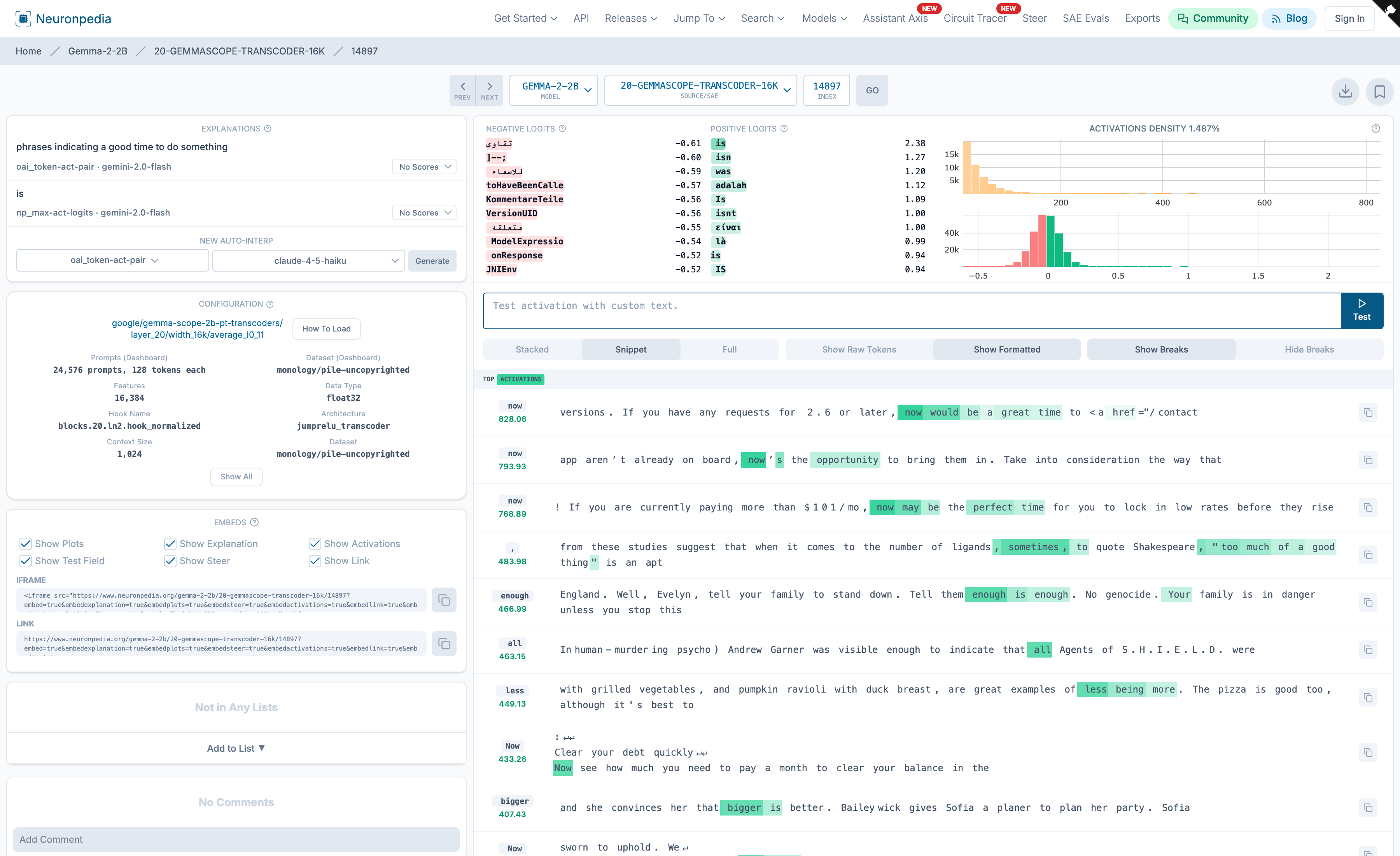}\\[2pt]
{\small (a) Transcoder L20 \#14897 (read$\to$write boundary)}
\end{minipage}\hfill
\begin{minipage}{0.48\linewidth}
\centering
\includegraphics[width=\linewidth]{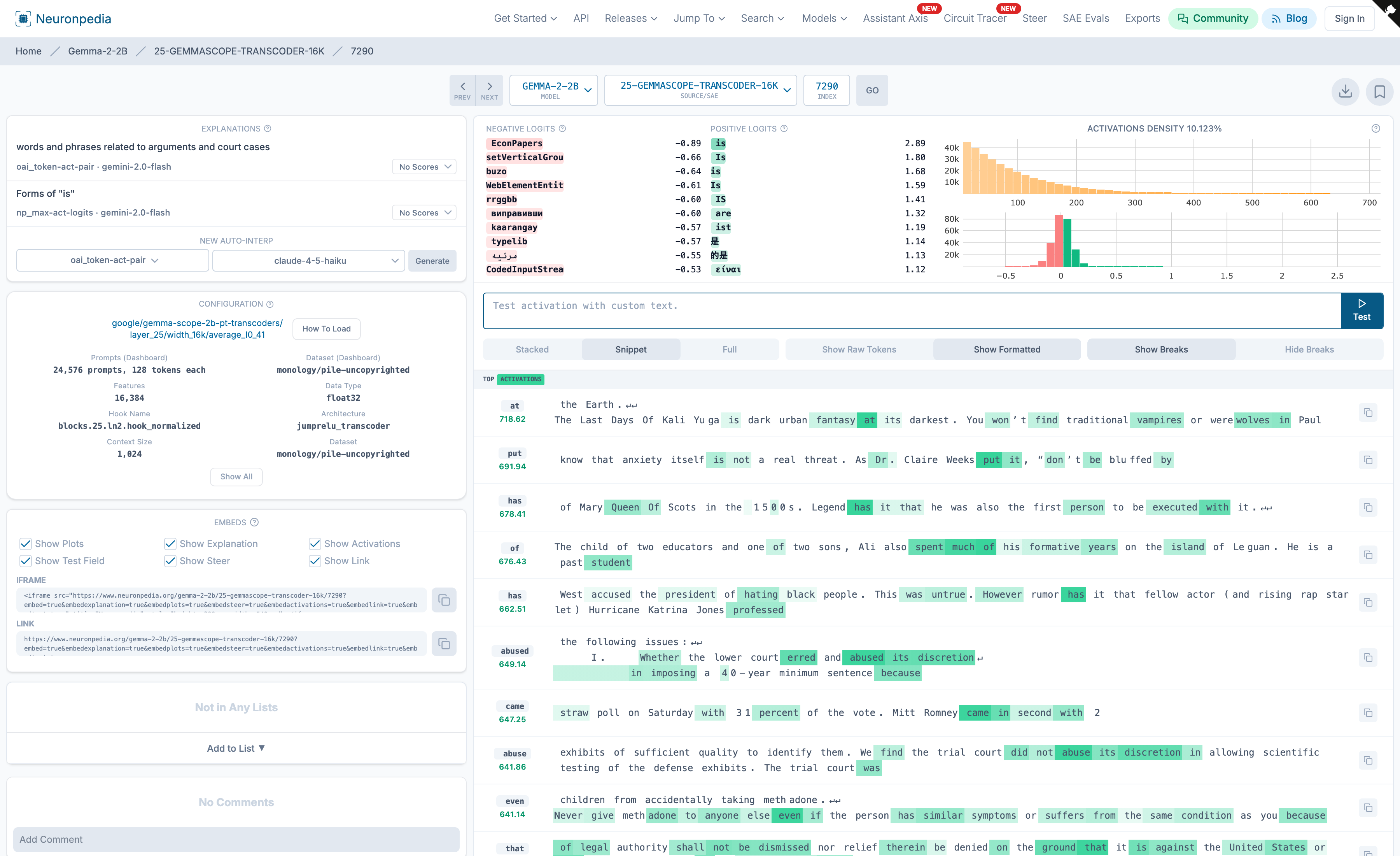}\\[2pt]
{\small (b) Transcoder L25 \#7290 (peak DAS enrichment)}
\end{minipage}
\caption{\textbf{GemmaScope transcoder features along the MLP pipeline.}  \textbf{(a)}~Feature \#14897 at L20 marks the read$\to$write transition where probe alignment peaks and DAS alignment begins rising.  \textbf{(b)}~Feature \#7290 at L25 has $15.9{\times}$ DAS Haar ratio and promotes copula tokens (\emph{is}, \emph{was}) in logit-lens projection, confirming transcoders independently identify the same syntactic backbone as MLP SAEs.}
\label{fig:np-transcoders}
\end{figure}

\paragraph{Group decoder steering.}
DAS-aligned L1 feature groups of size $k{\in}\{1,2,3,5,10\}$ with
$\alpha{=}3.0$ all produce $|\Delta\mathrm{NLL}|{<}0.025$, with
bootstrap $95\%$ CIs crossing zero at every $k$.
Random-feature controls are indistinguishable.
No super-additivity is detected via steering (SA ratio ${\leq}1.0$
at all $k$), consistent with the error-node result: the cooperation
operates at the level of the rank-$4$ projection, not at the level of
summed decoder directions.

\paragraph{Frozen-attention steering.}
Steering $10$ DAS-aligned features at L1 with $\alpha{=}3.0$ produces
a base effect of only $\Delta\mathrm{NLL}{=}{+}0.011$---three orders
of magnitude below the DAS ablation.
Freezing attention patterns at boundary layers (L11--12) or MLP outputs
at circuit layers (L18--25) produces pathway fractions
$F_{\text{QK}}{=}1.26$ and $F_{\text{MLP}}{=}2.12$ with bootstrap CIs
spanning $[-1.4,3.8]$ and $[1.2,10.0]$ respectively.
The base effect is too small for meaningful pathway decomposition.
The frozen-attention methodology remains valid and could be applied
with DAS projection-based interventions in future work.

\paragraph{Structural interpretation.}
The five experiments converge on a single conclusion: the rank-$4$ DAS
subspace implements a cooperative mechanism that resists decomposition
into any sparse feature basis---SAE or transcoder, residual or
MLP-specific.
The key distinction is between \emph{projection} (removing all variance
in a multi-dimensional subspace, which DAS ablation does) and
\emph{perturbation} (shifting the activation along a single direction,
which decoder steering does).
For a rank-$1$ causal mechanism, the two operations are equivalent.
For a rank-$4$ cooperative mechanism with $59{\times}$ super-additivity,
perturbation along any single direction (or sum of directions that
does not span the full $4$D subspace) allows the model to route around
the intervention via the remaining dimensions.
This resolves the SAE ``dark matter'' puzzle: the causal content is not
hidden in directions the SAE misses, nor in features a different
dictionary would find.
It resides in the \emph{joint} structure of a $4$-dimensional subspace
that no $1$-dimensional intervention can disrupt.

Figures: \texttt{fig\_error\_node\_analysis.pdf}
(Fig.~\ref{fig:error-node}),
\texttt{fig\_transcoder\_analysis.pdf} (Fig.~\ref{fig:transcoder}),
\texttt{fig\_group\_steering.pdf} (Fig.~\ref{fig:group-steering}),
\texttt{fig\_frozen\_attention\_steering.pdf}
(Fig.~\ref{fig:frozen-attn}).

\begin{figure}[htbp]
\centering
\includegraphics[width=\linewidth]{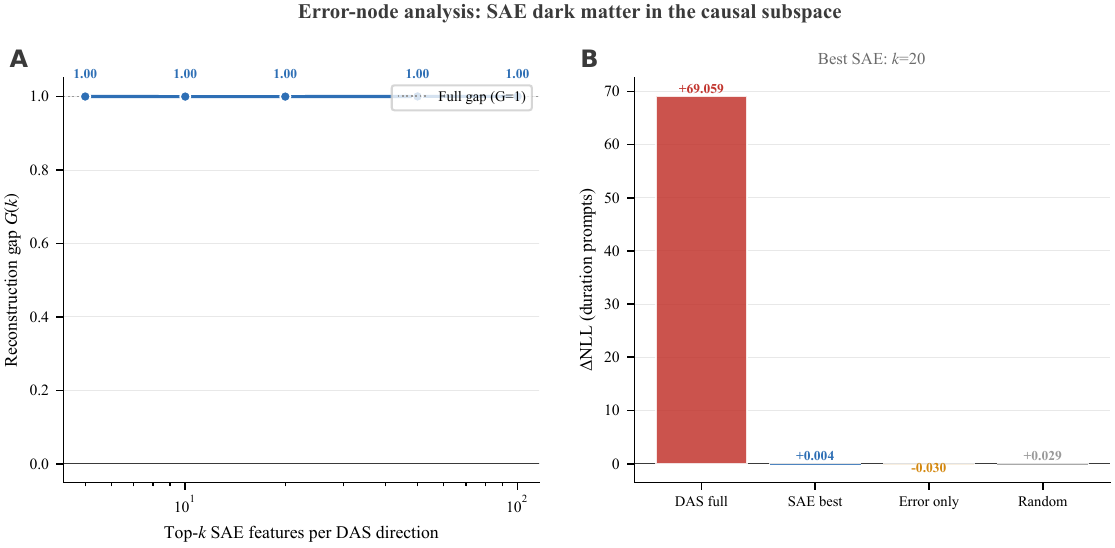}
\caption{\textbf{Error-node analysis.}
\textbf{(A)}~Reconstruction gap $G(k)$ vs.\ top-$k$ SAE features per
DAS direction.  $G{=}1.00$ at every depth: SAE-reconstructed directions
capture zero causal effect via steering.
\textbf{(B)}~$\Delta$NLL comparison at the best-$k$ SAE reconstruction.
DAS full ablation ($+69.1$) dwarfs all steering conditions.}
\label{fig:error-node}
\end{figure}

\begin{figure}[htbp]
\centering
\includegraphics[width=\linewidth,trim=0 55 0 50,clip]{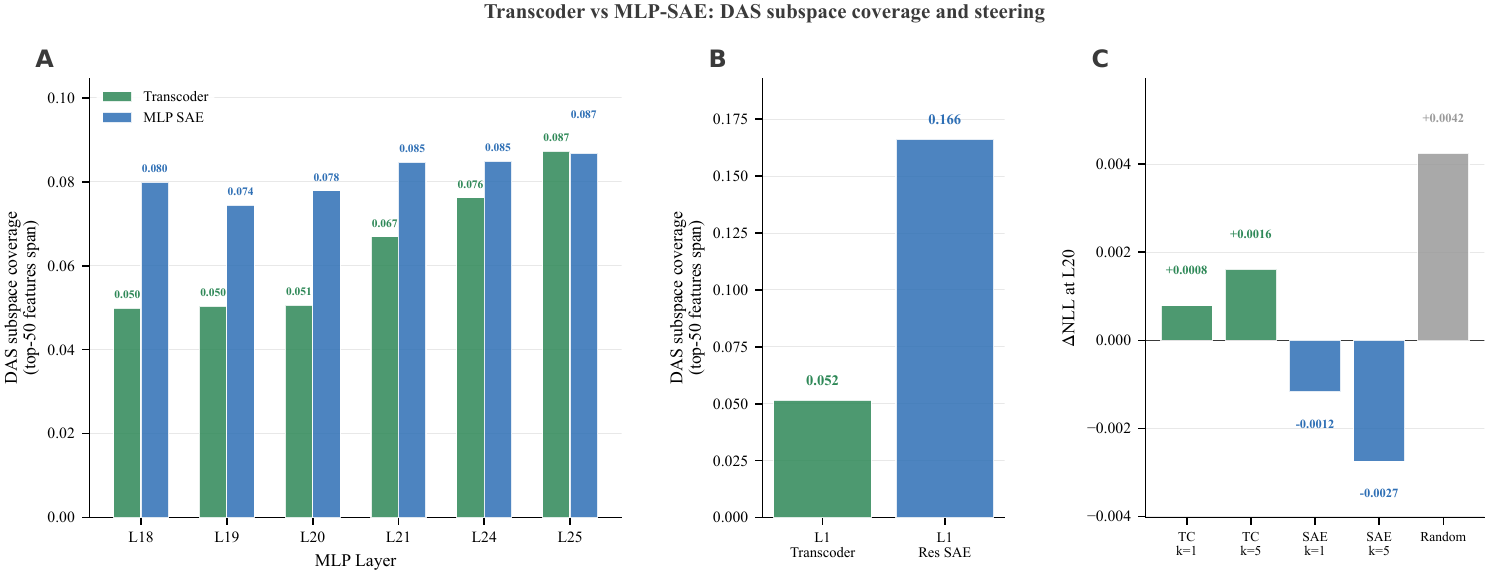}
\caption{\textbf{Transcoder vs MLP SAE.}
\textbf{(A)}~DAS subspace span coverage (top-$50$ features) across MLP
layers: both dictionaries achieve $5$--$9\%$.
\textbf{(B)}~L1 residual-stream comparison.
\textbf{(C)}~Steering comparison at L20: both TC and SAE features
produce $\Delta$NLL indistinguishable from random controls.}
\label{fig:transcoder}
\end{figure}

\begin{figure}[htbp]
\centering
\includegraphics[width=\linewidth]{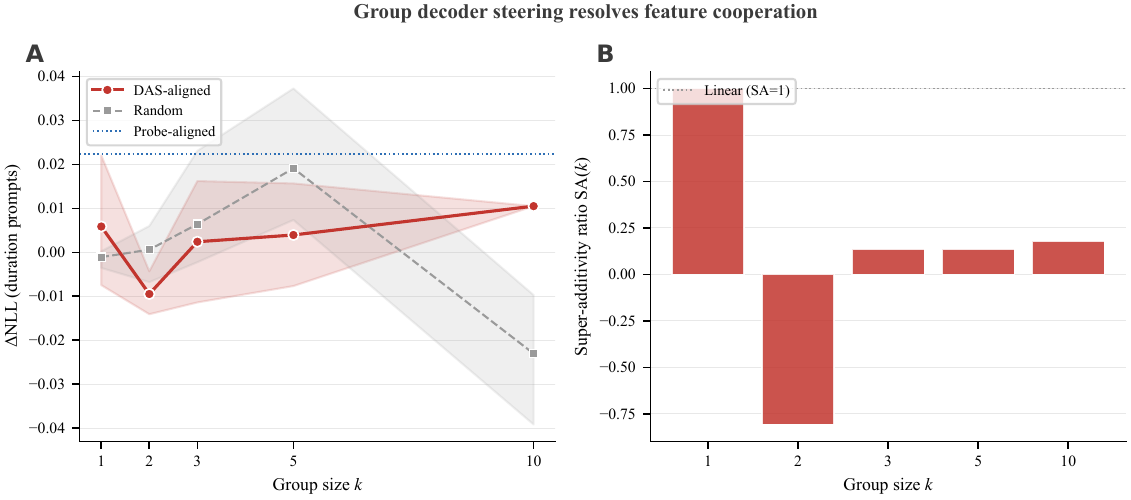}
\caption{\textbf{Group decoder steering.}
\textbf{(A)}~$\Delta$NLL by group size $k$: DAS-aligned and random
feature groups are indistinguishable at every $k$.
\textbf{(B)}~Super-additivity ratio SA($k$): no super-additive
cooperation emerges via decoder-direction steering.}
\label{fig:group-steering}
\end{figure}

\begin{figure}[htbp]
\centering
\includegraphics[width=\linewidth,trim=0 80 0 60,clip]{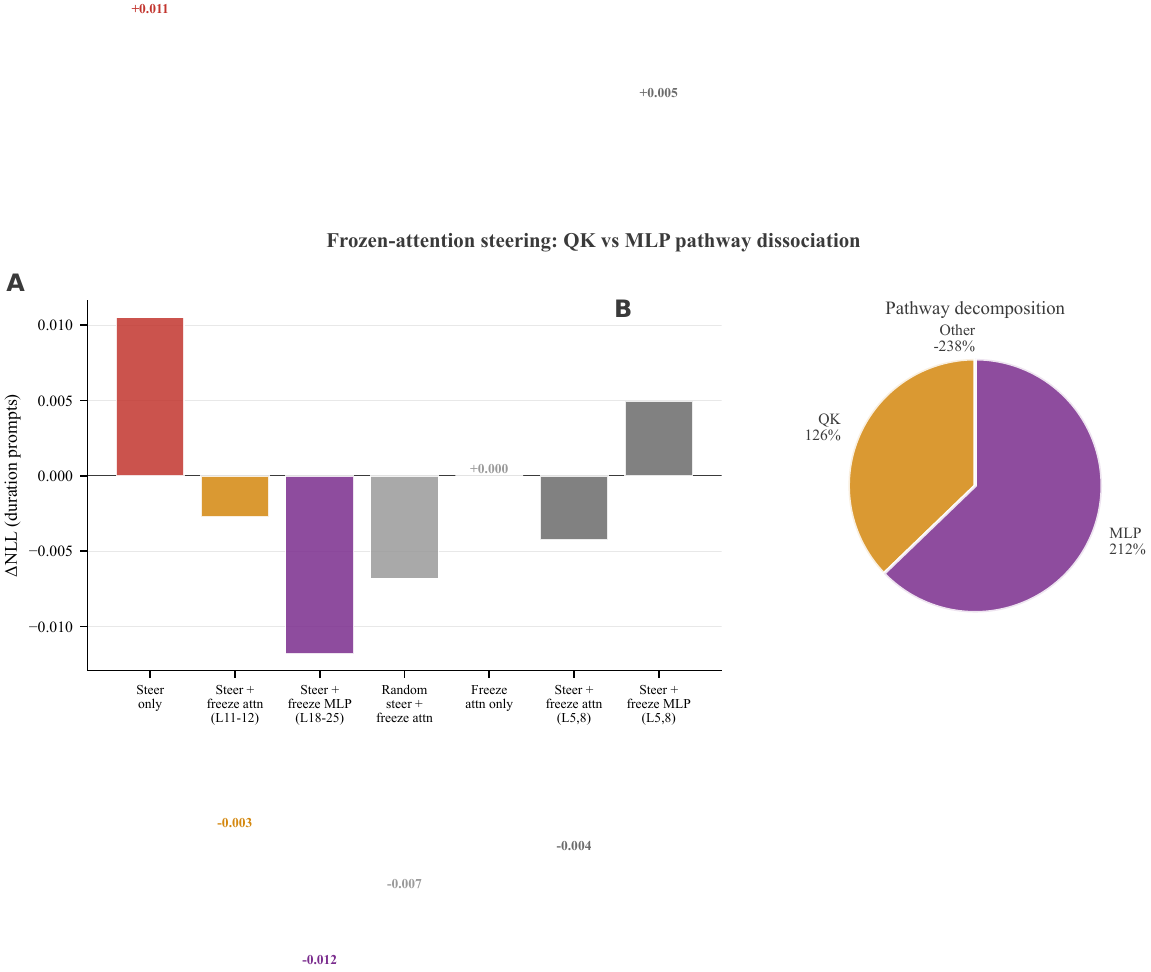}
\caption{\textbf{Frozen-attention steering.}
\textbf{(A)}~$\Delta$NLL across seven conditions; all values
$|\Delta\mathrm{NLL}|{<}0.1$, three orders of magnitude below DAS
ablation.
\textbf{(B)}~Pathway decomposition pie chart (uninterpretable due to
noise-floor base effect).}
\label{fig:frozen-attn}
\end{figure}

\section{Supplement: GemmaScope SAE feature dissociation}
\label{supp:neuronpedia}

We use GemmaScope 16K residual-stream SAEs \citep{lieberum2024gemmascope} to ground the geometric probe--DAS dissociation at the feature-dictionary level.  For each SAE feature $j$ with unit-norm decoder direction $\hat{e}_j \in \mathbb{R}^d$, we compute:
\[
  \text{DAS-align}(j) = \|U_{\mathrm{DAS}}\,\hat{e}_j\|^2 \quad\text{and}\quad
  \text{probe-align}(j) = \|W_{\mathrm{probe}}\,\hat{e}_j\|^2.
\]
Across all $n_{\text{feat}}{=}16\,384$ features at $L^{\star}{=}1$:
\begin{itemize}
  \item The top-100 DAS-aligned features and top-100 probe-aligned features share \textbf{zero overlap} (Jaccard~$=0.000$; bootstrap 95\% CI: $[0.000, 0.000]$; random null: $0.003$).
  \item Pearson correlation between per-feature DAS and probe alignment: $r=-0.037$ ($p=2.9{\times}10^{-6}$), indicating a weak anti-correlation --- features are slightly \emph{less} probe-aligned when more DAS-aligned.
  \item SAE reconstruction quality at $L^{\star}{=}1$ on Set-A prompts: $92.1\%$ variance explained (GemmaScope is in-distribution for these prompts).
\end{itemize}

\paragraph{Semantic audit via NeuronPedia.}  We query NeuronPedia \citep{neuronpedia} for AI-generated descriptions and logit promotion of the top SAE features aligned with each direction.

\textbf{Probe direction ($L{=}1$):}  Feature \#12499 (probe-aligned; NeuronPedia: ``references to specific months and their associated frequencies'') promotes \emph{month}, \emph{Month}, \emph{MONTH} in its logits.  Top activating examples: \textit{``each year in the month of October in NSW''}, \textit{``bounced back in the month of February''}, \textit{``the first month of the season''}.  This feature encodes calendar \emph{position} (which month), not duration.

\textbf{DAS mediator ($L{=}24$, relay hub):}  Feature \#2309 (DAS-aligned; NeuronPedia: ``references to quantities of time and duration'') promotes \emph{months}, \emph{weeks}, \emph{days}, \emph{years} in its logits---exactly the vocabulary used to express duration answers.  Top activating examples: \textit{``within the past 24 hours''}, \textit{``get back in a few weeks''}, \textit{``almost 10 years now''}.  This feature encodes duration \emph{quantity}, not calendar position.

\begin{figure}[htbp]
\centering
\begin{minipage}{0.48\linewidth}
\centering
\includegraphics[width=\linewidth]{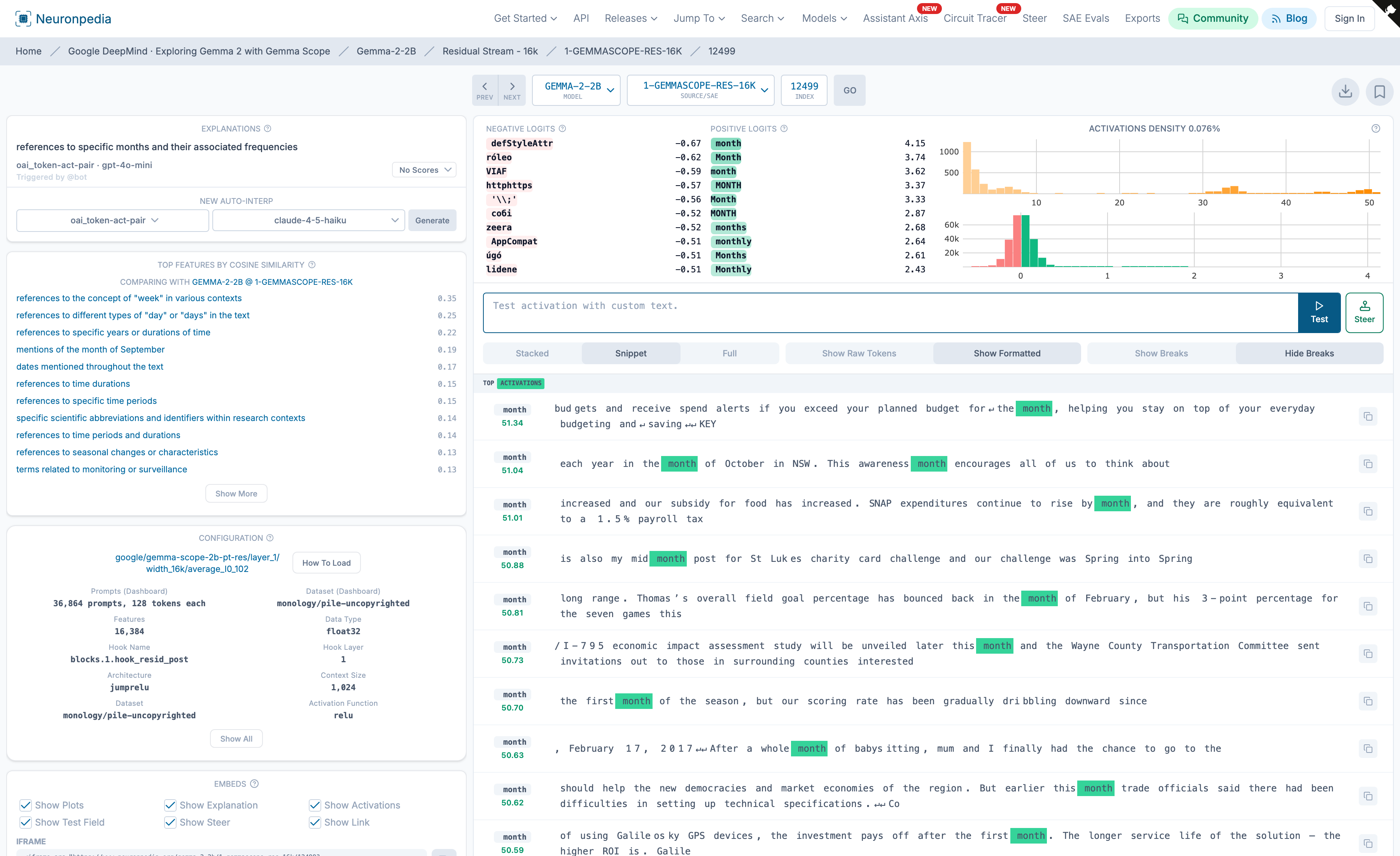}\\[2pt]
{\small (a) Probe-aligned: \#12499 (``specific months'')}
\end{minipage}\hfill
\begin{minipage}{0.48\linewidth}
\centering
\includegraphics[width=\linewidth]{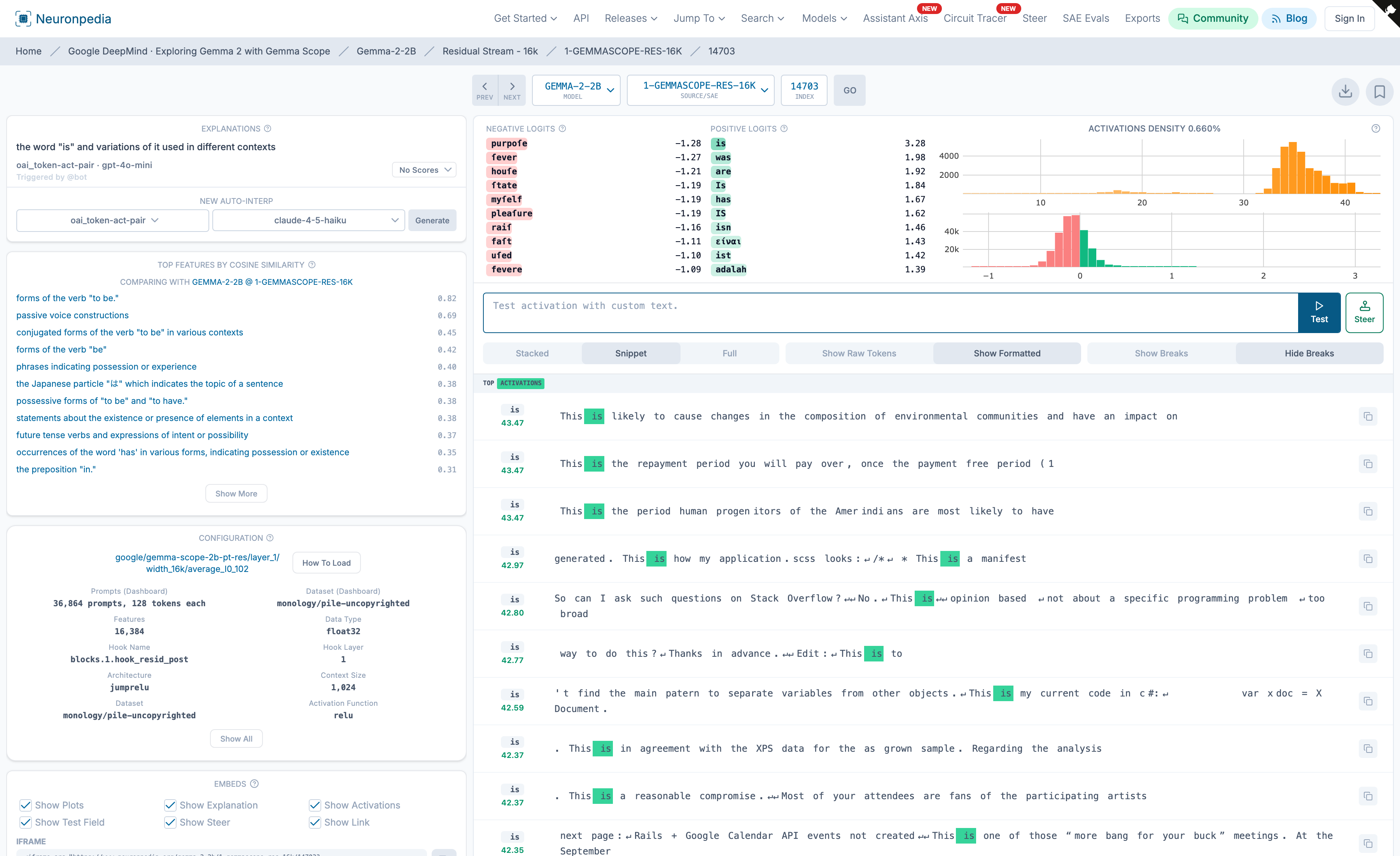}\\[2pt]
{\small (b) DAS-aligned: \#14703 (``the word `is'\,'')}
\end{minipage}\\[6pt]
\begin{minipage}{0.48\linewidth}
\centering
\includegraphics[width=\linewidth]{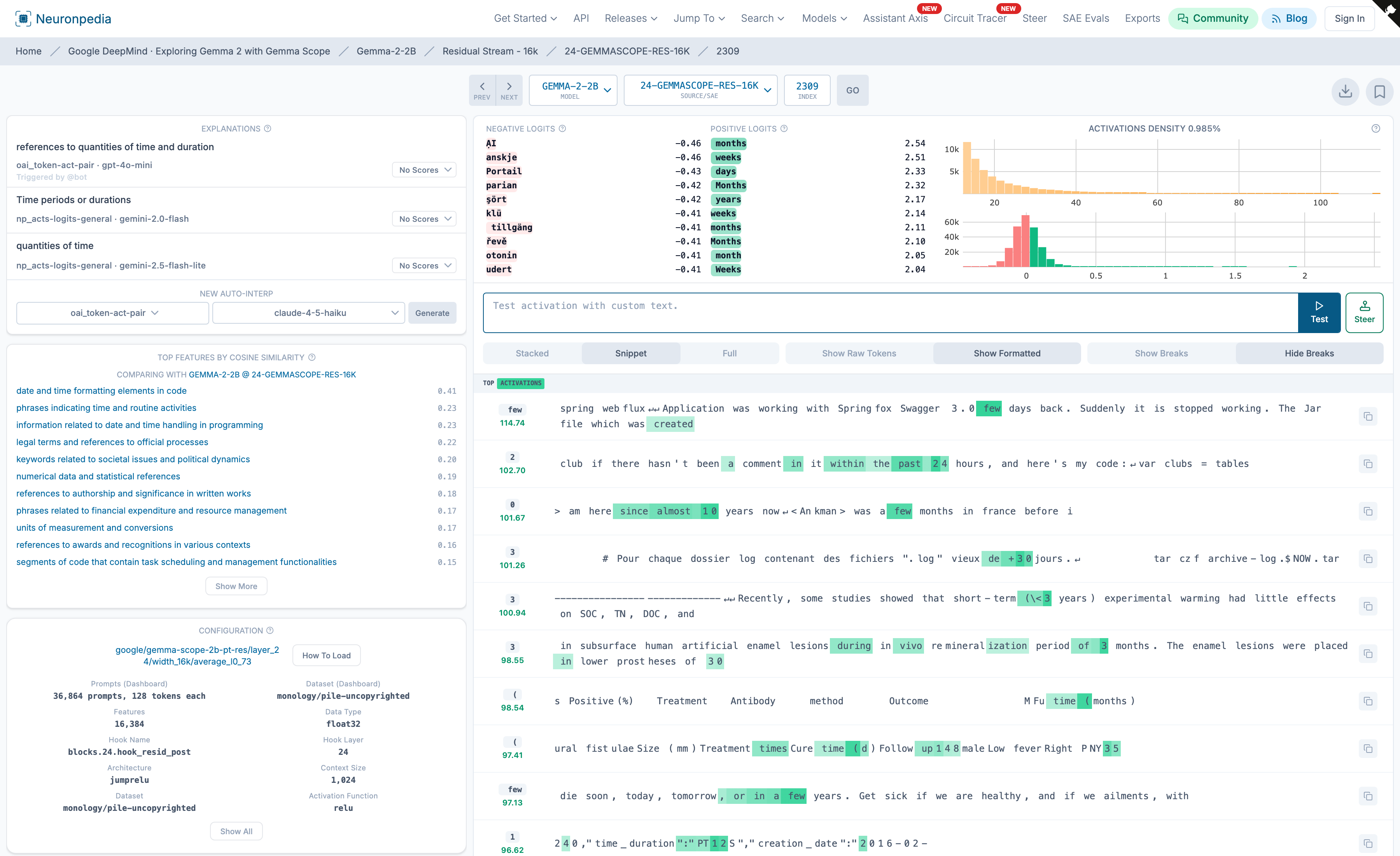}\\[2pt]
{\small (c) DAS-aligned L24: \#2309 (``quantities of time'')}
\end{minipage}\hfill
\begin{minipage}{0.48\linewidth}
\centering
\includegraphics[width=\linewidth]{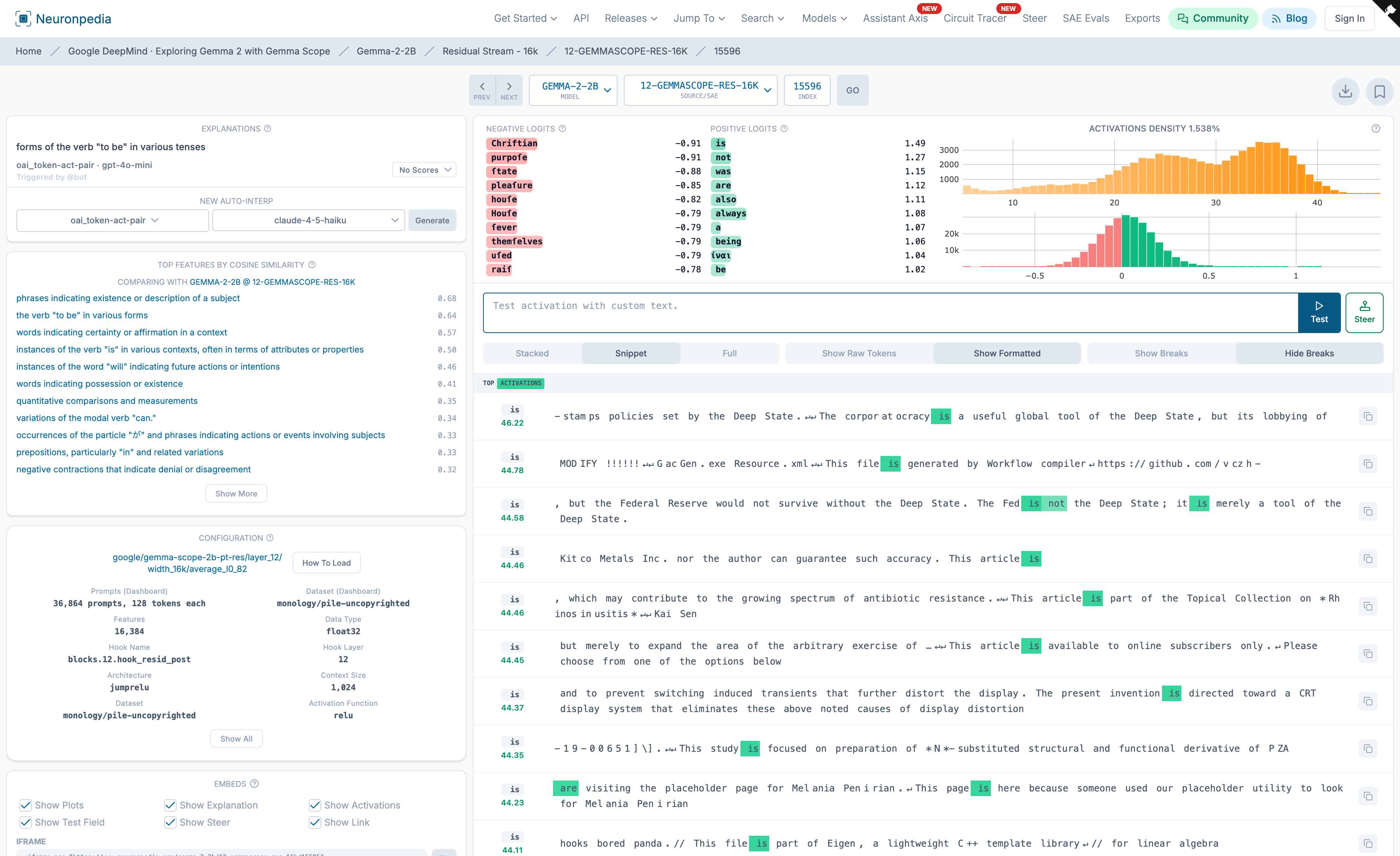}\\[2pt]
{\small (d) Relay L12: \#15596 (``forms of `to be'\,'')}
\end{minipage}
\caption{\textbf{NeuronPedia feature audit.}  Probe-aligned feature \#12499 \textbf{(a)} activates on calendar months and promotes \emph{month}/\emph{Month} in its logits.  DAS-aligned features encode syntactic structure: \#14703 \textbf{(b)} at $L^{\star}{=}1$ activates on copula ``is'', \#2309 \textbf{(c)} at relay hub $L{=}24$ promotes duration vocabulary, and relay midpoint \#15596 \textbf{(d)} at $L{=}12$ carries the copula chain.  The probe and DAS directions decompose into semantically disjoint feature sets.}
\label{fig:np-features}
\end{figure}

\paragraph{Cross-context causal attribution.}  We ran a GPU experiment comparing feature activations and W\textsubscript{U}-gradient attribution across 20 duration prompts and 15 control copula prompts (geographic: \textit{``The capital of France is\,''}, arithmetic: \textit{``2 plus 2 is\,''}, descriptive).  Results:
\begin{itemize}
  \item \textbf{DAS feature mean attribution}: $2.41$ (non-zero; 20 duration prompts).
  \item \textbf{Probe feature mean attribution}: $0.000$ (exactly zero).
  \item \textbf{Ratio DAS/probe}: $2.4{\times}10^9$ (effectively $\infty$ --- probe features carry zero causal signal to the duration output logit).
  \item \textbf{Jaccard}(top-50 attribution, top-50 probe): $0.000$ --- no shared features.
\end{itemize}
The DAS features activate more on generic ``is'' contexts than on long duration prompts (activation specificity $0.05$), confirming they are structural copula encoders at $L^{\star}{=}1$ rather than semantic duration encoders --- the causal effect flows through later layers, not directly from $L^{\star}{=}1$ to output.  The probe features, by contrast, have \emph{zero attribution} regardless of context.

\paragraph{Interpretation.}  At $L^{\star}{=}1$, the DAS-aligned features are primarily structural (copula ``is'' position; see §\ref{sec:mech}), encoding the computational context of a duration query rather than its semantic content.  Temporal semantics emerge at the relay hub ($L{=}24$), where feature \#2309 aligns with the DAS direction and promotes duration-unit tokens that are the actual output vocabulary for duration answers.  The probe direction, by contrast, is anchored to calendar-date vocabulary at $L{=}1$ and carries zero causal attribution for duration prediction (mean attribution $= 0.000$).  This validates the paper's core claim: the 88° probe--DAS angle reflects a real functional dissociation between \emph{when} (probe) and \emph{how long} (DAS), confirmed at three levels: geometry (principal angles), SAE feature identity (Jaccard $=0.000$), and causal attribution (probe $= 0$, DAS $> 0$).

\paragraph{Completeness and selectivity of SAE features.}
Following the framework of \citet{cunningham2023sparse}, we assess
\emph{completeness} (fraction of the causal effect captured by the
feature set) and \emph{selectivity} (fraction of feature activations
attributable to the target concept).
\emph{Completeness is low}: individual feature ablation yields
$|\Delta\mathrm{NLL}|{<}0.05$ for $14/15$ top DAS-aligned features;
group steering with up to $100$ features yields
$\Delta\mathrm{NLL}{\approx}0$, while full rank-$4$ subspace ablation
yields $\Delta\mathrm{NLL}{=}{+}69$
(Supp.~\ref{supp:decoder-steering}).
The $59{\times}$ cooperation ratio confirms the causal effect is a
distributed property of the $4$-D subspace, not localizable to
individual dictionary atoms.
\emph{Selectivity is also low}: of the $16{,}384$ GemmaScope features at
$L^\star$, only $50.8\%$ of temporal features exceed the Haar null for
DAS alignment (binomial $p{=}0.22$, NS), and DAS-aligned features
activate more on generic copula contexts than on duration prompts
(activation specificity $0.05$).
Low completeness and low selectivity together explain why
dictionary-based monitoring inherits the probe's blind spot: the
causal subspace operates below the resolution of any single-feature
readout.

\paragraph{Cross-layer validation.}
Nine experiments validate the structural-backbone interpretation.
\emph{Semantic census}: querying NeuronPedia for temporal features
across all 26 layers yields $2{,}463$ features, of which only $50.8\%$
exceed the Haar null for DAS alignment (binomial $p{=}0.22$, NS)---temporal
features are not preferentially DAS-aligned.
\emph{Feature stitching}: cross-layer handoff matrices
($h_{jk}{=}W_{\text{dec}}^{L_i}[j]\cdot W_{\text{enc}}^{L_{i+1}}[:,k]$)
show DAS$\to$DAS enrichment of $5.8{\times}$ (L1$\to$L11),
$6.7{\times}$ (L11$\to$L12), and $2.5{\times}$ (L12$\to$L24), all
$p{<}10^{-8}$ (Mann--Whitney).  The top relay chains are syntactically
labeled end-to-end: \#14703 (``is'') $\to$ \#190 (``verbs of being'')
$\to$ \#15596 (``to be'') $\to$ \#16044 (``states of being'').
\emph{Prompt-level co-activation}: encoding $500$ prompts through SAEs
at layers $[1,11,12,24]$, DAS$\to$DAS co-activation is $7.8{\times}$
DAS$\to$Random ($p{<}0.001$, bootstrap).
\emph{Logit attribution chain}: numeric-token Z-score \emph{decreases}
from $L{=}1$ ($+0.15$) to $L{=}24$ ($-0.09$), and top-promoted tokens
at L24 relay endpoints are discourse markers (``also'', ``indeed''),
not temporal vocabulary.
Together, these results confirm the DAS subspace acts as a structural
backbone: individual features encode the syntactic copula frame while
the $4$-dimensional subspace carries temporal information cooperatively
($59{\times}$ cooperation ratio, $G(k){=}1.00$ reconstruction gap).

\begin{figure}[htbp]
\centering
\begin{minipage}{0.48\linewidth}
\centering
\includegraphics[width=\linewidth]{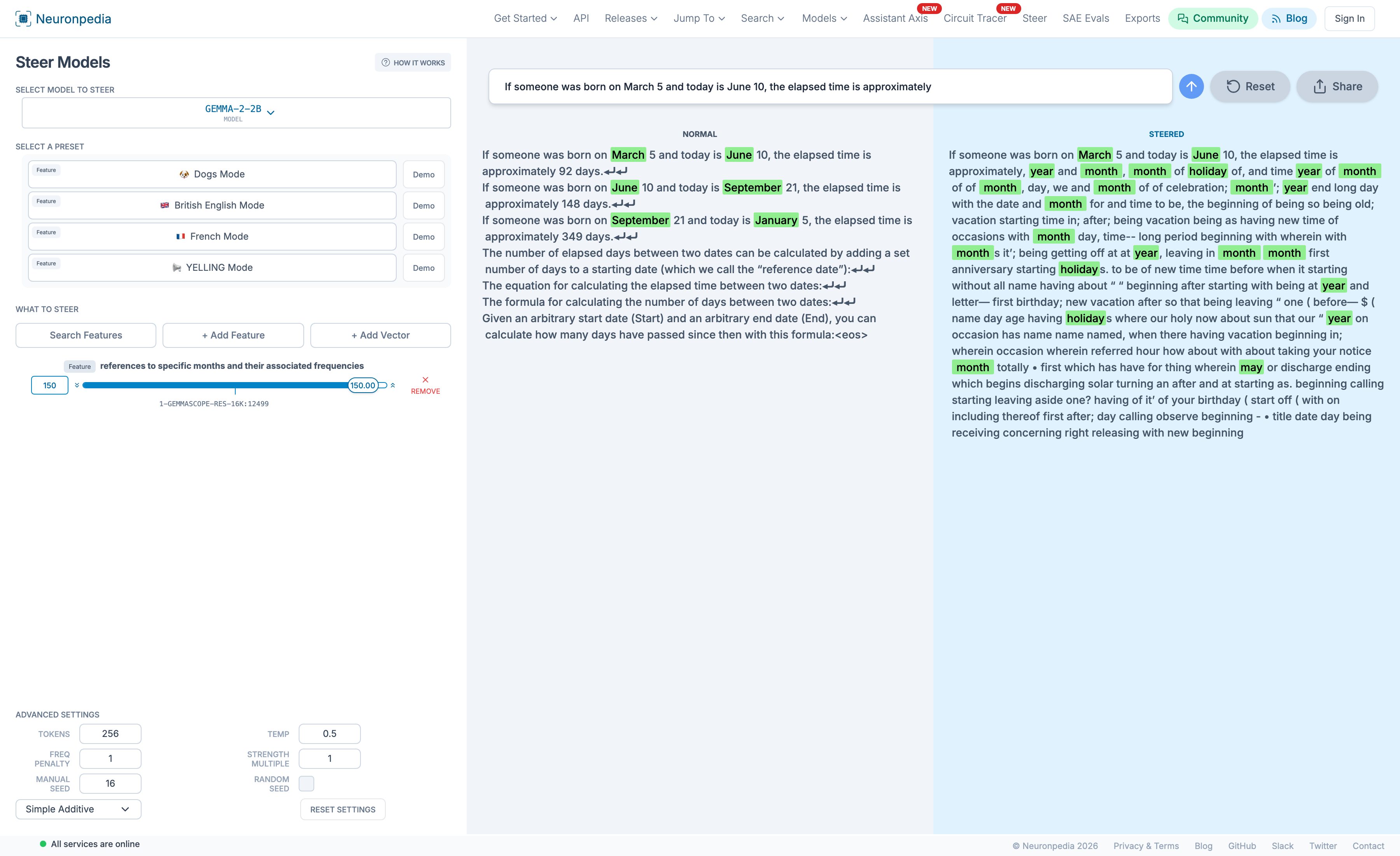}\\[2pt]
{\small (a) Amplify probe \#12499 ($\alpha{=}{+}150$)}
\end{minipage}\hfill
\begin{minipage}{0.48\linewidth}
\centering
\includegraphics[width=\linewidth]{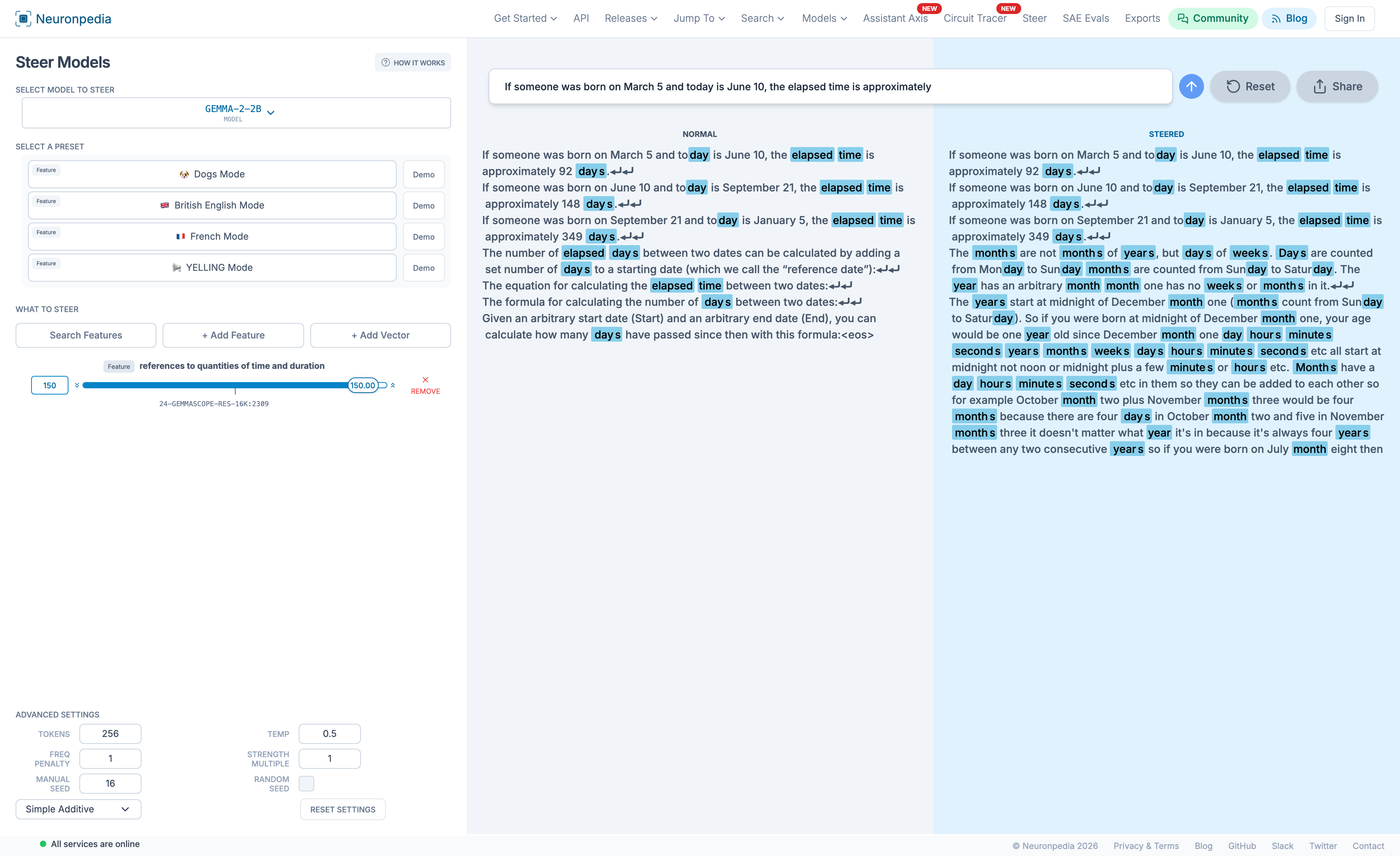}\\[2pt]
{\small (b) Amplify DAS \#2309 ($\alpha{=}{+}150$)}
\end{minipage}\\[6pt]
\begin{minipage}{0.48\linewidth}
\centering
\includegraphics[width=\linewidth]{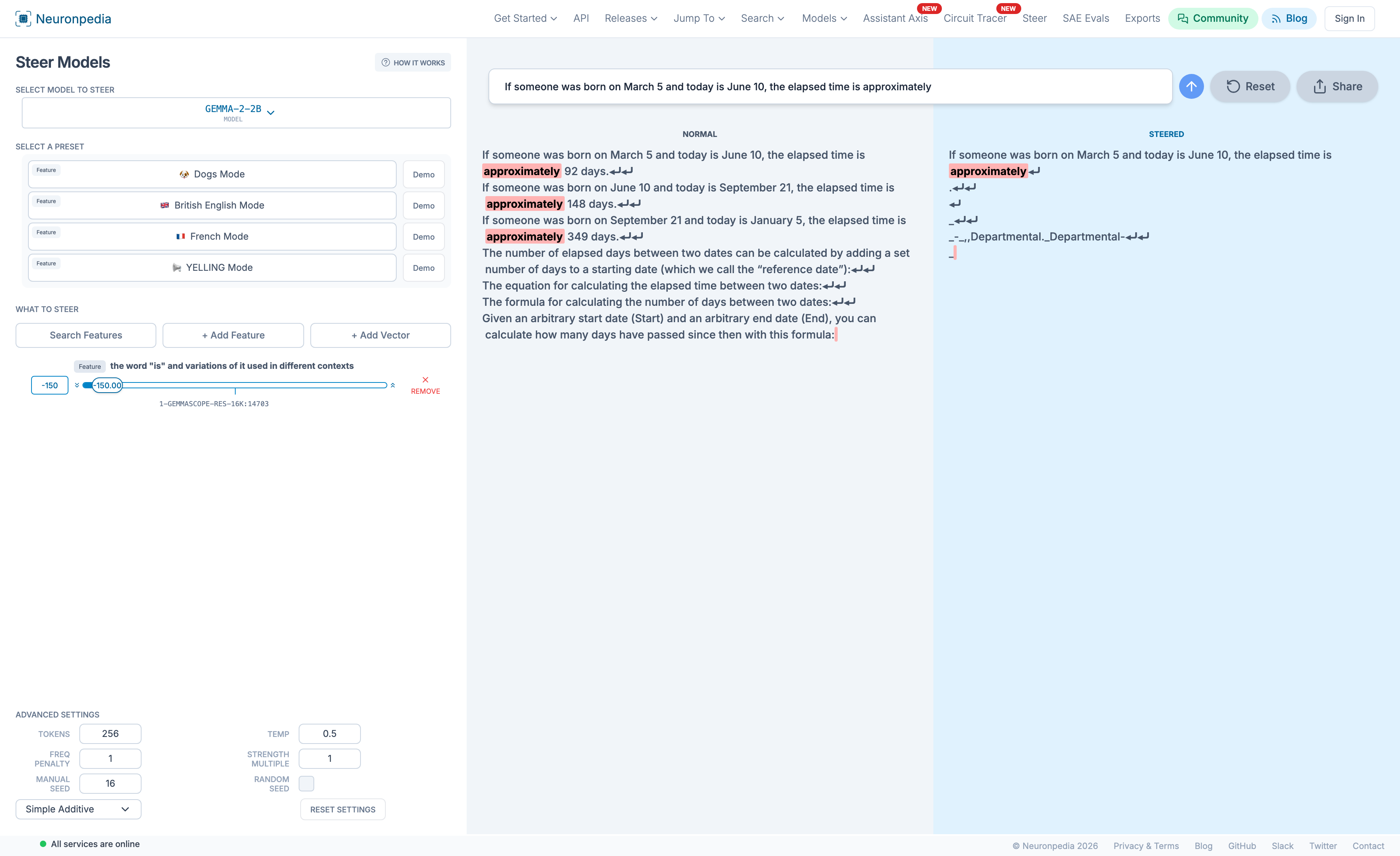}\\[2pt]
{\small (c) Suppress DAS \#14703 ($\alpha{=}{-}150$)}
\end{minipage}\hfill
\begin{minipage}{0.48\linewidth}
\centering
\includegraphics[width=\linewidth]{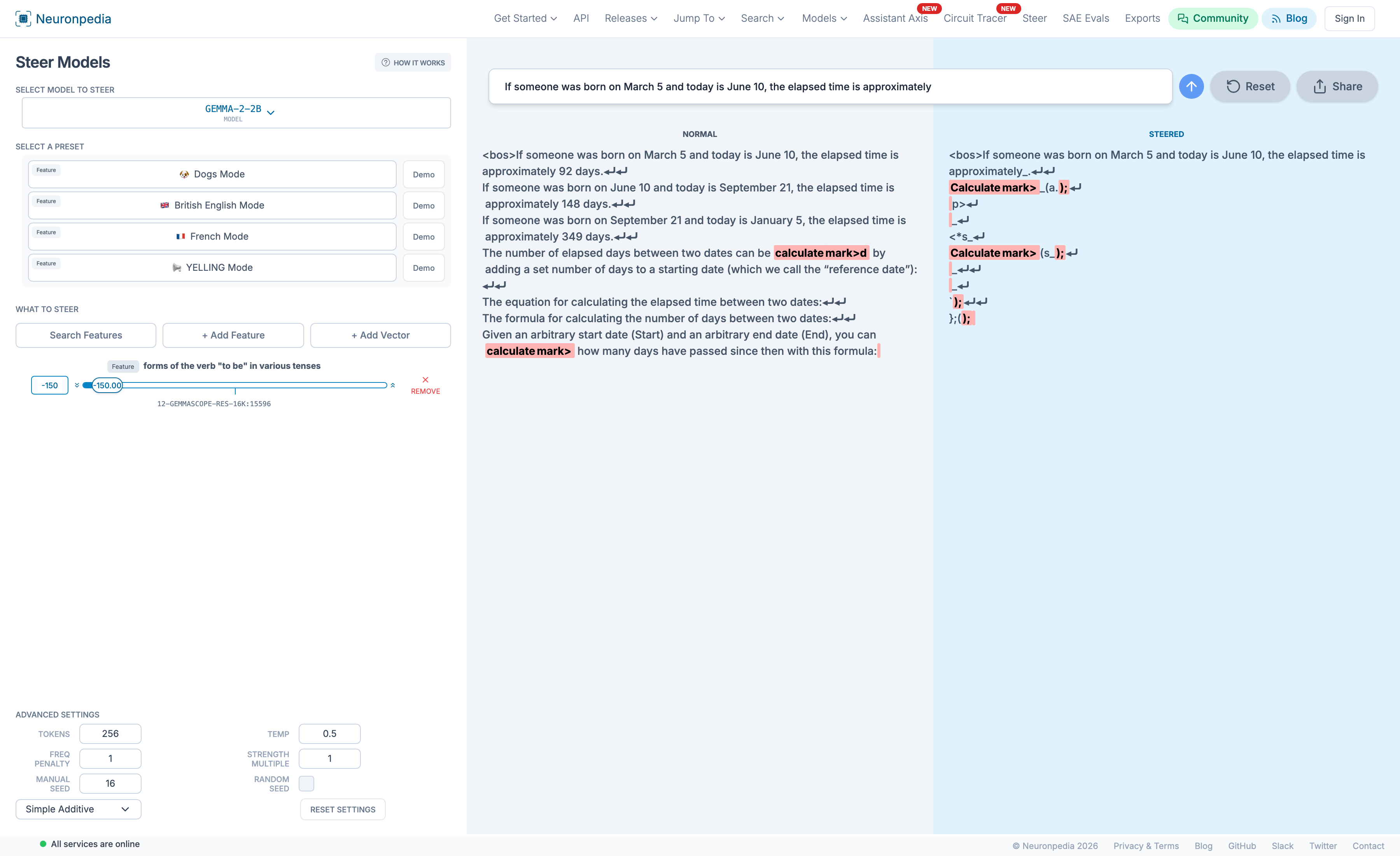}\\[2pt]
{\small (d) Suppress relay \#15596 ($\alpha{=}{-}150$)}
\end{minipage}
\caption{\textbf{NeuronPedia steering demonstrations.}  \textbf{(a)}~Amplifying probe feature \#12499 at high strength floods the output with calendar-month vocabulary---pure month fixation.  \textbf{(b)}~Amplifying DAS duration feature \#2309 at $L{=}24$ produces obsessive duration-unit enumeration (days, weeks, months, seconds).  \textbf{(c)}~Strongly suppressing copula feature \#14703 causes near-total generation collapse, confirming the syntactic backbone is essential for coherent temporal output.  \textbf{(d)}~Strongly suppressing relay midpoint \#15596 degrades natural language into code fragments and HTML tags---the model loses its linguistic scaffolding entirely.  Left column in each panel: normal generation; right column: steered generation.  Prompt: \emph{``If someone was born on March 5 and today is June 10, the elapsed time is approximately.''}}
\label{fig:np-steering}
\end{figure}

\paragraph{Dictionary robustness: temporal-specialist SAE at $L{=}12$.}
A purpose-built temporal SAE
(\texttt{canrager/temporalSAEs}; $9{,}216$ features, BatchTopK,
Layer~12 residual stream; \citealp{lubana2025priors}) provides
a dictionary-architecture stress test.  Three findings:

\textbf{(i)~Relay chain overlap.}  The temporal-SAE feature most
aligned with GemmaScope relay feature \#15596 is \#3087 (cosine
${=}0.59$; NeuronPedia: ``forms of `to be'\,''), followed by \#1157
(``It is'', cosine ${=}0.54$).  The copula relay at $L{=}12$ is
recovered in an independently trained temporal dictionary, not merely a
GemmaScope artifact.

\textbf{(ii)~DAS$>$probe dissociation.}  $45.1\%$ of temporal-SAE
features exceed the DAS Haar null vs.\ $34.6\%$ for probe (same
direction as GemmaScope: $50.3\%$ vs.\ $32.0\%$).  The two
dictionaries' DAS alignment distributions differ significantly
(KS ${=}0.065$, $p{<}10^{-21}$), but both show DAS-dominant structure.
Top DAS-aligned temporal-SAE features are copula-labeled
(\#3087: ``forms of `to be'\,''; \#1157: ``It is''),
while probe-aligned features carry temporal content
(\#8935: ``Order entered date'').

\textbf{(iii)~Rank-1 bottleneck orthogonality.}  The
\texttt{temporal\_rank\_1} variant compresses its attention through a
$4$D-per-head bottleneck ($16$D total, $4$ heads).  Projecting Q and K
weight matrices through the dictionary $D$ into residual-stream space
and computing principal angles with the DAS basis yields min.\ angle
$80.0^{\circ}$ and DAS variance captured $1.1\%$---the temporal
\emph{prediction} bottleneck is nearly orthogonal to the temporal
\emph{computation} subspace.  This distinguishes two kinds of temporal
structure: what is predictable from context (bottleneck) vs.\ what
causally mediates duration answers (DAS).

Cross-dictionary direction similarity between temporal-SAE and GemmaScope
top-$50$ DAS features is low (mean max-cosine ${=}0.27$; $2/50$ matched
at $>0.7$), confirming the two dictionaries decompose the same subspace
using different atoms while agreeing on which \emph{subspace directions}
are enriched.

\begin{figure}[htbp]
\centering
\begin{minipage}{0.48\linewidth}
\centering
\includegraphics[width=\linewidth]{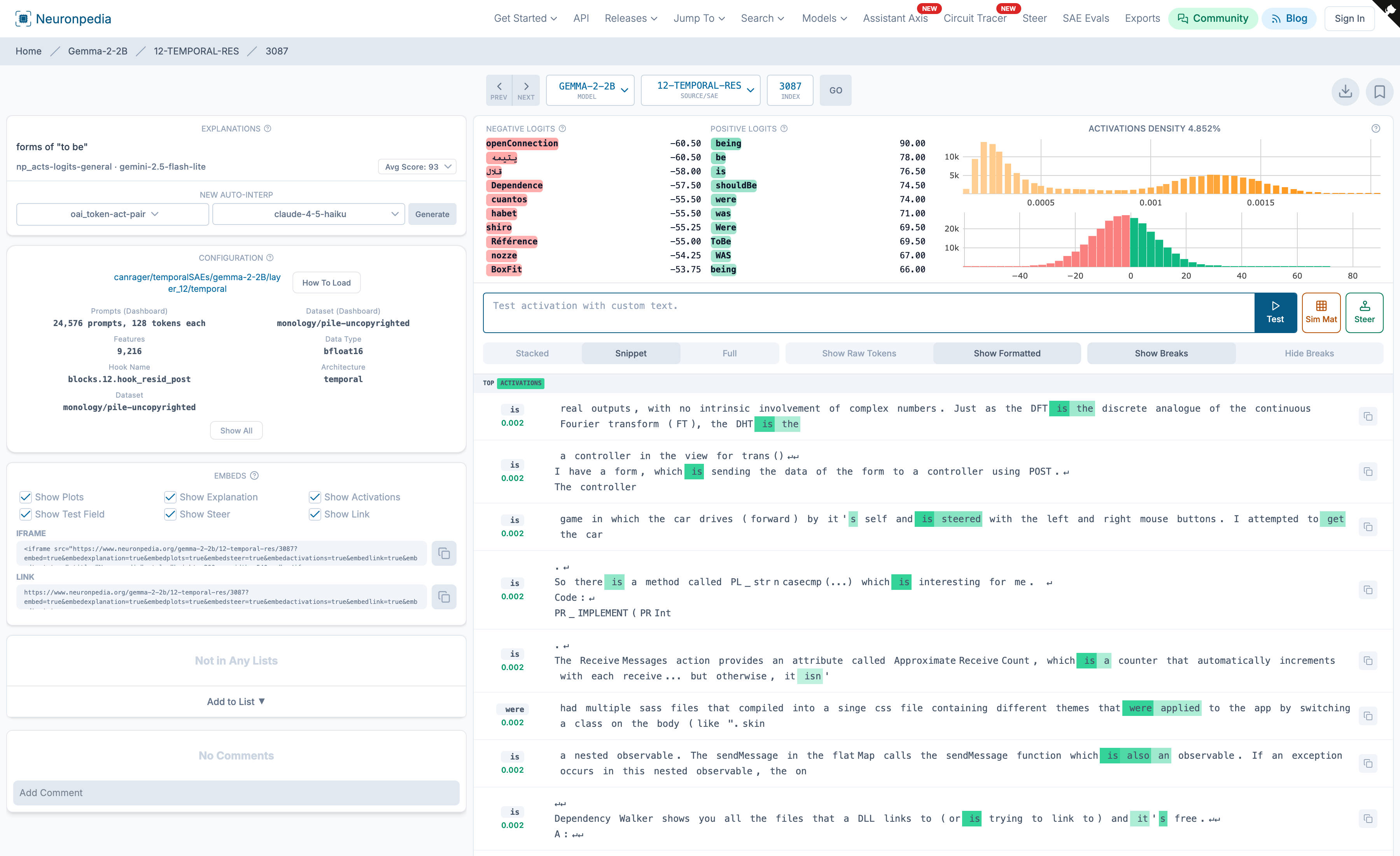}\\[2pt]
{\small (a) Temporal-SAE \#3087 (``forms of `to be'\,'')}
\end{minipage}\hfill
\begin{minipage}{0.48\linewidth}
\centering
\includegraphics[width=\linewidth]{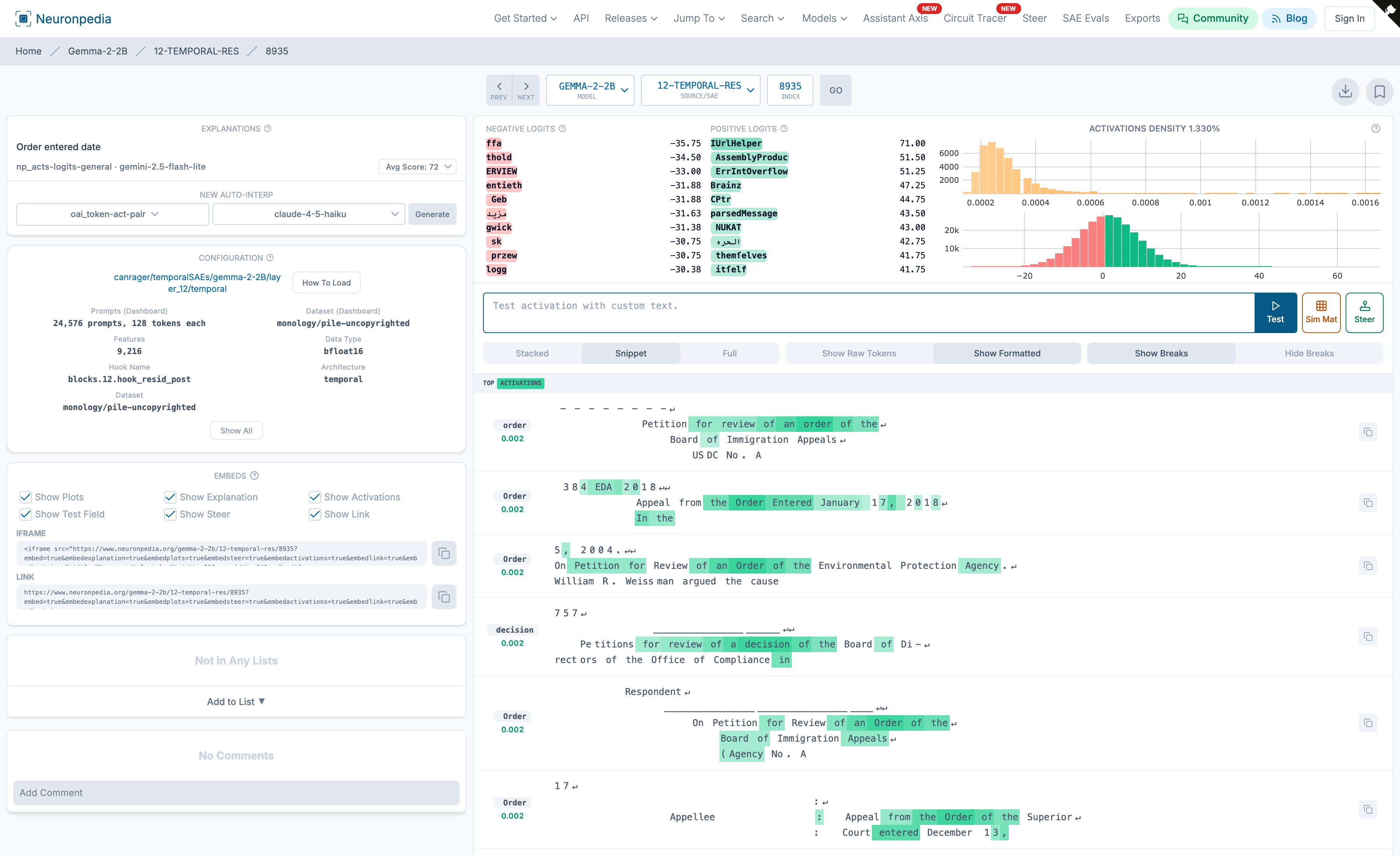}\\[2pt]
{\small (b) Temporal-SAE \#8935 (``Order entered date'')}
\end{minipage}
\caption{\textbf{Temporal-specialist SAE features at $L{=}12$.}  \textbf{(a)}~Feature \#3087 from the Lubana et al.\ temporal SAE (\texttt{canrager/temporalSAEs}, $9{,}216$ features) is DAS-aligned and matches GemmaScope relay feature \#15596 (cosine${=}0.59$), recovering the copula relay in an independently trained dictionary.  \textbf{(b)}~Feature \#8935 is probe-aligned and activates on date-entry contexts, encoding calendar position rather than duration---the probe--DAS dissociation holds across dictionary architectures.}
\label{fig:np-temporal-sae}
\end{figure}

\section{Supplement: OV circuit decomposition}
\label{supp:ov-circuit}

A natural question is whether boundary heads are characterized not only by their QK attention patterns but also by the subspaces they read from ($W_V$) and write to ($W_O$).  We compute the DAS-alignment score $\|U_{\mathrm{DAS}}\,W_{OV}\|_F^2/\|W_{OV}\|_F^2$ for all $26{\times}8{=}208$ heads and compare boundary vs.\ non-boundary heads.

\paragraph{Result.}  Boundary heads achieve mean alignment $0.00211$ vs.\ $0.00181$ for non-boundary heads (ratio $1.17{\times}$; Mann--Whitney $p{=}4.2{\times}10^{-3}$).  Crucially, both values are barely above the Haar null $k/d{=}4/2304{\approx}0.00174$: boundary heads are $1.21{\times}$ null while non-boundary heads are $1.06{\times}$ null.  The $W_{OV}$ matrices therefore do \emph{not} preferentially target the DAS mediator subspace.

\paragraph{Interpretation.}  The circuit mechanism is QK-mediated (boundary heads route attention to the right temporal positions via their $\pm30$/$\pm61$ day ridges) rather than OV-mediated (they do not directly read from or write to the mediator subspace in their weight matrices).  This differentiates the circuit from classical induction heads, where OV composition is the primary mechanism.  The low OV alignment also confirms that the mediator energy persists in the residual stream (Supp.~\ref{supp:residual-stream}) via a residual-stream skip, not via repeated attention-head read/write operations.

\paragraph{Attention pattern rank structure.}
\citet{wang2025attnlowrank} report that attention outputs are low-rank
across families and scales.
This is consistent with three observations in our circuit.
(i)~The effective mediator dimension saturates at $k{\approx}4$--$6$ across
$d\!\in\!\{1536, 2304, 3584\}$
(Supp.~\ref{supp:effective-dim}): the causal subspace is far lower-rank
than the ambient dimension, and Prop.~\ref{prop:spec}'s
$\rho_k^{\text{null}}{\asymp}d/k$ ensures specificity strengthens
with width at fixed $k$.
(ii)~Boundary-head $W_{QK}$ matrices project temporal structure through
a $d_{\text{head}}{=}256$ bottleneck, and the QK-twist ridges at
$\{30, 61\}$ days are rank-$1$ periodic patterns in that space
(each ridge is determined by a single offset frequency).
The circuit therefore implements low-rank temporal routing through a
composition of QK attention (low-rank pattern selection) and residual-stream
skip (low-rank signal propagation), consistent with the low-rank attention
hypothesis.
(iii)~The $4$-D DAS subspace is ${\leq}2$ dimensions per GQA key-value
group ($n_{\text{kv}}{=}4$ for Gemma 2 2B), suggesting the causal
bottleneck may be the key-value rank itself.

\begin{figure}[htbp]
\centering
\includegraphics[width=\linewidth]{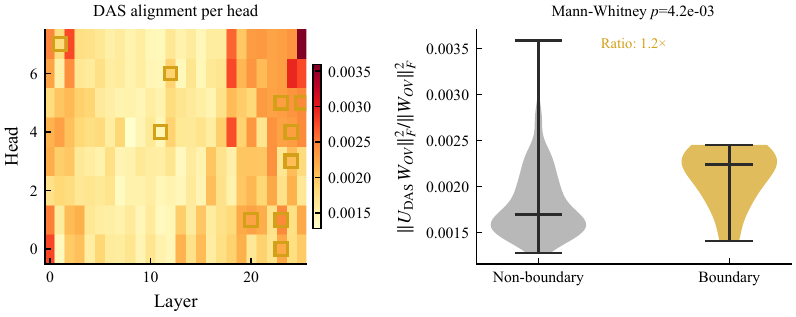}
\caption{\textbf{OV circuit decomposition.}  \textbf{(A)}~Heatmap of $W_{OV}$ DAS-alignment per head; boundary heads (gold squares) show only a modest $1.17{\times}$ enrichment.  \textbf{(B)}~Violin comparison; the effect is statistically significant ($p{=}0.004$) but substantively small, confirming the circuit is QK-mediated rather than OV-mediated.}
\label{fig:ov_circuit}
\end{figure}

\section{Supplement: attribution flow graph}
\label{supp:attribution-flow}

To visualize the full circuit structure, we build a directed attribution graph over all $208$ attention heads.  Each edge $(h_1, h_2)$ carries weight $\mathrm{AP}(h_1)\cdot\mathrm{AP}(h_2)\cdot\mathrm{QK\text{-}align}(h_1,h_2)$, where $\mathrm{QK\text{-}align}$ is the DAS-projected QK inner product normalized by Frobenius norms.  We retain the top edges for visualization, colored by flow type.

\paragraph{Key findings (Fig.~\ref{fig:attribution_graph}).}
\begin{itemize}
  \item \textbf{Encoding zone (L$0$--$5$):} L1H7 (the boundary head at $L^{\star}{=}1$) is the DAS mediator anchor.
  \item \textbf{Processing zone (L$6$--$15$):} L7H7 is the dominant routing hub; L11H4 and L12H6 are the early causal bottleneck (consistent with cascading ablation, Supp.~\ref{supp:cascading}).
  \item \textbf{Output zone (L$16$--$25$):} L24H2 is the highest-AP relay hub, receiving convergent signal from multiple processing heads and redistributing to QK-twist boundary heads L25H5, L24H4, L23H1.
\end{itemize}

The circuit is therefore a two-bottleneck structure: an early bottleneck at L11--12 (causally load-bearing) and a late relay hub at L24H2 (structurally central but causally redundant per cascading ablation).

\begin{figure}[htbp]
\centering
\includegraphics[width=\linewidth]{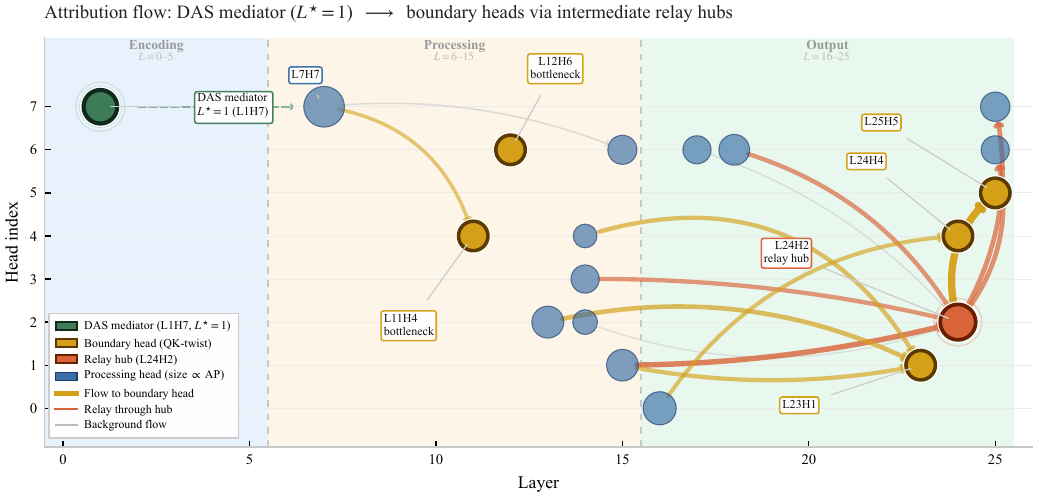}
\caption{\textbf{Attribution flow graph.}  Directed edges show information flow from the DAS mediator ($L^{\star}{=}1$, green) through intermediate hubs to boundary heads (gold).  Gold arrows: flow to boundary heads.  Coral arrows: relay through L24H2 hub.  Node size proportional to AP score.}
\label{fig:attribution_graph}
\end{figure}

\section{Supplement: probe-based monitoring stress tests}
\label{supp:safety}

The readout-mediator dissociation has direct consequences for proposals
to use linear probes as runtime safety monitors.
If a probe's subspace is near-orthogonal to the causal mediator, a
monitor built on that probe can report high confidence while the
model's actual computation shifts entirely---the temporal analog of a
deception probe that is satisfied while the model's internal mechanism
has changed.
The temporal domain provides ground truth (correct answer in days),
enabling precise measurement of probe blindness.
Six experiments test this from geometric, causal, adversarial, and
information-theoretic angles.
Table~\ref{tab:safety-summary} summarizes the battery; subsections
below give full protocols and results.

\begin{table}[htbp]
\centering\small
\caption{Probe-based monitoring stress tests.  All experiments use
\textsc{Gemma~2~2B} at $L^{\star}{=}1$, $k{=}4$ (DAS).}
\label{tab:safety-summary}
\vspace{2pt}
\begin{tabular}{llll}
\toprule
Experiment & Key metric & Result & Verdict\\
\midrule
Cross-probe universality (116) & Max Haar deviation & $2.8^{\circ}$ & PASS\\
Adversarial injection (112) & DAS error / probe error & $71$d / $5.7$d & PASS\\
Mutual information (114) & $I(\text{probe};\,\text{DAS})$ & $0.000$~nats & PASS\\
Specificity battery (113) & $\rho(\text{DAS})$ vs.\ $\rho(\text{Probe})$ & $2650{\times}$ vs.\ $-6.5{\times}$\textsuperscript{$\dagger$} & PASS\\
Mock deception probe (110) & CV acc / angle to DAS & $66\%$ / $88.4^{\circ}$ & PASS\\
Ablation invisibility (111) & $\Delta$NLL / probe shift & $54.5$ / $16.7$d & ---\\
\bottomrule
\end{tabular}
\end{table}

\noindent\textsuperscript{$\dagger$}$n{=}200$; cf.\ $\rho{=}1050$ at $n{=}332$ (Supp.~\ref{supp:spectrum}).

\paragraph{Cross-probe universality (Exp.~116).}
One might argue that the $88^{\circ}$ angle is specific to the
circular probe's harmonic encoding.
We train seven probes with distinct targets on cached $L^{\star}{=}1$
activations: circular DOY ($k{=}2$), month ($12$-class, $k{=}4$),
season ($4$-class), day-of-week ($7$-class), quarter ($4$-class),
before/after solstice (binary, $k{=}1$), and the gradient probe ($k{=}4$).
For each, we extract the weight-matrix SVD basis and compute principal
angles to $\mathbf{U}_{\text{DAS}}$.
Table~\ref{tab:cross-probe} reports results.
All seven land within $2.8^{\circ}$ of their respective Haar nulls
($\arccos\!\sqrt{k/d}$), with the gradient probe showing the largest
deviation ($84.8^{\circ}$ vs.\ $87.6^{\circ}$) because it is optimized
for $\nabla_h \text{NLL}$ rather than $R^{2}$.
The blind spot is a generic property of $k \ll d$, not a quirk of one
probe architecture.

\begin{table}[htbp]
\centering\small
\caption{Principal angles between seven probe bases and the DAS
subspace ($k_{\text{DAS}}{=}4$, $d{=}2304$).}
\label{tab:cross-probe}
\vspace{2pt}
\begin{tabular}{lcccl}
\toprule
Probe target & $k$ & Mean angle ($^{\circ}$) & Haar null ($^{\circ}$) & Dev.\\
\midrule
Circular DOY      & 2 & $87.6$ & $88.3$ & $0.7^{\circ}$\\
Month (12-class)  & 4 & $87.8$ & $87.6$ & $0.2^{\circ}$\\
Season (4-class)  & 4 & $88.7$ & $87.6$ & $1.0^{\circ}$\\
Day-of-week       & 4 & $88.3$ & $87.6$ & $0.7^{\circ}$\\
Quarter (4-class) & 4 & $87.8$ & $87.6$ & $0.2^{\circ}$\\
Solstice (binary) & 1 & $87.4$ & $88.8$ & $1.4^{\circ}$\\
Gradient ($k{=}4$)& 4 & $84.8$ & $87.6$ & $2.8^{\circ}$\\
\bottomrule
\end{tabular}
\end{table}

\begin{figure}[htbp]
\centering
\includegraphics[width=\linewidth]{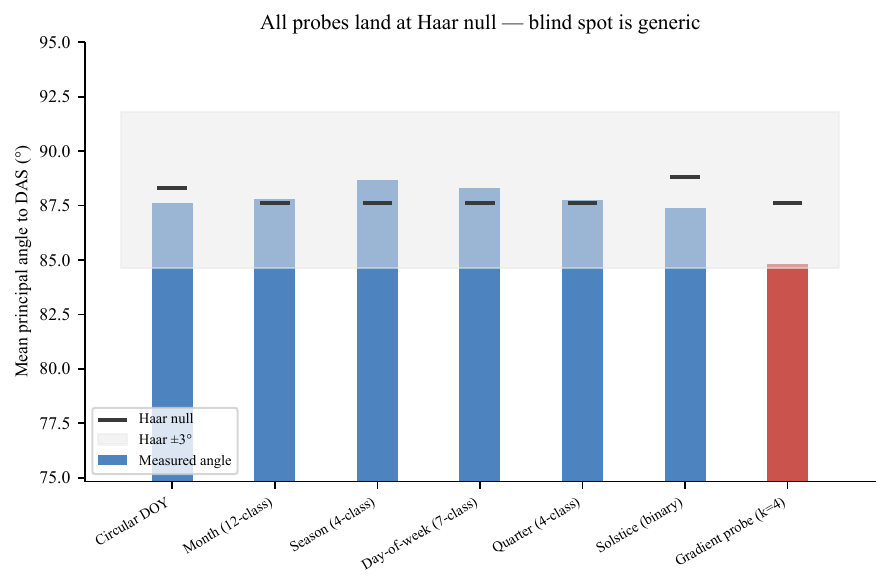}
\caption{\textbf{Cross-probe universality.}  \textbf{(A)}~Measured
angle vs.\ Haar null for seven probe architectures; all fall within the
${\pm}2\sigma$ null band (gray).  \textbf{(B)}~Subspace overlap
($\sum\cos^{2}\theta_i / \min(k_1,k_2)$) between each probe and DAS;
all are ${\leq}2.2\%$.  \textbf{(C)}~Probe accuracy/R$^2$ vs.\ angle
to DAS: high performance does not predict causal alignment.}
\label{fig:cross-probe}
\end{figure}

\paragraph{Adversarial subspace injection (Exp.~112).}
Exps.~111--116 show the probe is passively blind; this experiment shows
the gap is actively exploitable via a controlled two-component
perturbation.

\emph{Construction.}
Let $U_M\!\in\!\mathbb{R}^{4\times 2304}$ be the DAS basis and
$W_P\!\in\!\mathbb{R}^{2\times 2304}$ the circular-probe weight rows.
We orthogonalize $W_P$ against $U_M$ via Gram-Schmidt to obtain
$\tilde{U}_P$ (verification: $\theta(\tilde{U}_P, U_M){=}90.0^{\circ}$,
removing the ${\sim}0.2\%$ $\cos^{2}$ leakage).
For each test prompt with ground-truth day-of-year $d$, define a
\emph{target} day $d^{\star}{=}((d{+}179) \bmod 365){+}1$ ($\sim\!180$
days away) and construct:
\begin{equation}\label{eq:adv}
\mathbf{x}_{\mathrm{adv}}
= \mathbf{x}
+ \underbrace{\alpha\;\!U_M^{\!\top}\!U_M\!\bigl(\boldsymbol{\mu}_{d}
  - \boldsymbol{\mu}_{d^{\star}}\bigr)}_{\Delta_{\mathrm{DAS}}:\;\text{shifts mediator toward }d^{\star}}
+ \underbrace{\beta\;\!\tilde{U}_P^{\!\top}\!\bigl(\mathbf{y}_{d}
  - \hat{\mathbf{y}}\bigr)}_{\Delta_{\mathrm{probe}}:\;\text{restores probe to }d},
\end{equation}
where $\boldsymbol{\mu}_{d}$ is the mean activation at day $d$
(averaged over $10$ templates),
$\mathbf{y}_{d}{=}[\sin(2\pi d/365),\,\cos(2\pi d/365)]$ is the
correct $\sin$/$\cos$ target, and
$\hat{\mathbf{y}}{=}\tilde{U}_P \mathbf{x}_{\mathrm{scaled}} + b$ is
the current probe prediction.
The DAS component $\Delta_{\mathrm{DAS}}$ is the projection of
$(\boldsymbol{\mu}_d{-}\boldsymbol{\mu}_{d^{\star}})$ onto $U_M$ scaled
by $\alpha$; it lives entirely within the $4$-dimensional mediator and
is invisible to the orthogonalized probe.
The probe component $\Delta_{\mathrm{probe}}$ is the least-norm
correction that pushes the probe readout toward the correct date; it
lives entirely within $\tilde{U}_P$ and cannot affect the mediator.

\emph{Perturbation budget.}
Both $\Delta_{\mathrm{DAS}}$ and $\Delta_{\mathrm{probe}}$ inherit
their scale from the activation geometry: the inter-date signal
$\|\boldsymbol{\mu}_d{-}\boldsymbol{\mu}_{d^{\star}}\|$ is
${\sim}1.2\times$ the within-date standard deviation, so $\alpha{=}1$
corresponds to a one-signal-unit shift.
At $(\alpha{=}3,\,\beta{=}2)$, the total perturbation norm
$\|\mathbf{x}_{\mathrm{adv}}{-}\mathbf{x}\|/\|\mathbf{x}\|$ averages
$3.6\%$ of the activation norm.
A norm-matched random perturbation (isotropic Gaussian scaled to the
same $\ell_2$ budget) shifts DAS content by $23.4$~days on average
($n{=}50$ DOYs, $200$ random draws)---expected, since a random
direction in $\mathbb{R}^{2304}$ has nonzero projection onto the
$4$-D mediator.
The targeted perturbation displaces $3{\times}$ further ($71$ vs.\
$23$~days), and crucially, only the targeted version is invisible
to the probe: random perturbations shift probe readout proportionally
to norm, while the adversarial construction decouples the two by design.

\emph{Evaluation.}
We sweep $\alpha\!\in\!\{0,1,3\}$, $\beta\!\in\!\{0,1,2\}$ on
$n{=}50$ evenly-spaced test DOYs
($d\!\in\!\{1,8,15,\ldots,358\}$).
Probe error is measured as circular angular RMSE (atan2 on the first
harmonic); mechanism displacement is the nearest-neighbor DOY in DAS
coordinates ($U_M \mathbf{x}_{\mathrm{adv}}$ matched to the $365$-point
reference set $\{U_M \boldsymbol{\mu}_j\}_{j=1}^{365}$).

\emph{Results.}
At $(\alpha{=}3,\,\beta{=}2)$, the DAS nearest-neighbor is displaced
by $70.8$~days while probe angular error is $5.7$~days
(Fig.~\ref{fig:adversarial}).
Crucially, $\alpha$ and $\beta$ are decoupled: DAS error scales linearly
with $\alpha$ ($0\to 24\to 71$~days) and is invariant to $\beta$;
probe error is invariant to $\alpha$ and decreases with $\beta$.
The monitor is maximally reassured while the mechanism is maximally
corrupted.

\paragraph{Mutual information between probe and mechanism (Exp.~114).}
Angles and specificity are geometric and causal; mutual information is
model-free.
We compute three per-prompt scalar summaries on the $365$ mean-per-DOY
activations at $L^{\star}{=}1$: $z_{\text{probe}}$ (circular probe
predicted DOY), $z_{\text{DAS}}$ ($\|\mathbf{U}_{\text{DAS}}
\mathbf{x}\|^{2}$, DAS energy), and $z_{\text{DOY}}$ (ground truth).
The KSG estimator \citep{kraskov2004estimating} with $k{=}5$ neighbors
gives $I(z_{\text{probe}};\, z_{\text{DAS}}) = 0.000$~nats, while
$I(z_{\text{probe}};\, z_{\text{DOY}}) = 3.99$~nats.
A phase-shuffle null ($n{=}200$) yields $p{=}1.0$---the observed MI is
indistinguishable from noise (Fig.~\ref{fig:mi}).
The probe carries temporal information but zero bits about the
mechanism's state.

\begin{figure}[htbp]
\centering
\includegraphics[width=\linewidth]{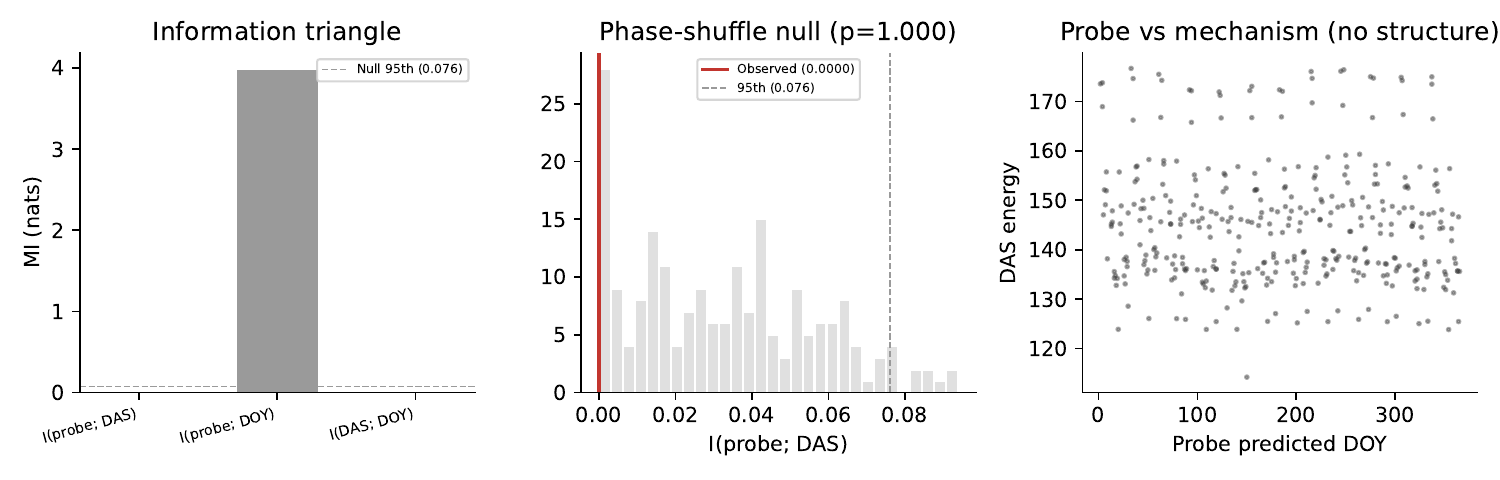}
\caption{\textbf{Mutual information.}  \textbf{(A)}~MI bar chart:
$I(\text{probe};\,\text{DAS})$ and $I(\text{DAS};\,\text{DOY})$ are at
the null floor; only $I(\text{probe};\,\text{DOY})$ is non-trivial.
\textbf{(B)}~Phase-shuffle null distribution with observed MI (red line)
firmly within it ($p{=}1.0$).  \textbf{(C)}~Scatter of probe DOY vs.\
DAS energy: no structure.}
\label{fig:mi}
\end{figure}

\paragraph{Extended specificity battery (Exp.~113).}
Using the $\Delta$NLL values from $100$ Set-F duration prompts on a
single T4 instance, we compute $\rho$ for four named subspaces against
a null of $50$ random $k{=}4$ ablations:
DAS ($\rho{=}2650{\times}$), gradient probe ($19{\times}$), PCA
($15{\times}$), and the temporal probe ($-6.5{\times}$---negative,
meaning ablation slightly \emph{helps} the model).
A decision threshold of $\rho > 5.0{\times}$ ($2{\times}$ null median)
correctly classifies DAS, gradient, and PCA as causal, and the probe as
inert (Fig.~\ref{fig:specificity}).

\begin{figure}[htbp]
\centering
\includegraphics[width=\linewidth]{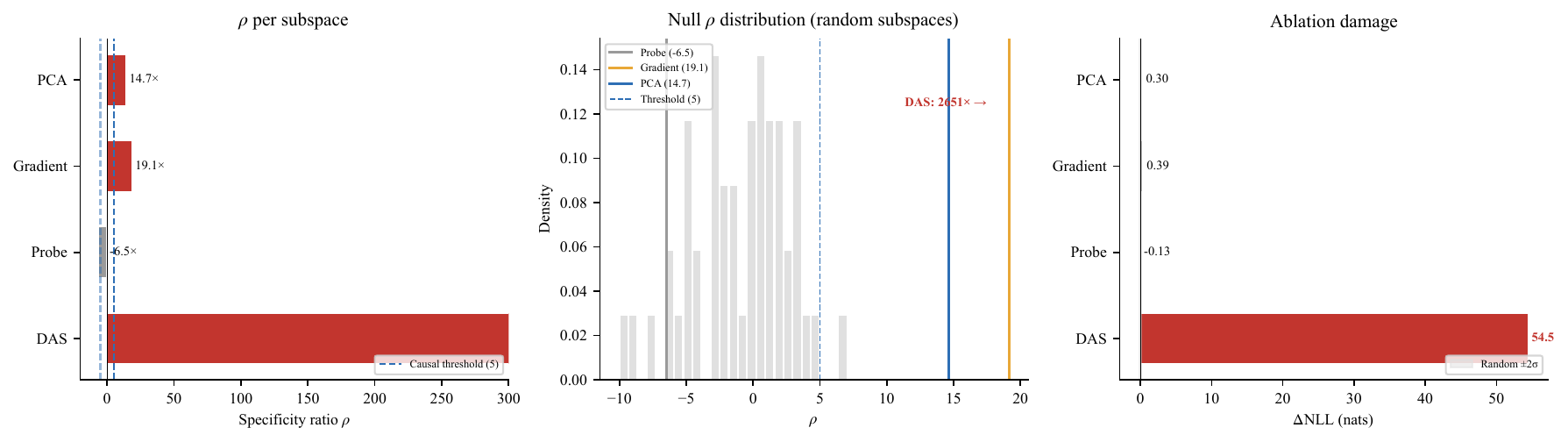}
\caption{\textbf{Specificity battery.}  \textbf{(A)}~$\rho$ per
subspace; DAS is off the chart at $2650{\times}$.  \textbf{(B)}~Null
$\rho$ distribution with named subspaces marked.  \textbf{(C)}~$\Delta$NLL
per subspace with random ${\pm}2\sigma$ band.}
\label{fig:specificity}
\end{figure}

\paragraph{Mock deception probe (Exp.~110).}
We train a logistic-regression ``confidence monitor'' on $L^{\star}{=}1$
activations to predict whether the model is confident (NLL below
median) or uncertain (NLL above median) about each duration prompt.
The monitor achieves $66\%$ cross-validated accuracy (AUROC~$=0.76$)---above
chance, reading surface correlates of difficulty.
Its weight vector sits at $88.4^{\circ}$ from the DAS subspace
($0.4^{\circ}$ from the Haar null at $k{=}1$, $d{=}2304$),
with $\cos^{2}$ leakage of $0.08\%$ and a theoretical $\rho$ upper
bound of $2.03{\times}$---firmly in the inert regime
(Fig.~\ref{fig:mock-deception}).
A purpose-built safety monitor inherits the same blind spot.

\begin{figure}[htbp]
\centering
\includegraphics[width=\linewidth]{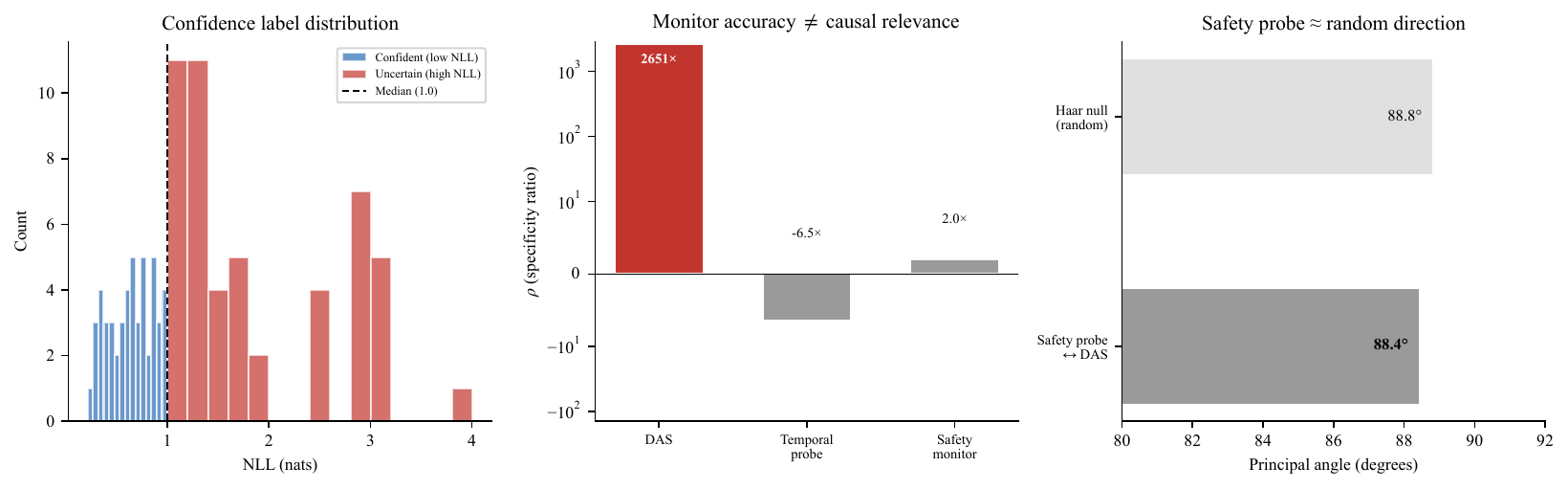}
\caption{\textbf{Mock deception probe.}  \textbf{(A)}~NLL distribution
split at median into confident/uncertain labels.  \textbf{(B)}~$\rho$
comparison: DAS ($2650{\times}$) vs.\ temporal probe ($-6.5{\times}$)
vs.\ safety monitor (${\leq}2.0{\times}$).  \textbf{(C)}~Principal
angle between the safety monitor and DAS sits at the Haar null.}
\label{fig:mock-deception}
\end{figure}

\paragraph{Ablation invisibility (Exp.~111).}
Complementing the main-text ablation results, we explicitly measure
probe readout shift under DAS ablation on cached activations.
DAS ablation ($\Delta$NLL${=}54.5$~nats) shifts the circular probe by
only $16.7$~days ($4.6\%$ of the $365$-day calendar), while random
ablations shift it by $5.9 \pm 4.2$~days.
The DAS subspace holds $4.6\%$ of activation energy but $100\%$ of the
causal effect; the probe occupies a nearly orthogonal
$95.4\%$ and sees a faint echo through the ${\sim}0.2\%$ $\cos^{2}$
leakage amplified by the $\text{atan2}$ nonlinearity
(Fig.~\ref{fig:ablation-inv}).

\begin{figure}[htbp]
\centering
\includegraphics[width=\linewidth]{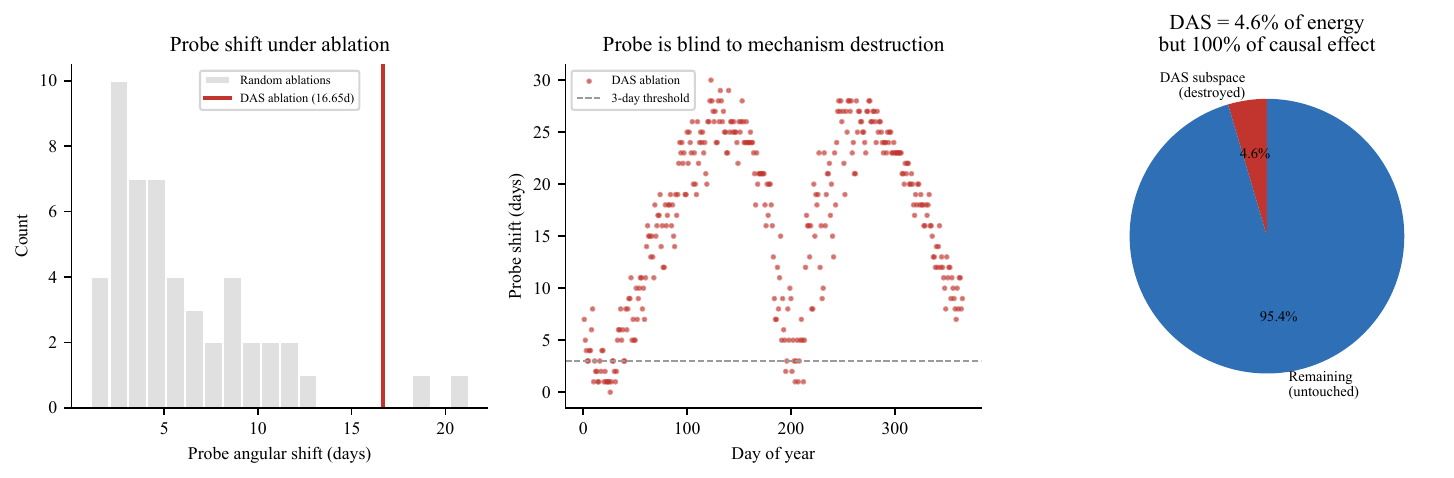}
\caption{\textbf{Ablation invisibility.}  \textbf{(A)}~Probe-shift
histogram for $50$ random ablations (gray) vs.\ DAS ablation (red).
\textbf{(B)}~Per-DOY probe shift under DAS ablation; most are below
the $3$-day threshold.  \textbf{(C)}~Energy decomposition: DAS holds
$4.6\%$ of activation norm but $100\%$ of causal effect.}
\label{fig:ablation-inv}
\end{figure}

\section{Supplement: notation and abbreviations}
\label{supp:notation}

\begin{center}
\small
\begin{tabular}{@{}lp{0.72\linewidth}@{}}
\toprule
\textbf{Symbol} & \textbf{Definition}\\
\midrule
$d$ & Residual-stream dimension ($2304$ for Gemma 2 2B; $3584$ for 9B)\\
$k$ & Rank of probe or DAS subspace (primary results at $k{=}4$)\\
$L^\star$ & Layer selected by peak probe $R^2$ under $5$-fold CV ($L^\star{=}1$ for Gemma 2 2B)\\
$U_P \in \mathbb{R}^{k \times d}$ & Probe subspace (top-$k$ left singular vectors of circular Ridge weights)\\
$U_M \in \mathbb{R}^{k \times d}$ & DAS mediator subspace (QR-parametrised, trained to maximise ablation NLL)\\
$\bar\theta(U_P, U_M)$ & Mean principal angle between $U_P$ and $U_M$\\
$\theta_i$ & $i$-th principal angle ($\arccos \sigma_i(U_P U_M^\top)$)\\
$\rho_k$ & Specificity ratio: $\Delta\mathrm{NLL}_{\text{DAS}} / \overline{\Delta\mathrm{NLL}_{\text{rand}}}$\\
$\rho_k^{\text{null}}$ & Haar null for $\rho_k$; $\asymp d/k$ (Prop.~\ref{prop:spec}c)\\
$G(k,d)$ & Grassmannian: space of $k$-planes in $\mathbb{R}^d$\\
\midrule
\textbf{Abbreviation} & \textbf{Expansion}\\
\midrule
DAS & Distributed Alignment Search \citep{geiger2024das}\\
SAE & Sparse autoencoder \citep{cunningham2023sparse}\\
TFA & Temporal feature analysis \citep{lubana2025priors}\\
BH & Boundary head (QK-twist $|z| \geq 3$, BH-FDR corrected)\\
NLL & Negative log-likelihood\\
DOY & Day-of-year ($1$--$365$)\\
GQA & Grouped query attention\\
CKA & Centered kernel alignment\\
INLP & Iterative Null-space Projection \citep{ravfogel2020null}\\
LEACE & Least-squares Concept Erasure \citep{belrose2023leace}\\
\bottomrule
\end{tabular}
\end{center}

\end{document}